\pgfplotsset{compat=1.17}
\title{Extracting Robust Register Automata from Neural Networks over Data Sequences}
\newtheorem{theorem}{Theorem}
\newtheorem{corollary}{Corollary}
\newtheorem{definition}{Definition} %"definition" is alredy defined, but I don't know where!
\newtheorem{lemma}[theorem]{Lemma}
\newtheorem{example}{Example}
\newtheorem{claim}{Claim}
\newcommand{\OMIT}[1]{}
\newcommand{\defn}[1]{\emph{#1}}
\newcommand{\pluseq}{\mathrel{+}=}
\newcommand{\setOfChains}[1]{\mathfrak{C}}
\newcommand{\Aut}{\mathcal{A}}
\newcommand{\Q}{\mathbb{Q}}
\newcommand{\controls}{Q}
\newcommand{\transrel}{\Delta}
\newcommand{\finals}{F}
\newcommand{\toolName}{\textsc{RAExc}}
\newcommand{\nn}{\ensuremath{\mathcal{N}}}
\newcommand{\nnLSTM}{\ensuremath{\mathcal{N}_{\text{LSTM}}}}
\newcommand{\nnTransformer}{\ensuremath{\mathcal{N}_{\text{T}}}}
\newcommand{\learnActive}{\ensuremath{\mathcal{L}_{\text{ACT}}}}
\newcommand{\learnSMT}{\ensuremath{\mathcal{L}_{\text{SMT}}}}
\newcommand{\learnLS}{\ensuremath{\mathcal{L}_{\text{LS}}}}
\newif\ifdraft\drafttrue
\newcommand\anthony[1]{{\color{blue}
\tiny [#1 - \textbf{Anthony}]}}
\newcommand\anthony[1]{}
\newcommand{\wt}{wt}
\newcommand{\ham}{\textup{ham}}
\newcommand{\edt}{\textup{edt}}
\newcommand{\dtw}{\textup{dtw}}
\newcommand{\curr}{\textit{curr}}
\newcommand{\acc}{\textit{acc}}
\newcommand{\mov}{\textsf{mov}}
\newcommand{\AutP}{\mathcal{P}}
\newcommand{\bft}{\mathbf{t}}
\newcommand{\cl}{\textup{cl}}
\newcommand{\bbR}{\mathbb{R}}
\newcommand{\bin}{\mathsf{bin}}
\newcommand{\cC}{\mathcal{C}}
\pgfplotsset{compat=1.17}
\begin{document}

%%
%% The "title" command has an optional parameter,
%% allowing the author to define a "short title" to be used in page headers.
%\title{Extracting Robust Register Automata from RNNs and Transformers over Data Sequences}

%%
%% The "author" command and its associated commands are used to define
%% the authors and their affiliations.
%% Of note is the shared affiliation of the first two authors, and the
%% "authornote" and "authornotemark" commands
%% used to denote shared contribution to the research.
% --- Author 1 ---
\author{Chih-Duo Hong}
\orcid{0000-0002-4064-8413}
\affiliation{%
	\institution{National Chengchi University}
	\city{Taipei}
	\country{Taiwan}
}
\email{chihduo.hong@gmail.com}
\authornote{Authors are listed in alphabetical order regardless of seniority or contributions.}

% --- Author 2 ---
\author{Hongjian Jiang}
\orcid{0000-0000-0000-0000} % Please update this ORCID
\affiliation{%
	\institution{University of Kaiserslautern-Landau}
	\city{Kaiserslautern}
	\country{Germany}
}
\email{lus70ger@rptu.de}

% --- Author 3 ---
\author{Anthony W. Lin}
\orcid{0000-0003-4715-5096}
\affiliation{%
	\institution{University of Kaiserslautern-Landau}
	\city{Kaiserslautern}
	\country{Germany}
}
\affiliation{%
	\institution{MPI-SWS}
	\city{Kaiserslautern}
	\country{Germany}
}
\email{anthony.w.to@gmail.com}

% --- Author 4 ---
\author{Oliver Markgraf}
\orcid{0000-0000-0000-0000} % Please update this ORCID
\affiliation{%
	\institution{University of Kaiserslautern-Landau}
	\city{Kaiserslautern}
	\country{Germany}
}
\email{markgraf@cs.uni-kl.de}

% --- Author 5 ---
\author{Julian Parsert}
\orcid{0000-0002-5113-0767} % Please update this ORCID
\affiliation{%
	\institution{University of Kaiserslautern-Landau}
	\city{Kaiserslautern}
	\country{Germany}
}
\email{julian.parsert@gmail.com}

% --- Author 6 ---
\author{Tony Tan}
\orcid{0009-0005-8341-2004} % Please update this ORCID
\affiliation{%
	\institution{University of Liverpool}
	\city{Liverpool}
	\country{United Kingdom}
}
\email{tonytan@liverpool.ac.uk}

%\author{Chih-Duo Hong}
%\authornote{Authors are listed in alphabetical order regardless of their seniority or contributions.}
%\authornote{National Chengchi University}
%\email{chihduo.hong@gmail.com}
%%\orcid{1234-5678-9012}
%\author{Hongjian Jiang}
%\authornotemark[1]
%\authornote{RPTU Kaiserslautern}
%\email{lus70ger@rptu.de}
%
%\author{Anthony W. Lin}
%\authornotemark[1]
%\authornotemark[3]
%\authornote{MPI-SWS}
%\email{anthony.w.to@gmail.com}
%
%\author{Oliver Markgraf}
%\authornotemark[1]
%\authornotemark[3]
%\email{markgraf@cs.uni-kl.de}
%
%\author{Julian Parsert}
%\authornotemark[1]
%\authornotemark[3]
%\email{julian.parsert@gmail.com}
%
%\author{Tony Tan}
%\authornotemark[1]
%\authornote{University of Liverpool}
%\email{ptony.tan@gmail.com}
 
%%
%% By default, the full list of authors will be used in the page
%% headers. Often, this list is too long, and will overlap
%% other information printed in the page headers. This command allows
%% the author to define a more concise list
%% of authors' names for this purpose.
\renewcommand{\shortauthors}{Hong, Jiang, Lin, Markgraf, Parsert, Tan}

\begin{abstract}
Automata extraction is a method for synthesising interpretable surrogates for
black-box neural models that can be analysed symbolically. Existing techniques
assume a finite input alphabet, and thus are not directly applicable to data
sequences drawn from continuous domains. We address this challenge with
\emph{deterministic register automata} (DRAs), which extend finite automata
with registers that store and compare numeric values. Our main contribution
is a framework for robust DRA extraction from black-box models: we develop a
polynomial-time robustness checker for DRAs with a fixed number of
registers, and combine it with passive and active automata learning
algorithms. This combination yields surrogate DRAs with statistical robustness and equivalence guarantees.
As a key application, we use the extracted automata to assess the robustness of neural networks: for a given sequence and distance metric, the DRA either certifies local robustness or produces a concrete counterexample. Experiments on recurrent neural networks and
transformer architectures show that our
framework reliably learns accurate automata and enables principled robustness
evaluation. Overall, our results demonstrate that robust DRA extraction
effectively bridges neural network interpretability and formal reasoning without
requiring white-box access to the underlying network.
  % Automata extraction provides interpretable surrogates for neural models,
  % but existing techniques assume a finite input alphabet. This assumption
  % breaks down for data sequences such as time series, where inputs are drawn
  % from continuous domains. In such settings, classical finite automata cannot
  % capture the model’s decision structure. We address this challenge using
  % \emph{deterministic register automata} (DRAs), which extend finite automata
  % with registers that store and compare data values. We present a
  % polynomial-time algorithm (for a fixed number of registers) to verify the
  % \emph{local robustness} of a DRA—ensuring that its classification of an
  % input sequence remains invariant under bounded perturbations.  Leveraging
  % this analysis, we introduce a synthesis framework that extracts
  % \emph{robust} DRAs from black-box neural models through exact learning with
  % 1) an SMT-based Synthesis, 2) a Local Search-based Synthesis, and 3) an
  % Active Learning-based Synthesis. The framework provides probabilistic
  % correctness guarantees and can also expose non-robust behaviors of the
  % original model.  Comprehensive experiments on recurrent and transformer
  % architectures trained on formal and data-sequence languages demonstrate that
  % our approach can prove the robustness of neural networks across multiple
  % distance metrics, while simultaneously producing compact and interpretable
  % automata that bridge neural model interpretation and formal synthesis.
\end{abstract}

%%
%% The code below is generated by the tool at http://dl.acm.org/ccs.cfm.
%% Please copy and paste the code instead of the example below.
%%
\begin{CCSXML}
	<ccs2012>
	<concept>
	<concept_id>10003752.10003766</concept_id>
	<concept_desc>Theory of computation~Formal languages and automata theory</concept_desc>
	<concept_significance>500</concept_significance>
	</concept>
	<concept>
	<concept_id>10003752.10003790.10002990</concept_id>
	<concept_desc>Theory of computation~Logic and verification</concept_desc>
	<concept_significance>500</concept_significance>
	</concept>
	<concept>
	<concept_id>10003752.10003809.10003636</concept_id>
	<concept_desc>Theory of computation~Approximation algorithms analysis</concept_desc>
	<concept_significance>500</concept_significance>
	</concept>
	</ccs2012>
\end{CCSXML}

\ccsdesc[500]{Theory of computation~Formal languages and automata theory}
\ccsdesc[500]{Theory of computation~Logic and verification}
\ccsdesc[500]{Theory of computation~Approximation algorithms analysis}
 
%%
%% Keywords. The author(s) should pick words that accurately describe
%% the work being presented. Separate the keywords with commas.
\keywords{Recurrent neural network; Transformer; Robustness; Register automata; Logic; Automata learning; Statistical model checking; Explainable AI}

%\received{20 February 2007}
%\received[revised]{12 March 2009}
%\received[accepted]{5 June 2009}c

%%
%% This command processes the author and affiliation and title
%% information and builds the first part of the formatted document.
\maketitle

\section{Introduction}\label{sec:intro}

Recent years have seen the increasing dominance of sequence modelling
networks---especially Recurrent Neural Networks (RNNs)
\cite{WGY18,OWSH20,merrill2022extracting,WGY24,zhang21} and
transformers~\cite{vaswani2017attention,SIEGELMANN1995132}---across natural
language processing~\cite{BERT}, computer vision~\cite{vision-transformers},
speech recognition~\cite{speech-transformers}, and time-series
analysis~\cite{time-series-survey,Zhou21}. Although these models could achieve
high accuracy in prediction, their interpretability was limited. A central
interpretability question is whether the model's prediction on an input $I$ is
\emph{robust}, i.e., unchanged under small perturbations of $I$ (such that no
nearby adversarial example $I'$ flips the label). This notion of robustness,
which is closely
related to \emph{counterfactuals}~\cite{DGK21} and
\emph{minimum-change} explanations~\cite{barcelo2020model}, is generally
undecidable for sequential models like RNNs and transformers, owing to their
Turing-completeness~\cite{SIEGELMANN1995132,perez2021attention}.

Over the past decade, a large body of work has aimed to extract finite state
automata from trained networks for
interpretability~\cite{WGY18,OWSH20,merrill2022extracting,WGY24,zhang21,AM25,ZWS24}. The
appeal lies in the fact that regular languages possess a well-understood
structure and rich algorithmic support. When an automaton $\Aut$ approximates a
neural network $\nn$ well, $\Aut$ can serve as an interpretable surrogate of
$\nn$ for downstream analyses.  These results mostly exploit extraction methods
assuming a \emph{finite} input alphabet
\cite{ANGLUIN198787,RIVEST1993299,10.1145/130385.130390}.  This
assumption fails in settings like time-series analysis, where the inputs are
data sequences over real numbers as in \prettyref{eg:uptrend}.

% In this regime, it is unclear what form of automata remain interpretable and
% algorithmically tractable.

A mature line of work on automata over data sequences suggests \emph{Nominal
  Automata}~\cite{learning-nominal-automata,BKL14} and \emph{Register
  Automata}~\cite{atom-book,DBLP:conf/amw/BenediktLP10} as suitable models:
these extend finite automata with finitely many registers, which can store input values and
compare them to each other or to constants using operations in a fixed
\emph{oligomorphic structure} \cite{bojanczyk2019slightly,bojanczyk2017orbit}
(informally, one with enough symmetry to admit effective algorithms).
As with regular data
languages~\cite{DBLP:conf/lics/BojanczykMSSD06,DBLP:conf/popl/Tzevelekos11,DBLP:conf/concur/Bollig11},
such automata enjoy good closure and decidability properties
\cite{atom-book}. They have also found rich applications in verification \cite{DBLP:conf/lics/AbdullaAA16,DBLP:conf/fsttcs/AbdullaAKR14,DBLP:conf/concur/Bollig11,DBLP:conf/fossacs/BolligCGK12} and synthesis~\cite{DBLP:conf/concur/KhalimovK19,DBLP:journals/lmcs/ExibardFR21}.

In this paper, we leverage the capability of \emph{Deterministic Register
Automata} (DRAs) to analyse the behaviour of neural networks. This approach
overcomes the limitations of previous DFA extraction methods that are restricted
to finite alphabets. Moreover, the desirable decidability properties of register
automata enable the use of automata-based techniques to verify the robustness of
neural models operating on data sequences.  In the following example, we ground
a robustness question and illustrate why numeric comparisons call for DRAs
rather than DFAs over finite alphabets.
\begin{example}\label{eg:uptrend}
  Consider daily closing prices for a market index, represented by a sequence
  $x_1,\ldots,x_n$. A \emph{high} is a position $i$ such that
  $x_{i-1} < x_i > x_{i+1}$, and a \emph{low} is a position $i$ such that
  $x_{i-1} > x_i < x_{i+1}$. Let $x_{i_1},\ldots,x_{i_r}$ be the highs
  ($i_1<\cdots<i_r$) and $x_{j_1},\ldots,x_{j_s}$ be the lows
  ($j_1<\cdots<j_s$). The indicator ``higher highs and higher lows'' is positive iff
  $x_{i_1}<\cdots<x_{i_r}$ and $x_{j_1}<\cdots<x_{j_s}$.  \prettyref{fig:sp500} shows two
  historical S\&P,500 price segments (Feb 1–26 and Jun 1–26, 2024). Both are
  classified as uptrends by this rule. However, under a small perturbation to
  the final price, the left sequence is \textbf{robust}: the signal flips only
  if the price drops below the previous low. The right sequence is
  \textbf{fragile}: even a slight drop produces a lower high and flips the
  signal. In practice, non-robust positive signals warrant
  caution~\cite{murphy1999technical}.
\begin{figure}[h!]
	\begin{minipage}{0.4\columnwidth}
		\centering
		\includegraphics[width=\linewidth]{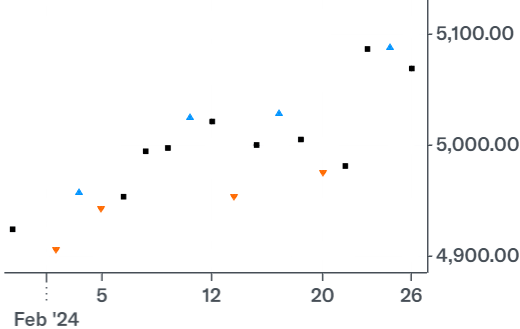}
	\end{minipage}
	\hspace{5em}
	\begin{minipage}{0.4\columnwidth}
		\centering
		\includegraphics[width=\linewidth]{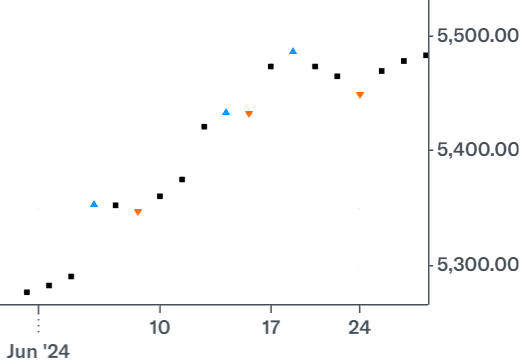}
	\end{minipage}
	\caption{Two S\&P\,500 price sequences. Red and blue triangles mark lows and highs, respectively. The left uptrend is robust to small last-value decreases, while the right uptrend is fragile.}
	\label{fig:sp500}
\end{figure}
\end{example}
An indicator for signals such as ``higher highs and higher lows'' need not be specified \emph{a priori} from labelled data. In many deployments, a trading model (e.g., an RNN or transformer) is trained to emit a signal directly from historical prices. Our framework treats such a model as a black box and synthesises a DRA that matches the model's behaviour on typical input sequences, thereby turning the model's latent decision rules into an explicit and interpretable automaton.

Beyond mere agreement on discrete sequences, we seek DRAs with \emph{local stability}: their classification should persist within a neighbourhood of the sequence under scrutiny. By targeting such DRAs, the hypothesis captures the intended pattern with a margin, avoiding brittle explanations and near-boundary counterexamples. The key algorithmic challenges we address toward this goal are (1) checking the local robustness of DRAs and (2) learning DRAs from black-box models. We furthermore \emph{combine} (1) and (2) to synthesise robust DRAs for neural networks with statistical guarantees.

Specifically, we make the following contributions in this paper:

\smallskip
\noindent\textbf{Polynomial-time robustness checking for DRAs over $(\Q;<,=)$.}
We show that checking the local robustness of a DRA is decidable in polynomial time for any fixed number of registers. The proof reduces robustness verification to a single-source shortest-path problem via an intermediate formalism, the \emph{Register Automata with Accumulator (RAA)}, which encodes quantitative perturbations as accumulated costs and supports multiple distance metrics.

\smallskip
\noindent\textbf{Statistically correct active and passive learning of DRAs.}
We present three complementary learners for synthesising DRAs from black-box models. The active learner follows a query-driven loop, while the two passive learners operate on labelled sequence data—one via constraint solving and the other via stochastic local search. A statistical equivalence checker either provides counterexamples to refine the hypothesis or accepts it with statistical correctness guarantees. %~\cite{kearns1994introduction}.

\smallskip
\noindent\textbf{Practical robustness checking for black-box neural networks.}
As illustrated in \prettyref{fig:workflow}, we couple DRA learning with symbolic robustness verification. Since exact checking cannot be guaranteed, we adopt a \emph{probabilistic notion of soundness}~\cite{BLN22,DBLP:journals/sttt/KhmelnitskyNRXBBFHLY23}: the checker either certifies robustness with a specified level of confidence or exposes genuine non-robustness of the network. In this way, our framework unifies model extraction and robustness certification into a statistically grounded verification loop.

\smallskip
\noindent\textbf{Implementation and experimental evaluation.}
To complement our theoretical contributions, we provide a prototype implementation and an extensive experimental evaluation of the proposed algorithms. The results show that our prototype consistently extracts automata that accurately capture the behaviour of the underlying neural networks, while enabling effective robustness certification across a diverse range of data-sequence languages.

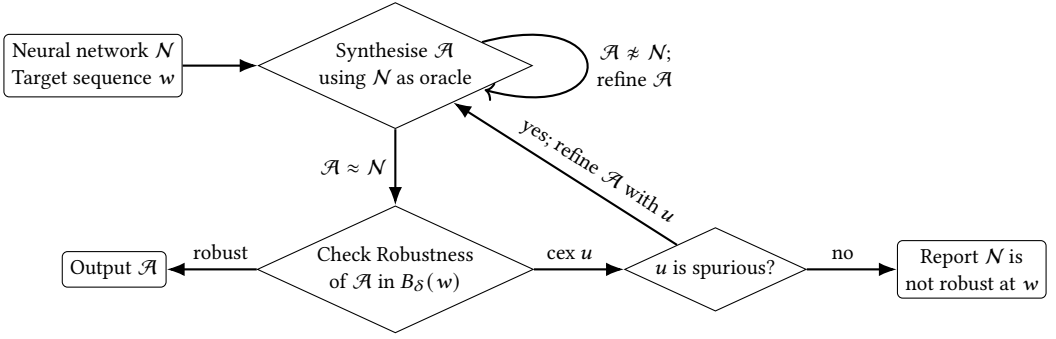
\begin{figure}[t]
	\centering
	\begin{tikzpicture}[%
		%node distance=20mm,
		every node/.style={font=\footnotesize},
		decision/.style={diamond, aspect=2.2, draw, inner sep=1.2pt},
		action/.style  ={rectangle, draw, rounded corners=2pt, inner sep=3pt, align=center}, % Added align=center here
		yes/.style={above,sloped},
		no/.style ={below,sloped},
		arrow/.style={-{Latex[length=2.4mm]}, thick}
		]
		
		% nodes -----------------------------------------------------------------------
		\node[decision, align=center] (eq)    {Synthesise $\Aut$ \\ using $\nn$ as
			oracle};
		\node[decision, below=10mm of eq, align=center] (rob) {Check Robustness \\ of $\Aut$ in $B_\delta(w)$};
		\node[decision, right=12mm of rob] (spu) {$u$ is spurious?};
		
		\node[action,    left=12mm of rob]  (outgood) {Output $\Aut$};
		\node[action, right=12mm of spu]  (outbad2)  {Report $\nn$ is \\ not robust at $w$};
		\node[action, left=10mm of eq, align=center] (dra) {Neural network $\nn$\\
			Target sequence $w$};
		% arrows ----------------------------------------------------------------------
		\draw[arrow] (dra) edge node {} (eq);
		\draw[arrow] (eq) edge[left] node {$\Aut \approx \nn$} (rob);
		\draw[arrow] (eq) edge[loop right] node[right,
		align=center]{$\Aut \napprox \nn$; \\ refine $\Aut$} (eq);
		
		\draw[arrow] (rob) edge[above] node[pos=0.4] {robust} (outgood);
		\draw[arrow] (rob) edge[above] node[pos=0.4] {cex $u$} (spu);
		
		\draw[arrow] (spu) edge[yes] node[pos=0.4] {yes; refine $\Aut$ with $u$} (eq);
		\draw[arrow] (spu) edge[above] node[pos=0.4] {no} (outbad2);	
		% labels ----------------------------------------------------------------------
		
	\end{tikzpicture}
	\caption{Workflow of our robust DRA extraction pipeline: $\delta > 0$ is the perturbation radius for the robustness check; $w \in \mathbb Q^*$ is the target sequence at which robustness of the DRA $\Aut$ is verified; $B_\delta(w)$ denotes the $\delta$-neighbourhood of $w$.
		\label{fig:workflow}
	}
\end{figure}

\paragraph{Organisation.}
\prettyref{sec:hhhl} illustrates how our robustness verification and automata
learning procedures operate on a simple practically motivated example.
\prettyref{sec:prelim} introduces register automata, their variants, and the local robustness property.
\prettyref{sec:robust} describes an efficient robustness checking procedure for DRAs.
\prettyref{sec:learning} presents our learning algorithms and their integration with the robustness checker.
\prettyref{sec:exp} reports our experimental results.
We discuss related work in \prettyref{sec:related-work} and conclude in \prettyref{sec:conclusion}.

%!TEX root = main.tex

\section{Motivating Example}\label{sec:hhhl}
In this section, we elaborate on the higher-highs-higher-lows concept in \prettyref{eg:uptrend} and offer an intuitive account of how our robustness verification and automata learning algorithms operate on it. A DRA embodying this concept is depicted in \prettyref{fig:motivating-example}.
To simplify our presentation, we focus on the case where the price decreases first (since the case of increasing first is symmetric), and normalise the price data by setting the initial price to zero. %Also, we allow the price to stay at a peak and trough during an uptrend. %(e.g., when $x_1 < x_2 = x_3 > x_4$ holds, $x_2$ and $x_3$ together are regarded as a single peak).

%  \vspace{-0.4cm}
\begin{figure*}
	\centering
	\scalebox{0.9}{
		\begin{tikzpicture}[shorten >=1pt,node distance=4.1cm,on grid, auto,scale=0.6] 
			\node[state,initial] (q_0)   {$q_0$}; 
			\node[state, accepting] (q_1) [right=of q_0] {$q_1$}; 
			\node[state, accepting] (q_2) [right=of q_1] {$q_2$}; 
			\node[state, accepting] (q_3) [right=of q_2] {$q_3$}; 
			\path[->] 
			(q_0) edge [loop, above, align=center] node {$r_1 \geq curr$\\
				$r_1 := curr$} (q_0)
			(q_1) edge  [loop, above, align=center] node {$r_2 \leq curr$\\
				$r_2 := curr$} (q_1)
			(q_2) edge  [loop, above, align=center] node {$r_3 \geq curr$, $r_1 < curr$\\
				$r_3 := curr$} (q_2)
			(q_3) edge  [loop, above, align=center] node {$r_3 \leq curr$\\
				$r_3 := curr$} (q_3)
			(q_0) edge  node {$r_1 < curr$} 
			node [below] {$r_2 := curr$} (q_1)
			(q_1) edge [above, align=center] node {$r_2 > curr$,\\$r_1 < curr$} 
			node [below] {$r_3 := curr$} (q_2)
			(q_2) edge  node {$r_3 < curr$} 
			node [below] {$r_1 := r_3, r_3 := curr$} (q_3)
			(q_3) edge [bend left, align=center] node {$r_3 > curr$, $r_2 < r_3$\\
				$r_2 := r_3$, $r_3 := curr$} (q_2);
		\end{tikzpicture}
	}
	\caption{A deterministic register automaton $\Aut$ that identifies the higher-highs-higher-lows price pattern.}
	\label{fig:motivating-example}
\end{figure*}
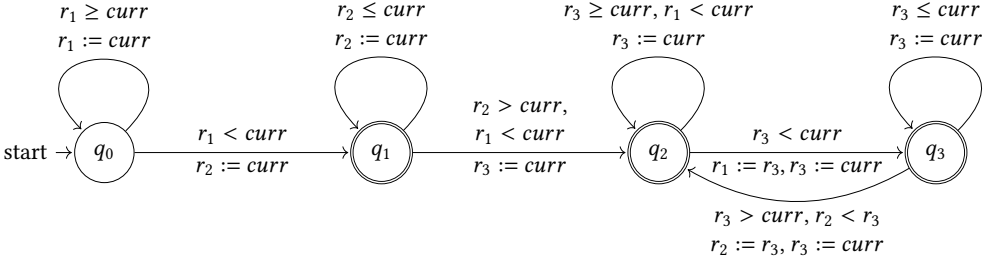

%This automaton recognizes stock prices characterized by ``higher highs and
%higher lows'', see Example~\ref{eg:uptrend}. 
%jOur learning algorithm can infer exactly this automaton
%jGiven enough samples, our learning
%jalgorithm can deduce precisely this automaton.

%This pattern is reminiscent of an ``uptrend'' observed in a stock price, where values continually rise over time. During an uptrend, each subsequent peak in the price is higher than the previous peak, and each subsequent trough in the price is higher than the previous trough. This pattern is therefore also known as ``higher peaks and higher troughs'' (see \cite{alma990009565880507476}).

%The automaton $\Aut$ has three registers $r_1,r_2,r_3$, which are initialized to zero. In principle, $\Aut$ uses $r_1$ to store the last trough, $r_2$ to store the last peak, and $r_3$ to store the previous value. The price keeps descending at state $q_0$ until a trough occurs, at which point $\Aut$ moves to state $q_1$. The price keeps ascending at $q_1$ until a peak occurs, at which point $\Aut$ moves to state $q_2$. Now, as long as the price stays above the last trough (stored in $r_1$), $\Aut$ stays at $q_2$ until it detects a higher trough and moves to state $q_3$. At state $q_3$, the price keeps ascending until it reaches a higher peak, at which point the automaton moves back to state $q_2$.

The DRA~$\Aut$ in \prettyref{fig:motivating-example} maintains three registers $r_1$, $r_2$, and $r_3$ (initialised to zero) representing the last trough, the last peak, and the previous value, respectively. It descends in $q_0$ until it reaches a trough, ascends in $q_1$ to a peak, and then alternates as follows: it remains in $q_2$ while the price stays above $r_1$; upon detecting a higher trough, it moves to $q_3$; and upon a higher peak, it returns to $q_2$.
As an illustration, consider the price sequence $v := 0, -1, 5, 3, 7, 9, 6, 8$, which exhibits peaks at $5$ and $9$ and troughs at $-1$, $3$, and $6$.
Since both the peaks and the troughs are strictly increasing, this sequence represents an uptrend and is therefore accepted by~$\Aut$.

%\tony{I used $a,b,c,d$ as values and $D$ as the distance function. I change all distance $d$ below to $D$.}

\paragraph{Robustness.}
Suppose we want to know whether the sequence $v$ is no longer an uptrend after the last price deviates from the current value by an amount of $\delta$.
Answering this question amounts to checking the existence of a sequence $w \in \Q^* \setminus L(\Aut)$ such that $D(v,w) \le \delta$,
where $D((v_1,\ldots,v_n),(w_1,\ldots,w_n)) := |v_n - w_n|$ is called the
\emph{last-letter} distance measure. %, as discussed in Example~\ref{eg:uptrend}.
To this end, we first construct an automaton $\Aut_{D,v}$ equipped with an \emph{accumulator},
which is a special register that computes a weighted sum $\Aut_{D,v}(w) := \sum_{i=1}^n a_i w_i + b_i$ of the input sequence $w = w_1,\ldots,w_n$.
Here, to ensure $\Aut_{D,v}(w) = D(v,w)$, the automaton simply needs to read a sequence $w$ of length $n$, add $v_n - w_n$ to the accumulator when $v_n > w_n$, or $w_n - v_n$ otherwise.

Next, we compute the product automaton $\Aut'$ of $\Aut_{D,v}$ and $\Aut^c$, the complement of $\Aut$.
By construction, it holds that $\Aut'(w) = \Aut_{D,v}(w) = D(v,w)$ and $L(\Aut') = L(\Aut^c) \cap \Q^n$. It remains to check whether there exists a sequence $w \in L(\Aut')$ that satisfies $\Aut'(w) < \delta$.
We propose a procedure that, given $\Aut$, $v$, and $\delta$, decides the existence of such a sequence $w$, thereby answering the \emph{robustness} question we have asked about $v$. In this example, $v$ admits an adversarial instance against robustness if and only if $\delta > 8 - 3 = 5$, since $v$ is no longer an uptrend once the last value~8 drops to~3 or below.

\paragraph{Learning.}
Suppose we have an accurate trading model~$\nn$ that classifies the two sequences in \prettyref{fig:sp500} as uptrends.
Recall that the right-hand sequence is unstable under the last-letter distance: even an arbitrarily small perturbation ($\varepsilon\!>\!0$) of the final price yields a lower high and flips the signal.
For this sequence, our pipeline identifies a nearby sample as a witness of instability and refrains from certifying a robust DRA for~$\nn$.
In contrast, the left-hand sequence is stable under the same metric—its signal remains unchanged unless the last price falls below the previous low.
Consequently, we can learn a robust DRA for~$\nn$ over a neighbourhood of the left-hand sequence.
By explicitly targeting robustness, the learned automaton captures the intended pattern with a margin: agreed sequences are locally stable, spurious counterexamples are filtered, and genuine non-robust witnesses are reported.
This design turns the extracted DRA into an interpretable surrogate that provides both \emph{agreement} and \emph{stability} guarantees.
% !TEX root = ../main.tex

\section{Preliminaries}\label{sec:prelim}

% \noindent\textbf{Basic notation.}
% An \defn{alphabet}, denoted by $\ialphabet$, is a non-empty set of
% letters. Given an alphabet $\ialphabet$, we write $\ialphabet^*$ to denote the
% set of all finite sequences over $\ialphabet$. We denote the length of a
% sequence $w \in \ialphabet^*$ by $|w|$, and denote the empty sequence by
% $\varepsilon$.  Lastly, we set $\ialphabet^+:=\ialphabet^*\setminus \{\varepsilon\}$.

In this section, we introduce \emph{Register Automata (RA)} over sequences of ordered rational numbers and their extension, the \emph{Register Automata with Accumulator (RAA)}, which models the perturbation scheme.
A register automata can also be viewed as a nominal automaton over the atom symmetries of ordered rational numbers~\cite{atom-book}.
As usual, let $\Q$ denote the set of rational numbers and $\Q^*$ the set of all non-empty sequences of rational numbers.
A \defn{language} is a subset of $\Q^*$.
The complement of a language~$L$ is denoted by $\bar{L} := \Q^{*} \setminus L$. 
\paragraph{Register Automata.}
A \emph{$k$-Register Automaton} ($k$-RA) is a tuple
$\Aut = (\controls,\transrel,q_0,\finals)$ consisting of $k$ registers,
a finite set $\controls$ of states, the initial state $q_0 \in \controls$,
the set $\finals \subseteq \controls$ of accepting states, 
and a finite transition relation $\transrel$.
For each transition  $(p,\varphi,\psi,q) \in \transrel$, we have $p,q\in\controls$;
$\varphi = \varphi(\bar r,\curr)$ is a \emph{guard} 
given by a conjunction of (in)equalities over the registers $r_1,\ldots,r_k$, the currently read data~$\curr$, and constants from~$\Q$;
$\psi = \psi(\bar r,curr)$ is an \emph{assignment} specifying register updates of the form,
written in the form of $r_i := c$ (for a constant $c \in \Q$), $r_i := r_j$, or $r_i := \curr$.
Initially, all registers are set to~$0$.

For a transition $(p, \varphi, \psi, q)$, if the automata $\Aut$ is in state~$p$ and reads a letter~$c$ that satisfies the guard~$\varphi$,
it can move to state~$q$ and update the registers according to~$\psi$.
For example, the transition $(q_0, r_1 < \curr, r_2 := \curr, q_1)$ means that if $\Aut$ is in state~$q_0$ and reads a letter~$c$ greater than the value stored in register~$r_1$,
then it can move to state~$q_1$ and update register~$r_2$ with the current letter~$curr$.

A data sequence $w \in \Q^*$ is \emph{accepted} by~$\Aut$ if the automata, given $w$ as input, terminates in one of its accepting states.
We denote by $L(\Aut)$ the language consisting of all sequences accepted by~$\Aut$.
The automaton~$\Aut$ is \defn{deterministic} if, for every state $s \in \controls$, register assignment $\mu : \bar r \to \Q$, and input letter $a \in \Q$,
there exists at most one outgoing transition from~$s$ whose guard is satisfied.
We refer to a deterministic register automaton with $k$ registers as a \defn{$k$-DRA}.
The class of $k$-DRA languages is closed under Boolean operations;
we write $\Aut^c$ to denote a DRA that recognises the complement language $\bar L(\Aut) = \Q^* \setminus L(\Aut)$. 

\paragraph{Register Automata with Accumulator.}
To model functions that compute the perturbation cost between two sequences, we extend register automata to \emph{two-head Register Automata with Accumulator (RAA)}.
Intuitively, an RAA operates with two reading heads that simultaneously process two sequences, $v$ and~$w$, whose perturbation cost is to be computed.
It is equipped with a distinguished register, the \emph{accumulator}~$\acc$, which stores the accumulated cost.
As before, all registers~$\bar r$ and the accumulator~$\acc$ are initialised to~$0$.
Guards and assignments in an RAA are expressions involving the registers~$\bar r$ and the current input symbols $\curr_1$ and~$\curr_2$ read by the first and second heads, respectively.
The key distinction between an RAA and a standard RA is that, in addition to the usual register updates, an RAA may increment the accumulator~$\acc$ by a linear expression of the form $a_1 * \curr_1 + a_2*\curr_2 + b$, where $a_1, a_2, b \in \Q$.
This increment captures the cost of perturbing $\curr_1$ into $\curr_2$ according to a predetermined cost function.

We refer to an RAA with $k$ control registers as a \emph{$k$-RAA}.
The notation~$\Aut_D$ denotes an RAA that computes a perturbation cost function $D:\Q^*\times\Q^*\to \Q$. Many distance metrics on sequences studied in the literature can be expressed as RAAs, including the Hamming distance, the Manhattan distance, and the edit distance; see supplementary material for more details.

Given an RAA~$\Aut_D$, a perturbation cost function $D$ and a pair of input sequences~$(v, w)$, the automaton~$\Aut_D$ terminates with a value stored in its accumulator.
The \emph{output} of~$\Aut_D$ on~$(v, w)$, denoted by~$\Aut_D(v, w)$, is defined as the minimum of all possible accumulator values produced over runs of~$\Aut_D$ on~$(v, w)$.
The \emph{weight} of~$\Aut_D$ is given by
$\wt(\Aut_D) := \inf\{ \Aut_D(v,w) : v,w \in \Q^*\}$.  For a fixed sequence~$v \in \Q^*$, the \emph{weight of~$\Aut_D$ at~$v$} is
$\Aut_D$ at $v$ is $\wt(\Aut_D,v) := \inf\{ \Aut_D(v,w) : w \in \Q^*\}$.

\paragraph{Robustness for DRA languages.}
Let $\delta\in\Q$ and $\Aut_D$ be an RAA that computes the cost function $D$.  The
$\delta$-neighbourhood of a sequence $v\in \Q^*$ (w.r.t. $D$) is the set
$B_{\delta}(v):=\{w\in \Q^* : D(v,w)< \delta\}$. Now, we can define robustness as follows:
\begin{definition}[Robustness]\label{def:robust}
  Given a DRA $\Aut$, a sequence $v\in L(\Aut)$, an RAA $\Aut_D$, and a constant
  $\delta \in \Q$, the language \emph{$L(\Aut)$ is robust at $v$ up to $\delta$}
  (w.r.t. $D$), if $B_{\delta}(v)\cap L(\Aut^c) = \emptyset$.
  We will also say that $L(\Aut)$ is $\delta$-robust at $v$.
  The \emph{robustness} problem is defined as follows: Given the parameters $\Aut$, $v$, $\Aut_D$,
  and $\delta$ as input, decide if $L(\Aut)$ is $\delta$-robust at $v$ (w.r.t.~$\Aut_D$).
\end{definition}

% !TEX root = main.tex
%
%
\section{Verifying Robustness of DRAs}\label{sec:robust}

In this section, we develop the decision procedure used to verify robustness of DRAs. Given a $k$-DRA $\Aut$ and a perturbation cost specified by an RAA, we prove that deciding whether $L(\Aut)$ is robust at a sequence $v$ is solvable in polynomial time when $k$ is fixed. %(Theorem~\ref{theo:robustness}). 

%Our proof proceeds in two reductions: (i) we reduce robustness to a \emph{coverability} problem for a bounded projected RAA obtained via a product construction, and (ii) we further reduce this coverability instance to a single-source shortest-path computation on a finite graph. %We introduce the coverability notion and then present the constructions used in the reductions and their complexity implications.

%In this section, we show that the robustness of a $k$-DRA
%can be checked in polynomial time when $k$ is fixed.
%We state it formally as Theorem~\ref{theo:robustness}.

\begin{theorem}
\label{theo:robustness}
For any fixed number of registers $k$, checking whether a $k$-DRA is robust at a given sequence is decidable in polynomial time. 
Moreover, whenever robustness fails, a concrete witness to non-robustness can also be constructed in polynomial time.
\end{theorem}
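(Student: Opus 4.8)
The plan is to reduce the robustness question to computing a single quantity $W := \inf\{ D(v,w) : w \in L(\Aut^c)\}$ and comparing it with $\delta$. By \prettyref{def:robust}, $L(\Aut)$ is $\delta$-robust at $v$ iff $B_\delta(v) \cap L(\Aut^c) = \emptyset$, i.e. iff no rejected word lies within distance strictly below $\delta$ of $v$; since $B_\delta(v)=\{w:D(v,w)<\delta\}$, this holds exactly when $W \ge \delta$ (if $W\ge\delta$ then every $w\in L(\Aut^c)$ has $D(v,w)\ge W\ge\delta$; if $W<\delta$ the infimum already exhibits some $w$ with $D(v,w)<\delta$). It therefore suffices to compute $W$ (as a rational, or $\pm\infty$) in polynomial time and, when $W<\delta$, to extract a concrete $w \in L(\Aut^c)$ with $D(v,w) < \delta$.

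First I would specialise the cost RAA $\Aut_D$ to the fixed first argument $v$, obtaining a one-head accumulator automaton $\Aut_{D,v}$ that reads only $w$ and whose accumulator computes $D(v,w)$; the positions of the now-fixed first head become part of its finite control, so $|\Aut_{D,v}|$ is polynomial in $|v|$. Taking the synchronous product $\Aut'$ with the complement DRA $\Aut^c$ yields an accumulator automaton with $\Aut'(w)=D(v,w)$ and $L(\Aut')=L(\Aut^c)$ restricted to the admissible shapes, so that $W=\wt(\Aut',v)$, the minimum accumulator value over accepting runs. The remaining task is to compute this infimum over the infinite alphabet $\Q$ and the infinite configuration space of register valuations.

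The key step is to finitise via order types and reduce to shortest path. Let $S\subseteq\Q$ be the finite pivot set consisting of all constants occurring in the guards and assignments of $\Aut$ and $\Aut_D$, together with all values of $v$. Along any fixed control path—equivalently, any fixed choice of order type for each read symbol relative to the current registers and to $S$—the accumulated cost is a sum of per-step linear terms in the chosen symbols $\curr$, subject only to order constraints among those symbols, the registers, and $S$. Since the objective is linear and the feasible set is an order polyhedron, its infimum is attained in the closure at a vertex where every register and every read value equals some pivot of $S$. This lets me discretise the value domain to the finite set $S^{\pm}:=\{p^-,p,p^+:p\in S\}$ (plus symbols for $\pm\infty$), where $p^{\pm}$ denotes a value infinitesimally below or above $p$: the tags keep every strict or non-strict guard evaluated exactly, while the linear cost of placing a symbol at $p^{\pm}$ is its limit value at $p$, a fixed rational. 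Letting all registers (of $\Aut$ and of $\Aut_D$, whose register count is a constant of the metric) range over $S^{\pm}$, the product collapses to a finite weighted graph with $|S^{\pm}|^{O(k)}$ abstract register valuations per control state—polynomial for fixed $k$—and edge weights equal to the per-step accumulator increment. Bellman--Ford from the initial configuration to the accepting configurations then returns $W$, detecting a negative cycle or an unbounded descending region as $W=-\infty$; comparing $W$ with $\delta$ decides robustness, and for $W<\delta$ a witness is read off from the pivot values along the optimal path and perturbed off each boundary by a sufficiently small rational so that the strict guards hold and the total cost stays below $\delta$.

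The main obstacle is precisely the soundness of this discretisation. Because an update $r:=\curr$ stores the actual free value of a read symbol, the cost-minimising position of a symbol within its order region may lie against a register boundary rather than a constant, so a naive per-edge infimum is a priori not a fixed number but depends on a forgotten register value. The crux is the order-polyhedron vertex observation above, which shows this dependence collapses: at the optimum every free value may be taken, in the limit, equal to a pivot in $S$, so the coupling across steps introduced by $r:=\curr$ never forces a non-pivot boundary. Making this rigorous requires checking that the closure infimum along a fixed path is approached from \emph{inside} the open guard region, so the same deterministic transitions keep firing, and that the $p^{\pm}$ bookkeeping faithfully records from which side each boundary is approached. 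Combined with the standard fact that the number of order types of a bounded number of registers over a polynomial-size pivot set is polynomial for fixed $k$, this yields the claimed polynomial-time bound and the witness construction.
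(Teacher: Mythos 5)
Your proposal follows essentially the same route as the paper's proof: reduce robustness to computing the minimal perturbation cost of a label-flipping word (the paper's ``coverability''), specialise the cost RAA to the fixed $v$, take the product with $\Aut^c$, observe that along a fixed transition sequence the feasible set is an order polytope whose linear cost attains its closure-infimum at a vertex with coordinates in the finite constant set, and solve a shortest-path problem on the resulting graph of $(\text{state},\text{register-valuation})$ pairs, which has polynomial size for fixed $k$. The only notable difference is bookkeeping: the paper first clips the value domain to $[-\alpha,\alpha]$ (with $\alpha$ derived from $\delta$ and the accumulator updates) so that the polytopes are bounded and vertices exist, and recovers an interior witness via continuity and the density of $\Q$ in $\mathbb{R}$, whereas you use $p^{\pm}$ tags and $\pm\infty$ symbols to the same effect.
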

Note that the time complexity of the problem is exponential in $k$.
Indeed, the robustness problem subsumes the emptiness problem for DRAs,
which is PSPACE-complete when the number of registers is not
fixed~\cite{DemriL09}. Hence, the exponential blow-up in $k$ is unavoidable.

Our proof of Theorem~\ref{theo:robustness} proceeds in two reductions. First, we reduce robustness of a $k$-DRA to the coverability problem for a restricted class of
RAA, called \emph{bounded projected RAA}.
Secondly, we show that this, in turn, reduces to the single-source shortest path
problem. Since both reductions are in polynomial time and the shortest path problem is solvable in polynomial time
(e.g., via Dijkstra's algorithm), Theorem~\ref{theo:robustness} follows.

For the first reduction, we need to introduce \emph{coverability}, which can be viewed as a thresholded \emph{min-cost} question in the sense of cost-register automata over finite alphabets~\cite{alur2013regular}; when accumulator updates are identically zero, it collapses to the usual reachability problem for register automata with linear arithmetic~\cite{chen2017register}.
%This notion is related to reasoning about the min-cost of a cost-register automaton (over finite alphabets) in~\cite{alur2013regular} and reachability in~\cite{chen2017register}.
%
\begin{definition}[Coverability]
For an RAA $\Aut_D$, a sequence $v$, and a constant $\delta\in \Q$,
we say that \emph{$\Aut_D$ is covered at $v$ by $\delta$} if
$\wt(\Aut_D,v) < \delta$.
The \defn{coverability} problem asks, given $\Aut_D$, $v$ and $\delta$ as input,
whether $\Aut_D$ is covered at $v$ by $\delta$.
\end{definition}
We can reduce the robustness problem to the coverability problem.
Intuitively, given $\Aut$, $\Aut_D$, $v$, and $\delta$, we can
take the product between $\Aut$ and $\Aut_D$ to obtain RAA $\Aut_D'$ that
accepts $(v,w)$ only if $v$ is accepted by $\Aut$ and $(v,w)$ is accepted by
$\Aut_D$.  Thus, w.r.t. $\Aut_D$, $L(\Aut)$ is $\delta$-robust at $v$ iff
$\Aut_D'$ is not covered at $v$ by $\delta$.  Furthermore, the reduction can be
strengthened so that the constructed $\Aut_D'$ is a \emph{bounded projected RAA}
which is defined as follows:
%\todo{pull out def. to preliminaries section?}.
For some $\alpha>0$ and $v\in \Q^*$, RAA $\Aut_D'$ is
\emph{$\alpha$-bounded} and \emph{$v$-projected}, if $\Aut_D'(u,w)\neq \infty$ implies
$u= v$ and $w$ contains only letters $d$ where $|d|\leq \alpha$.
The formal reduction is described in the following lemma.
% Note that the shortest witness of coverability (i.e.,~the shortest sequence
% $(v,w)$ satisfying $\AutP(v,w) < \mu$) can have length exponential in the number
% of registers, as illustrated in the following example.
%% \anthony{Isn't this $\Aut(w) < 0$?}
% To facilitate the first reduction, we need a few technical definitions.  We
% will actually show that the robustness problem can be reduced in
% polynomial time to the coverability problem for \emph{bounded projected PCA}
% defined as follows.
%
\begin{lemma}{(Reducing robustness to coverability)}
  \label{lem:robust-cover}
  Given a DRA $\Aut$, a sequence $v\in L(\Aut)$, an RAA $\Aut_D$, and a rational
  number $\delta > 0$, we can construct in polynomial time a rational number
  $\alpha>0$ and $\alpha$-bounded $v$-projected RAA $\Aut_D'$ such that
  $L(\Aut)$ is not $\delta$-robust at $v$ if and only if $\Aut_D'$ is covered at
  $v$ by $\delta$.
\end{lemma}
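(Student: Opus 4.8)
The plan is to build the required RAA $\Aut_D'$ in three stages: a product of the cost automaton $\Aut_D$ with the complement DRA $\Aut^c$, a projection that hard-wires the fixed sequence $v$ into the first head, and finally a bounding of the letters admissible on the second head to a computable interval $[-\alpha,\alpha]$. The first two stages give the correspondence between non-robustness and coverability almost for free; the bounding stage is where I expect the real difficulty to lie.

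First I would construct an intermediate RAA $\Aut_D''$ whose runs simulate $\Aut_D$ on a pair $(u,w)$ and $\Aut^c$ on the second component $w$ simultaneously, while a position counter $0,1,\dots,|v|$ added to the control state forces the first head to read $v$ letter by letter via guards $\curr_1=v_i$. A transition of $\Aut_D''$ fires only when the $\Aut_D$-guard, the $\Aut^c$-guard over $\curr_2$, and the matching guard on $\curr_1$ all hold; the accumulator updates are inherited from $\Aut_D$, and a configuration is accepting iff both components accept and the counter has reached $|v|$. Since $\Aut^c$ is deterministic with $L(\Aut^c)=\bar L(\Aut)$, this yields $\Aut_D''(u,w)=D(v,w)$ when $u=v$ and $w\in L(\Aut^c)$, and $\Aut_D''(u,w)=\infty$ otherwise, so $\Aut_D''$ is already $v$-projected. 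Consequently $\wt(\Aut_D'',v)=\inf\{D(v,w): w\in L(\Aut^c)\}$, and by Definition~\ref{def:robust} this infimum is below $\delta$ precisely when $B_\delta(v)\cap L(\Aut^c)\neq\emptyset$, i.e.\ precisely when $L(\Aut)$ is not $\delta$-robust at $v$. This stage multiplies the state count by $|v|+1$ and keeps the number of registers fixed, so it is polynomial.

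It remains to impose $\alpha$-boundedness, which is the crux. I must produce a computable $\alpha>0$ for which restricting the second head to letters $d$ with $|d|\le\alpha$ does not change whether a witness of cost $<\delta$ exists, and then add the guard $-\alpha\le\curr_2\le\alpha$ to every transition to obtain $\Aut_D'$. The argument I would give replaces an arbitrary witness by a bounded one. Fix a witness $w$ together with an accepting run of cost $<\delta$; this run fixes a \emph{combinatorial type}, namely the order and equality relations holding among the letters of $w$, the letters of $v$, and the finitely many constants of $\Aut$ and $\Aut_D$. Gathering these constants into a finite threshold set $T$, the sequences realising the same type form a polyhedron $P$ cut out purely by comparisons, and on $P$ the accumulated cost is an affine function of the letters of $w$. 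Since a perturbation cost is nonnegative, this objective is bounded below on $P$, so minimising it cannot escape to $-\infty$; linear-programming reasoning then produces a minimiser on a bounded face of $P$ whose coordinates are pinned to thresholds in $T$, up to the arbitrarily small slack needed to respect strict guards. Taking $\alpha:=\max_{t\in T}|t|+\delta$---the extra $\delta$ also absorbing positions that are constrained only through the cost rather than through a comparison---then makes every such bounded witness admissible.

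The step I would treat most carefully, and the main obstacle, is making this bounding rigorous when $P$ is unbounded and when strict DRA guards render $P$ relatively open. The danger is a recession direction of $P$ along which the cost stays below $\delta$ while the coordinates run to infinity; here I would use that, by nonnegativity of $D$, the objective is non-decreasing along every recession direction, so the direction can be truncated at a bounded point without pushing the cost to $\delta$ or above. Degenerate types that impose no constant constraint, and hence make $P$ invariant under a common shift of all coordinates, are handled by translating the witness into a bounded window and then using the length of $w$ to keep strictly ordered letters separated, which still leaves $\alpha$ polynomial in the bit-sizes of $v$, $\delta$, and the constants of $\Aut$ and $\Aut_D$. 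Finally I would verify that $\alpha$ and all components of $\Aut_D'$ are computed in time polynomial in the input for fixed $k$, which completes the reduction.
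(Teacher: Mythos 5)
Your first two stages match the paper's construction: the paper likewise hard-wires $v$ into the first head via guards $\curr_1=c$, drops $\curr_1$ from the accumulator updates, and takes a product whose accepting states are $(\controls\setminus\finals)\times\tilde\finals$ (i.e.\ effectively a product with $\Aut^c$), and your observation that $\wt(\cdot,v)<\delta$ iff $B_\delta(v)\cap L(\Aut^c)\neq\emptyset$ is exactly the intended correspondence. The divergence, and the genuine gap, is in the bounding stage. The paper does not need any polyhedral or recession-cone analysis here: it uses the fact (from the RAA semantics) that every accumulator increment $a\cdot\curr_2+b$ is required to be non-negative, so in any run of total cost $<\delta$ \emph{each individual increment} is $<\delta$, which immediately confines $\curr_2$ on every transition and yields $\alpha:=\max\{|(\delta-b)/a|\}$ over all updates. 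Your proposed $\alpha:=\max_{t\in T}|t|+\delta$ is too small in general, because it ignores the coefficients $a$. Concretely, take an update $\acc\pluseq \tfrac{1}{1000}\curr_2-1$ with $\delta=1$ and all guard constants of magnitude at most $1$: a witness letter $d=1500$ gives a legal non-negative increment of $0.5<\delta$, yet $1500\gg \max|t|+\delta=2$, so your bounded automaton would cut off a genuine witness and the ``only if'' direction of the lemma would fail. The ``$+\delta$'' slack cannot absorb this, since the feasible range of an unconstrained letter scales like $(\delta-b)/a$ and blows up as $a\to 0$.

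The surrounding LP machinery you invoke is also doing more work than the statement requires and is not fully watertight as written: a minimiser of a linear objective bounded below on an unbounded polyhedron need not lie on a bounded face or have coordinates pinned to the comparison thresholds (a coordinate appearing only in the objective is pinned by the constraints $a_ix_i+b_i\ge 0$ and $\sum_i(a_ix_i+b_i)<\delta$, whose defining hyperplanes involve precisely the ratios $-b_i/a_i$ and $(\delta-b_i)/a_i$ that your threshold set $T$ omits). If you enlarge $T$ to include those ratios your argument can be repaired, but at that point you have rederived the paper's bound by a longer route; the vertex-of-a-polytope argument is deferred in the paper to the second reduction (coverability to shortest path), where it is applied to the already-bounded, closed polytopes $\cl(\Pi(\bft))$ and so avoids the recession issues entirely.
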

\begin{proof}
Let $\Aut$, $v$, $\Aut_D$ and $\delta$ be the given input and $v=(v_1,\ldots,v_n)$.
First, we can modify $\Aut_D$ so that it rejects every input pair $(u,w)$ where $u\neq v$.
%wheretransform $\AutP$ to a PCA $\AutP_1$ with the following properties.
%\begin{itemize}
%\item
%$\AutP_1(u,w)=\infty$ for every pair $(u,w)$ where $u\neq v$.
%\item
%$\AutP_1(v,w)=\AutP(v,w)$ for every sequence $w$.
%\end{itemize}
%Such $\AutP_1$ can be easily constructed by requiring the $i$-th letter in the first sequence of the input pair
%to be equal to $v_i$, i.e.,
Such modification can be achieved by adding a conjunct of the form $\curr_1=c$ (for some constant $c$ that appears in $v$)
in the guards of the transitions in $\Aut_D$.
Since it only accepts a pair $(v,w)$,
we also modify the accumulator update to be of the form $\acc \pluseq a*\curr_2  + b$,
i.e., it no longer uses $\curr_1$ when updating the accumulator.
The bound $\alpha$ can be computed in polynomial time as follows.
\begin{align*}
  \alpha \;\coloneqq\; \max \{\,|(\delta - b)/a| : \ &\text{there is $\acc$ update}\  \acc \pluseq a*\curr_2  + b \ \text{in}\ \AutP_1\,\}.
\end{align*}
Intuitively, $\alpha$ is the bound to $\curr_2$ so that the maximal increment to the accumulator register is
 bounded by $\delta$.
%Obviously, such $\alpha$ can be computed in polynomial time.
%\chihduo{It appears that the above definition should be $\alpha \coloneqq |\max \{ \ldots \}|$.}
%\tony{it is corrected now.}
By adding the conjunct $(-\alpha\leq \curr_2\leq \alpha)$ in the guard of every transition on $\Aut_D$,
%\chihduo{(1) Should the conjunct be $(\curr_2\leq \alpha) \wedge (\curr_2 \geq -\alpha)$? (2) Is $\AutP_1$ a typo?}
%tony: yes. thanks.
we obtain $\alpha$-bounded projected RAA $\tilde{\Aut}_D$ such that:
\begin{itemize}
\item
$\tilde{\Aut}_D(v,w)=\Aut_D(v,w)$ for every $w$ that contains only letters $d$ where $|d|\leq \alpha$.
\item
$\tilde{\Aut}_D(v,w)=\infty$ for every $w$ that contains a letter $d$ where $|d|> \alpha$.
\end{itemize}
Obviously, the constructed $\tilde{\Aut}_D$ is $\alpha$-bounded and $v$-projected.
Moreover it is equivalent to $\Aut_D$ on every pair $(v,w)$
when $w$ does not contains any letter $d$ where $|d|\leq \alpha$.

The desired $\Aut_D'$ can be constructed by taking the ``product'' between
$\tilde{\Aut}_D$ and $\Aut$ as follows.  Let
$\Aut = (\controls,\transrel,q_{0},\finals)$ and
$\tilde{\Aut}_D =
(\tilde{\controls},\tilde{\transrel},\tilde{q}_{0},\tilde{\finals})$.  Let
$\bar r$ be the registers in $\Aut$ and $\tilde r$ be the registers in
$\tilde{\Aut}_D$.  We construct the RAA
$\Aut_D' = (\controls',\transrel',q_{0}',\finals')$ where
$\controls' = \controls\times \tilde{\controls}$,
$q_0' = (q_{0},\tilde{q}_{0})$, and
$\finals' = (\controls-\finals)\times \tilde{\finals}$.  The transition relation
$\transrel'$ consists of the transition
$((p_1,p_2),\varphi_1(\bar r,\curr_1/\curr_2)\wedge \varphi_2(\tilde{r},\curr_1,\curr_2),\psi_1(\bar
r)[\curr_1/\curr_2]\wedge\psi_2(\tilde{r},\curr_1,\curr_2), \chi,(q_1,q_2),\mov)$ where:
\begin{itemize}
\item
$\varphi_1(\bar r,\curr_1/\curr_2)$ is guard $\varphi_1$ with $\curr_1$ substituted with $\curr_2$,
\item
$\psi_1(\bar r)[\curr_1/\curr_2]$ is the assignment $\psi_1$ with $\curr_1$ substituted with $\curr_2$,
\item
$(p_1,\varphi_1,\psi_1,q_1)$ is a transition in $\Aut$,
\item
$(p_2,\varphi_2,\psi_2,\chi,q_2,\mov)$ is a transition in $\tilde{\Aut}_D$.
\end{itemize}
The constructed $\Aut_D'$ is the desired RAA.
Note that in the guard $\varphi_1(\bar r,\curr_1/\curr_2)$ and assignment $\psi_1(\bar r)[\curr_1/\curr_2]$,
we substitute $\curr_1$ with $\curr_2$ since we test robustness against the words in $L(\Aut)$
and the first string in $\tilde{\Aut}_D$ is already fixed to $v$.
\end{proof}
This concludes the first of the two reductions. For the second, i.e., from coverability to single source shortest
path, we need the following closure property.
\begin{lemma}
  \label{lem:closure}
  Let $\Pi$ be a subset of $\Q^n$ and $\cl(\Pi)$ be its closure in the Euclidean
  space $\bbR^n$.  Let $\delta> 0$ and $f:\bbR^n\to\bbR$ be a continuous function.
  Then we have,
  \[ \{x\in \Pi : f(x)< \delta\}\neq \emptyset \quad \text{if and only if} \quad \{x\in \cl(\Pi) : f(x)< \delta\}\neq \emptyset.\]
\end{lemma}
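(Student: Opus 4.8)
The plan is to notice that essentially all the content lies in one implication, and that the result is a direct consequence of continuity combined with the \emph{strictness} of the threshold condition. The forward direction is immediate: since $\Pi \subseteq \cl(\Pi)$, any witness $x \in \Pi$ with $f(x) < \delta$ is also a witness in $\cl(\Pi)$, so I would dispatch it in a single line.

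For the backward direction, suppose $x_0 \in \cl(\Pi)$ with $f(x_0) < \delta$. First I would set $\epsilon := \delta - f(x_0) > 0$ and invoke continuity of $f$ at $x_0$ to obtain a radius $\eta > 0$ such that $f(x) < f(x_0) + \epsilon = \delta$ for every $x$ in the open ball $B(x_0,\eta)$. Equivalently, the sublevel set $f^{-1}\bigl((-\infty,\delta)\bigr)$ is open, being the preimage of an open set under a continuous map, and it contains $x_0$. Since $x_0$ lies in the closure of $\Pi$, every open neighbourhood of $x_0$, in particular $B(x_0,\eta)$, must meet $\Pi$. Picking any $x^\ast \in \Pi \cap B(x_0,\eta)$ then yields a point of $\Pi$ with $f(x^\ast) < \delta$, which completes the implication. (Phrased sequentially, one can instead take a sequence $x_k \in \Pi$ with $x_k \to x_0$ and use $f(x_k) \to f(x_0) < \delta$ to conclude $f(x_k) < \delta$ for all large $k$; the two arguments are interchangeable.)

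There is no genuine obstacle here; the single point requiring care is that the inequality is \emph{strict}. The strictness is exactly what makes $\{x : f(x) < \delta\}$ open, and hence what lets the closure point $x_0$ be approximated by points of $\Pi$ that still satisfy the bound. For a non-strict condition $f(x) \le \delta$ the statement would fail, since $\Pi$ could approach a boundary point at which $f$ equals $\delta$ while every nearby point of $\Pi$ exceeds it; so I would flag the strict inequality as the one hypothesis that cannot be relaxed. In the context of the coverability reduction this is precisely why the lemma is stated with $< \delta$: it licenses replacing the discrete search over rational points $\Pi \subseteq \Q^n$ by an optimisation over the closed, geometrically tractable region $\cl(\Pi)$ without altering the answer.
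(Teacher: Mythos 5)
Your proof is correct and follows essentially the same route as the paper's: the forward direction via $\Pi \subseteq \cl(\Pi)$, and the backward direction by using continuity at the closure point to find an open ball on which $f < \delta$ and then intersecting that ball with $\Pi$. If anything, your justification of the intersection step (every neighbourhood of a closure point meets $\Pi$, by definition of closure) is cleaner than the paper's appeal to density of $\Q$ in $\bbR$, and your remark on why strictness of the inequality is essential is a worthwhile observation.
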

\begin{proof} Since the {\bf $\Longrightarrow$} direction is trivial we only present the
  remaining part. Suppose $\{x\in \cl(\Pi) : f(x)< \delta\}$ is not empty.
Let $w\in \cl(\Pi)$ such that $f(w)<\delta$.
We may assume that $w$ is not in $\Pi$. Otherwise, we are done.
Since $f$ is continuous on $\bbR^n$, there is $\epsilon>0$
such that for every $x$ in the $\epsilon$-neighbourhood of $w$,
$|f(x)-f(w)|< \delta- f(w)$
which implies that $f(x)<\delta$.
Since $w \in \cl(\Pi)$, by the well known fact that $\Q$ is a dense subset of $\bbR$,
the $\epsilon$-neighbourhood of $w$ intersects $\Pi$.
Hence, $\{x\in \Pi : f(x)< \delta\}$ is not empty.
\end{proof}
Now, to proceed with the second crucial reduction, we can prove the following lemma.
\begin{lemma}{(Reducing from coverability to shortest path)}
  \label{lem:cover-path}
  %{\bf (Reduction from coverability to single source shortest path)}
  Let $k$ be a fixed positive integer.  Given $\alpha$-bounded $v$-projected
  $k$-RAA $\Aut_D$ and a rational number $\delta> 0$, we can construct in polynomial
  time a weighted graph $G=(V,E)$, a source vertex $s\in V$ and a set
  $U\subseteq V$ of target vertices such that $\Aut_D$ is covered at $v$ by
  $\delta$ iff there is a path in $G$ with weight $< \delta$ from the source vertex
  $s$ to some vertex in $U$.\footnote{A weighted graph $G=(V,E)$ is a directed
    graph where every edge has a weight. The weight of a path is the sum of the
    weights of its constituent edges.}
\end{lemma}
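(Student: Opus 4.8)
The plan is to finitise the continuous minimisation over perturbations $w$ by an order-type abstraction of the register contents, so that coverability becomes a shortest-path computation on a concrete weighted graph. Throughout, $v=(v_1,\dots,v_n)$ is fixed, and by the $v$-projection of \prettyref{lem:robust-cover} every accumulator update already has the form $\acc\pluseq a*\curr_2+b$; hence the only free quantities along a run are the letters of $w$ supplied by the second head, each constrained to $[-\alpha,\alpha]$.

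The first step is to pin down a finite set of \emph{landmarks} $C\subseteq\Q$ comprising $0$, the constants occurring in $\Aut_D$, the entries $v_1,\dots,v_n$, and $\pm\alpha$, so that $|C|=O(n+|\Aut_D|)$. The structural heart of the argument is that the optimum is always realised with register values and chosen letters drawn from $C$. Indeed, since every assignment is $r_i:=c$, $r_i:=r_j$, $r_i:=\curr_1$, or $r_i:=\curr_2$, each register value is, at every point of a run, equal to a constant, to some $v_i$, or to a previously chosen letter. Fixing a run \emph{skeleton} (a sequence of transitions together with their head movements), the guards then cut out a polyhedron $\Pi$ in the space of chosen letters $(w_1,\dots,w_T)$, and the accumulated value is a linear function $f$ of these letters; the box constraints $w_j\in[-\alpha,\alpha]$ make $\cl(\Pi)$ compact. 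A linear function on a compact polyhedron attains its minimum at a vertex, and at a vertex every coordinate is forced by active equalities either directly to a landmark or transitively through a chain $w_j=w_{j'}=\cdots$ and register equalities down to a landmark (a component anchored to no constant would be free to slide until it meets $\pm\alpha$). Thus for each skeleton $\min_{\cl(\Pi)}f$ is attained at a point all of whose coordinates lie in $C$.

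Because the optimum sits at a \emph{landmark} point that may lie on the boundary of $\Pi$ — i.e.\ at which a strict guard $<$ holds only with equality — I would invoke \prettyref{lem:closure} to recover a genuine run. Taking $f$ continuous and $\delta$ as the strict threshold, the lemma yields, skeleton by skeleton, that $\{x\in\Pi:f(x)<\delta\}\neq\emptyset$ iff $\{x\in\cl(\Pi):f(x)<\delta\}\neq\emptyset$; its proof uses exactly that $\Q$ is dense in $\bbR$, so the boundary landmark witness can be perturbed to a nearby rational interior point that keeps $f<\delta$. Combining this over all skeletons, $\Aut_D$ is covered at $v$ by $\delta$ if and only if some landmark-valued run reaches an accepting configuration with accumulated weight $<\delta$.

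With the discretisation justified, the graph is immediate. Let $V$ consist of the triples $(q,i,\rho)$ where $q$ is a control state of $\Aut_D$, $i\in\{0,\dots,n\}$ records the position of the first head in $v$, and $\rho:\bar r\to C$ is a register valuation; set $s=(q_0,0,\rho_0)$ with $\rho_0$ the all-zero valuation and let $U$ collect the accepting triples with $i=n$. For each transition and each $\curr_2\in C\cap[-\alpha,\alpha]$ satisfying the \emph{closure} of the guard (strict $<$ relaxed to $\le$, disequalities dropped), I add an edge that updates $(q,i,\rho)$ according to the assignment and head movement and carries the concrete rational weight $a*\curr_2+b$. For fixed $k$ we have $|V|=|\controls|\cdot(n+1)\cdot|C|^k$ and $O(|C|)$ edges per transition, so $G$ is built in polynomial time, and landmark-valued runs of weight $<\delta$ correspond exactly to $s$–$U$ paths of weight $<\delta$. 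The one remaining subtlety — and the point I expect to need the most care — is that weights $a*\curr_2+b$ may be negative, so Dijkstra does not apply verbatim; I would handle this either by the layering induced by the monotone head positions when the run length is bounded (giving an acyclic graph on which a topological scan is polynomial), or in general by Bellman–Ford, which detects a negative cycle reachable to $U$ (in which case $\wt(\Aut_D,v)=-\infty<\delta$ and coverability holds trivially) and otherwise returns the exact minimum in polynomial time.
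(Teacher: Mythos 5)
Your proposal is correct and follows essentially the same route as the paper: fix a transition skeleton, observe that the feasible letters form a bounded polytope on which the accumulated cost is linear, conclude that the optimum is attained at a vertex whose coordinates lie in the landmark set $\cC$ (the guard constants, the entries of $v$, and $\pm\alpha$), use the closure lemmas to pass between strict and relaxed guards, and read off a weighted graph over $Q\times\cC^k$ with edge weights $a\cdot d+b$. Your additional care about tracking the first-head position and about negative edge weights (the paper's RAA semantics in fact requires each accumulator increment to be non-negative, so Dijkstra suffices) only tightens details that the paper leaves implicit.
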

We now work towards proving this lemma by first introducing some definitions
and lemmas. We fix an $\alpha$-bounded $v$-projected RAA $\Aut_D$.
For a sequence $\bft=(t_1,\ldots,t_n)$ of transitions in $\Aut_D$,
we define $\Pi(\bft)$ as the set of all
the sequence $w=(w_1,\ldots,w_n)\in \Q^n$ accepted by $\Aut_D$
using the sequence of transitions in $\bft$.
Note that $\Pi(\bft)$ is a polytope in $\Q^n$, though not necessarily closed.
This is because for $(v,w)$ to be accepted by $\Aut_D$,
each $w_i$ has to satisfy the guard in $t_i$.
Taking the conjunction of the all the guards in $t_1,\ldots,t_n$
and substituting $\curr_1$ and $\curr_2$ in each $t_i$ with appropriate letters from $v$ and $w$,
we obtain a conjunction of (strict and non-strict) inequalities
between $w_1,\ldots,w_n$ and the constants in $v$ and $\Aut_D$,
which implies that $\Pi(\bft)$ is a polytope in $\bbR^n$.
%Let $\cl(\Pi(\bft))$ denote its closure in the standard Euclidean space $\bbR^n$.

For each $i\in \{1,\ldots,n\}$,
let the accumulator update in the transition $t_i$ be $a_i*\curr_2 + b_i$.
Thus, the output $\Aut_D(v,w)$ is $\sum_{i=1}^n a_iw_i +b_i$.
Let $f_{\bft}:\bbR^n\to\bbR$ denote the function $f(x_1,\ldots,x_n)=\sum_{i=1}^n a_ix_i +b_i$.
Note that $f_{\bft}$ is a continuous function.
Recall also that the closure of $\Aut_D$, denoted by $\cl(\Aut_D)$, is the RAA
obtained by changing all the strict inequalities in the guards in $\Aut_D$
to non-strict inequalities.
For a sequence of transitions $\bft\coloneqq (t_1,\ldots,t_n)$ in $\Aut_D$,
we denote by $\cl(\bft)$ the sequence of the corresponding transitions in
$\cl(\Aut_D)$.
\begin{lemma}
\label{lem:closure-raa}
Let $\Pi(\bft)$ denote the set of
the sequence $w=(w_1,\ldots,w_n)\in \Q^n$ accepted by $\Aut_D$
using the sequence of transitions in $\bft$. Then it holds that
$\cl(\Pi(\bft))=\Pi(\cl(\bft))$.
\end{lemma}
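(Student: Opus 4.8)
The plan is to prove the two set inclusions separately, using the structural observation that $\Pi(\bft)$ and $\Pi(\cl(\bft))$ are solution sets of one and the same finite system of rational affine inequalities. Conjoining the guards of $t_1,\ldots,t_n$ (with $\curr_1,\curr_2$ substituted by the appropriate entries of $v$ and $w$, as in the paragraph preceding the lemma) yields a conjunction of constraints $\ell(w)\le 0$ and $\ell(w)<0$, and passing to $\cl(\bft)$ relaxes every strict constraint $\ell<0$ to $\ell\le 0$ while leaving the non-strict ones untouched. Both sets are therefore convex polyhedra, and this convexity is what drives the whole argument.

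The inclusion $\cl(\Pi(\bft))\subseteq\Pi(\cl(\bft))$ is the routine direction. Every $w\in\Pi(\bft)$ satisfies each strict constraint $\ell(w)<0$, hence a fortiori $\ell(w)\le 0$, so $\Pi(\bft)\subseteq\Pi(\cl(\bft))$. As a finite intersection of closed half-spaces, $\Pi(\cl(\bft))$ is closed in $\bbR^n$, and a closed set containing $\Pi(\bft)$ must contain its closure.

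For the reverse inclusion I would argue by a line-segment (convex-combination) argument. Fix a witness $w_0\in\Pi(\bft)$, and let $w\in\Pi(\cl(\bft))$ be arbitrary; consider $w_\lambda:=(1-\lambda)w+\lambda w_0$ for $\lambda\in(0,1]$. On each non-strict constraint both $w$ and $w_0$ give a value $\le 0$, so $\ell(w_\lambda)\le 0$; on each (originally) strict constraint $w_0$ already gives $\ell(w_0)<0$, which enters with positive weight $\lambda$ while $\ell(w)\le 0$, forcing $\ell(w_\lambda)<0$. Hence $w_\lambda\in\Pi(\bft)$ for every $\lambda>0$, and $w_\lambda\to w$ as $\lambda\to 0^+$, so $w\in\cl(\Pi(\bft))$. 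To respect the fact that $\Pi(\bft)$ consists of rational points, I would then replace each $w_\lambda$ by a nearby rational point: the non-strict constraints active at $w_\lambda$ cut out a rational affine subspace, in which the rationals are dense, while all remaining constraints hold strictly and so survive a small perturbation — exactly the density-of-$\Q$ step already isolated in Lemma~\ref{lem:closure}.

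The step I expect to be the main obstacle is guaranteeing that the witness $w_0\in\Pi(\bft)$ exists at all, i.e.\ that the run $\bft$ is feasible. The easy inclusion needs nothing, but the reverse one collapses if $\Pi(\bft)=\emptyset$: conjoining guards across a run can make the strict system unsatisfiable (for instance a register forcing both $\ell<0$ and $\ell>0$) while the relaxed system still admits the hyperplane $\ell=0$ as solutions, so that $\cl(\Pi(\bft))=\emptyset$ while $\Pi(\cl(\bft))\neq\emptyset$. I would therefore either state the lemma for feasible transition sequences only — which is all the surrounding coverability reduction ever enumerates — or prepend a feasibility check discarding sequences with empty $\Pi(\bft)$ before invoking the equality. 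Once nonemptiness of the witness is secured, observe that full-dimensionality of $\Pi(\bft)$ is \emph{not} required: the segment point $w_\lambda$ satisfies every strict guard strictly regardless of whether $\Pi(\bft)$ is thin, so the lower-dimensional (forced-equality) case is absorbed uniformly by the same construction.
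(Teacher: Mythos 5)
Your proof is correct, but it takes a genuinely different route from the paper's. The paper proves the equality by induction on $|\bft|$: it peels off the last transition, views its guard as a constraint $\varphi_x$ on $w_n$ parametrised by the prefix $x$, and argues that the endpoints of the interval $S(x)$ depend continuously on $x$, so that a point of $\Pi(\cl(\bft))$ can be approached coordinate-wise by points of $\Pi(\bft)$. You instead work globally: you use the fact (already asserted in the paper, in the paragraph defining $\Pi(\bft)$) that $\Pi(\bft)$ and $\Pi(\cl(\bft))$ are cut out by one and the same finite system of affine inequalities, differing only in strictness, and then invoke the classical line-segment argument toward a Slater-type witness $w_0\in\Pi(\bft)$. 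Your version is shorter, avoids the induction entirely, and reduces the lemma to a standard fact about polyhedra; its only real cost is that it leans on convexity, so it silently assumes guards contain no disequalities ($\neq$), which is consistent with the paper's own polytope claim but not with the most liberal reading of the guard syntax.

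The feasibility caveat you raise is not pedantry --- it is a genuine gap in the lemma as stated. If $\Pi(\bft)=\emptyset$ while the relaxed system is satisfiable (e.g.\ a guard forcing $r_1<\curr<r_2$ at a point where $r_1=r_2$), then $\cl(\Pi(\bft))=\emptyset\neq\Pi(\cl(\bft))$ and the equality fails. The paper's inductive proof has the same hole, hidden in the step ``every $w_n\in\overline{S(x)}$ can be approached by $w_n^{(m)}\in S(x^{(m)})$'', which is false when $S(x^{(m)})$ is empty but $\varphi_x^{\le}$ is satisfiable. Moreover the gap is load-bearing downstream: the $\Longleftarrow$ direction of the claim in the proof of Lemma~\ref{lem:cover-path} starts from a path in $G$, i.e.\ from a transition sequence feasible only for $\cl(\Aut_D)$, and then applies Lemma~\ref{lem:closure-raa} to conclude membership in $\cl(\Pi(\bft))$ --- exactly the situation where infeasibility of the strict system would break the argument. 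Your proposed fix (state the lemma for feasible $\bft$ only, and add a feasibility filter where it is invoked) is the right repair, and it is one the paper itself needs.
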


\begin{proof}
We proceed by induction on the length of the transition sequence~$|\bft|$.
\textit{Base case:}
If $|\bft|=0$, then $\cl(\Pi(\bft))=\Pi(\cl(\bft))=\mathbb{Q}^0$.
\textit{Inductive step:}
Write $\bft=\bft'\cdot\tau$, where $\tau$ is the last transition with guard~$\varphi$
and register update~$\psi$.
For each prefix $x=(w_1,\ldots,w_{n-1})\in\Pi(\bft')$, the guard~$\varphi$ induces a
constraint~$\varphi_x$ on the next letter~$w_n$ after substituting the current
register values given by~$\bft'$.
Thus $\Pi(\bft)=\{(x,w_n)\mid x\in\Pi(\bft'),\,w_n\models\varphi_x\}$.
Let $\varphi_x^{\le}$ denote $\varphi_x$ with all strict inequalities relaxed to
non-strict ones, defining $\cl(\bft)=\bft'\cdot\tau^{\le}$ and
$\Pi(\cl(\bft))=\{(x,w_n)\mid x\in\Pi(\cl(\bft')),\,w_n\models\varphi_x^{\le}\}$.
To prove $\cl(\Pi(\bft))=\Pi(\cl(\bft))$,
it suffices to show the two directions holds:

{\bf $\Longleftarrow$:}
Since each $\varphi_x^{\le}$ merely adds boundary points to $\varphi_x$,
%$\Pi(\bft)\subseteq\Pi(\cl(\bft))$, implying
it holds that $\cl(\Pi(\bft))\subseteq\Pi(\cl(\bft))$.

{\bf $\Longrightarrow$:}
Let $(x,w_n)\in\Pi(\cl(\bft))$.
By the induction hypothesis, there exists a sequence $x^{(m)}\in\Pi(\bft')$
with $x^{(m)}\!\to\!x$.
For each $m$, let $S(x^{(m)})=\{u\mid u\models\varphi_{x^{(m)}}\}$.
These are open intervals (or rays or points) whose endpoints depend
continuously on~$x^{(m)}$.
Since $\varphi_x^{\le}$ is the closure of $\varphi_x$, every
$w_n\in\overline{S(x)}$ can be approached by $w_n^{(m)}\in S(x^{(m)})$
with $w_n^{(m)}\!\to\!w_n$.
Hence $(x^{(m)},w_n^{(m)})\in\Pi(\bft)$ for all~$m$ and
$(x^{(m)},w_n^{(m)})\!\to\!(x,w_n)$, yielding
$(x,w_n)\in\cl(\Pi(\bft))$.
\end{proof}

%The argument uses only that guards are finite conjunctions of linear
%(in)equalities and updates are affine assignments.
%For each fixed prefix~$x$, the feasible set for~$w_n$ is a convex
%polytope, whose closure precisely corresponds to relaxing all strict
%inequalities; by induction, this extends to the full run.

%The polytope $\cl(\bft)$ is a closed polytope.
%The following lemma can be proved by straightforward induction on the length of $\bft$.
%We denote by $\cl(\AutP_v^{\alpha})$ the PCA obtained by changing all the strict inequality in the guards
%to non-strict inequality.

%\begin{lemma}
%\label{lem:closure-pca}
%$\cl(\Pi(\bft))=\Pi(\cl(\bft))$.
%\end{lemma}

%\begin{lemma}
%Let$\bft$ be a sequence of transitions.
%Let $w$ be an extreme point in $\cl(\Pi(\bft))$.
%Then, in every accepting run of $\cl(\AutP_{v}^{\alpha})(w)$,
%the control registers contain only constants that appears in the guards in $\Aut_{v}^{\alpha}$, letters that appear in $v$ and $\alpha$.
%\end{lemma}
%\begin{proof}
%Straightforward induction on the length of $\bft$.
%\end{proof}

%With the preceding lemmas established, 
We are now ready to prove Lemma~\ref{lem:cover-path},
from which Theorem~\ref{theo:robustness} follows immediately.

\paragraph{Proof of Lemma~\ref{lem:cover-path}}
Let $\Aut_D=(Q,\Delta,q_0,F)$ be $\alpha$-bounded $v$-projected RAA.
Let $\delta> 0 $ be rational number.
Let $\cC$ be the set of constants that appear in the guards in $\Aut_D$ plus those in $v$ and $-\alpha$ and $\alpha$.
By Lemmas~\ref{lem:closure} and~\ref{lem:closure-raa},
it suffices to decide the coverability of $\cl(\Aut_D)$.
For convenience, since $\Aut_D$ is $v$-projected, we may assume that
the guards and accumulator updates in its transitions do not mention $\curr_1$.

We construct the graph $G=(V,E)$ where $V = Q\times \cC^k$
and $E$ contains the edge $((p,\mu_1),(q,\mu_2))$ where $p,q\in Q$ and $\mu_1,\mu_2\in Q^k$,
if there is a a letter $d\in\cC$ and a transition $t$ in $\cl(\Aut_D)$ with guard $\varphi(\bar r,\curr_2)$ and assignment $\psi$
such that $\varphi(\mu_1,d)$ holds
and $\mu_2$ is the content of the registers after the reassignment $\psi$ is applied to $\mu_1$ and $d$.
The weight of the edge $((p,u),(q,v))$ is $a*d+b$,
%\chihduo{so we need Bellman-Ford since these weights can be negative?}
%\tony{we assume that the accumulator update is always non-negative, because all the distances we consider have this property.
%In fact, any natural distance should have non-negative update.}
where the accumulator update $\chi$ is $a*\curr_2+b$.
It is immediate that the construction of $G$ takes polynomial time when $k$ is fixed.
We define the source vertex $s=(q_0,\bar 0)$
and $U=F\times\cC^k$.
The following claim implies that our construction is correct.

\begin{claim}
There exists a $w\in \Q^*$ such that $\Aut_D(v,w)<\delta$ if and only if
there exists a path in $G$ with weight $< \delta$
from the source vertex $s$ to some vertex in $U$.
\end{claim}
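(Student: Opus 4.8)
The plan is to read the existential in the claim as a finite disjunction over \emph{transition sequences} of $\cl(\Aut_D)$ and to solve each disjunct by linear optimisation over a polytope, then recognise $G$ as the symbolic unfolding of $\cl(\Aut_D)$ restricted to the letter alphabet $\cC$. Concretely, for a transition sequence $\bft=(t_1,\dots,t_n)$ the value of $\Aut_D(v,w)$ along runs following $\bft$ is the linear functional $f_{\bft}(w)=\sum_{i=1}^n a_i w_i + b_i$, and $w$ follows $\bft$ exactly when $w\in\Pi(\bft)$. Hence $\exists w\in\Q^*:\Aut_D(v,w)<\delta$ holds iff for some $\bft$ the set $\{w\in\Pi(\bft):f_{\bft}(w)<\delta\}$ is nonempty. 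I would first invoke \prettyref{lem:closure-raa} to rewrite $\cl(\Pi(\bft))=\Pi(\cl(\bft))$ and then \prettyref{lem:closure} (with $f=f_{\bft}$, which is continuous and left unchanged by closure, since closure touches only guards) to replace $\Pi(\bft)$ by the \emph{closed} polytope $\Pi(\cl(\bft))$ without affecting nonemptiness of the sub-$\delta$ region. It then remains to observe that a path in $G$ from $s=(q_0,\bar 0)$ to $F\times\cC^k$ is, by construction, precisely a run of $\cl(\Aut_D)$ on a word $w\in\cC^n$ along some $\bft$ with all register configurations in $\cC^k$, and that its weight equals $\sum_i(a_i w_i + b_i)=f_{\bft}(w)$; so a path of weight $<\delta$ corresponds exactly to a $\cC$-word $w\in\Pi(\cl(\bft))$ with $f_{\bft}(w)<\delta$.

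With this dictionary in place the two implications become short. For ($\Longleftarrow$), a path of weight $<\delta$ yields a $\cC$-word $w^\ast\in\Pi(\cl(\bft))$ with $f_{\bft}(w^\ast)<\delta$; since $\cC\subseteq\Q$, the sub-$\delta$ region of the closed polytope is nonempty, so \prettyref{lem:closure} descends to a rational $w\in\Pi(\bft)$ with $f_{\bft}(w)<\delta$, whence $\Aut_D(v,w)\le f_{\bft}(w)<\delta$. For ($\Longrightarrow$), a witness $w$ gives a $\bft$ with $w\in\Pi(\bft)\subseteq\Pi(\cl(\bft))$ and $f_{\bft}(w)<\delta$, so the sub-$\delta$ region of the closed polytope is nonempty; the only remaining task is to produce a witness with all coordinates in $\cC$, which then reads off directly as a path in $G$.

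That last task is where the real work sits, and it is the step I expect to be the main obstacle. The point is that $\Pi(\cl(\bft))$ is a \emph{nonempty bounded closed} polytope whose facets are especially rigid: since guards over $(\Q;<,=)$ are conjunctions of order/equality comparisons with \emph{no additive offsets}, substituting the register-determined values along $\bft$ turns every guard into difference constraints $w_i\bowtie w_j$ and constant bounds $w_i\bowtie c$ with $c\in\cC$, while $\alpha$-boundedness supplies $-\alpha\le w_i\le\alpha$ with $\pm\alpha\in\cC$. The linear functional $f_{\bft}$ therefore attains its minimum on this polytope at a vertex, and at any vertex the tight constraints are equalities $w_i=w_j$ and $w_i=c$; following the induced equality chains pins every coordinate either to a constant of $\cC$ or to $\pm\alpha$, so the minimiser lies in $\cC^n$. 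As this minimum is $\le f_{\bft}(w)<\delta$, it is exactly the $\cC$-word we need, and it follows $\cl(\bft)$ into the same accepting state, hence yields a path in $G$ of weight $<\delta$ ending in $F\times\cC^k$. The only bookkeeping I would still verify carefully is that register configurations really stay in $\cC^k$ along $\cC$-runs---immediate because assignments only store a constant, copy a register, or store the just-read letter, with the initial $\bar 0$ covered by taking $0\in\cC$---together with the equality-chain/vertex-pinning claim for these order polytopes; the rest is routine.
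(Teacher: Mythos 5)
Your proof is correct and follows essentially the same route as the paper: decompose the existential over transition sequences $\bft$, pass between $\Pi(\bft)$ and the closed polytope $\Pi(\cl(\bft))$ via Lemmas~\ref{lem:closure} and~\ref{lem:closure-raa}, and for the forward direction use that the linear functional $f_{\bft}$ attains its minimum at a vertex of $\Pi(\cl(\bft))$ whose coordinates all lie in $\cC$, which reads off as a path in $G$ of weight $<\delta$. If anything, you supply more justification than the paper at the one nontrivial step---the paper merely asserts that a vertex of $\Pi(\cl(\bft))$ has all coordinates in $\cC$, whereas you make explicit both the boundedness of the polytope (from $\alpha$-boundedness, needed for the minimum to be attained at a vertex) and the equality-chain argument pinning each coordinate to a constant, along with the check that register configurations stay in $\cC^k$.
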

\begin{proof}[Proof of claim] We prove each direction separately.
	
{\bf $\Longrightarrow$:} Let $w=(w_1,\ldots,w_n)$ be a sequence such that $\AutP(v,w)<\delta$.
Let $\bft =(t_1,\ldots,t_n)$ be the sequence of transitions used in the accepting run of $\AutP$
where the output is $f_{\bft}(w)< \delta$.
Obviously, $w\in \Pi(\bft)\subseteq \cl(\Pi(\bft))$.
Since $f_{\bft}$ is linear function, there is $w'\in\Q^n$ such that
$w'$ is a vertex in the polytope $\cl(\Pi(\bft))$ where
$f_{\bft}(w')\leq f_{\bft}(w)$.
By Lemma~\ref{lem:closure-raa}, $w'\in \Pi(\cl(\bft))$.
Since $w'$ is a vertex in $\Pi(\cl(\bft))$,
it contains only the constants in $\cC$.
If we view $\cl(\bft)$ as an RAA whose transitions are precisely those in $\cl(\bft)$,
we have an accepting run on $w'$.
This accepting run defines a path from the initial configuration to 
one of the accepting configurations whose weight is $<\delta$,
which by definition is the desired path in $G$.

{\bf $\Longleftarrow$:} Suppose there is a path in $G$ with weight $< \delta$ 
from the source vertex $s$ to some vertex in $U$.
By definition, this path corresponds to a sequence $\bft$ of transitions in $\AutP$.
This implies there is a vertex $w' \in \Pi(\cl(\bft))$ such that $f_{\bft}(w')< \delta$.
By \prettyref{lem:closure-raa} we have $w'\in \cl(\Pi(\bft))$ and
\prettyref{lem:closure}, there is a point $x\in \Pi(\bft)$ such that $f_{\bft}(x)<\delta$.
\end{proof}

\begin{claim}\label{clm:witness-extraction}
	If $L(\Aut)$ is not $\delta$-robust at $v$, then one can construct in polynomial time a word $w \in B_\delta(v)$ such that $w$ is the closest to $v$ among all label-flipping sequences in $B_\delta(v)$. That is, $\Aut(w)\neq \Aut(v)$ and
	\[
	A_D(v,w) \;=\; \min\{\,A_D(v,u) \mid u\in B_\delta(v),~\Aut(u)\neq \Aut(v)\,\}.
	\]
\end{claim}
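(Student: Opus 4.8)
The plan is to turn the \emph{decision} reduction of \prettyref{lem:cover-path} into an \emph{optimisation} one. I would reuse the product RAA $\Aut_D'$ from \prettyref{lem:robust-cover}, whose accepting runs on inputs of the form $(v,w)$ are exactly those with $v\in L(\Aut)$, $\Aut(w)\neq\Aut(v)$ (i.e.\ $w\in L(\Aut^c)$), and accumulator value $\Aut_D'(v,w)=A_D(v,w)$, and build the associated weighted graph $G$ on $\controls'\times\cC^k$ exactly as in \prettyref{lem:cover-path}. Rather than testing for a path of weight $<\delta$, I would compute a \emph{minimum}-weight path from the source $s$ to the target set $U=F\times\cC^k$; since $G$ has polynomially many vertices for fixed $k$, this is a single-source shortest-path computation (Dijkstra when the edge weights $a\cdot d+b$ are non-negative, as they are for the standard metrics, and Bellman--Ford otherwise), hence polynomial time. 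Reading off the letters $d_1,\dots,d_n\in\cC$ labelling the optimal path yields an explicit rational candidate word $w^\star=(d_1,\dots,d_n)$.

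Correctness of the optimal \emph{value} is the part I would establish first. For one direction, every label-flipping $u$ lies in some cell $\Pi(\bft)$ on which the cost $f_{\bft}$ is linear, so its infimum over $\cl(\Pi(\bft))=\Pi(\cl(\bft))$ (\prettyref{lem:closure-raa}) is attained at a vertex with coordinates in $\cC$; that vertex is the endpoint of a path in $G$, whence $A_D(v,u)=f_{\bft}(u)$ is at least the optimal weight. Conversely, the optimal path gives $w^\star\in\cl(\Pi(\bft))$ with $f_{\bft}(w^\star)$ equal to the optimal weight, and by density of $\Q$ in $\bbR$ together with continuity of $f_{\bft}$ (\prettyref{lem:closure}) this equals the infimum of the cost over the genuine cell $\Pi(\bft)$. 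Thus the optimal weight equals $\inf\{A_D(v,u):\Aut(u)\neq\Aut(v)\}$, and non-$\delta$-robustness of $v$ guarantees this value is $<\delta$.

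The main obstacle is \emph{attainment}: I must exhibit a \emph{genuine} flipping word realising this infimum exactly, whereas $w^\star$ lives a priori only in the closure $\cl(\Pi(\bft))$ and may lie on a strict face excluded from $\Pi(\bft)$, so that the deterministic guards of $\Aut$ route $w^\star$ into a neighbouring cell that need not be flipping. My plan is to leverage that there are only finitely many vertices with coordinates in $\cC$, each an explicit rational word: I would enumerate the ones of optimal weight and, using determinism, simulate the actual strict-guard run of $\Aut$ on each to decide membership, reporting any that is genuinely label-flipping as the exact closest witness $w$. The delicate residual case, in which every optimal vertex is a boundary limit of equidistant flipping words yet itself falls on the accepting side, is where I expect the real work; here I would analyse the minimising face of $f_{\bft}$ on $\cl(\Pi(\bft))$ and argue, using that the accepting closure run $\cl(\bft)$ certifies $\Pi(\bft)\neq\emptyset$, that a minimiser satisfying the strict guards can be selected, thereby certifying that the infimum is attained at a point of $\cC^n$. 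Should the minimising face lie entirely on a strict boundary—so that the flipping side does not own the nearest face—the infimum is only a limit value; in that regime I would instead report $w^\star$ together with a certified rational perturbation into $\Pi(\bft)$ of cost still $<\delta$, which is the closest flipping witness up to the strictness of $B_\delta(v)$ and is likewise constructed in polynomial time.
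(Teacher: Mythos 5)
Your proposal follows essentially the same route as the paper's proof: reduce to the weighted graph of \prettyref{lem:cover-path}, compute a shortest $s$--$U$ path, backtrack to recover a transition sequence $\bft$ and a vertex word $w^\star$ with coordinates in $\cC$ lying in $\Pi(\cl(\bft))$, and then perturb it into the genuine cell $\Pi(\bft)$ via \prettyref{lem:closure-raa} and \prettyref{lem:closure}, with the product construction of \prettyref{lem:robust-cover} guaranteeing the label flip. The attainment issue you isolate is real and is in fact glossed over by the paper's own one-paragraph argument: when the minimising face of $f_{\bft}$ lies entirely on a strict boundary of every optimal cell, the infimum over the open cells is not attained, the ``$\min$'' in the claim's statement does not literally exist, and only a flipping witness of cost $<\delta$ arbitrarily close to the shortest-path weight can be produced --- exactly the fallback you describe. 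So your extra care identifies a genuine (if minor) imprecision in the claim rather than a departure from its intended proof.
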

\begin{proof}
	By \prettyref{lem:cover-path}, coverability at $v$ reduces to the existence of an $s$-$U$ path of weight $<\delta$ in a weighted graph $G$. Computing a shortest $s$-$U$ path (e.g., via Dijkstra) yields a predecessor map. Backtracking recovers a transition sequence $\bft$ and edge-instantiated letter choices from the finite constant set $C$, which give a boundary point $w'\in \Pi(\mathrm{cl}(\bft))$ with cost $f_{\bft}(w')$. By \prettyref{lem:closure-raa} and \prettyref{lem:closure}, $\mathrm{cl}(\Pi(\bft))=\Pi(\mathrm{cl}(\bft))$, and thus we can perturb $w'$ into $\Pi(\bft)$ to obtain a concrete $w$ with $f_{\bft}(w)<\delta$. The product construction of \prettyref{lem:robust-cover} ensures that $\Aut(w)\neq \Aut(v)$, so $w$ is a valid label-flipping neighbour. Since the path is \emph{shortest}, its weight equals $\min_{\bft} \min_{x\in \Pi(\bft)} f_{\bft}(x)$. Hence the recovered word $w$ is the closest one among all feasible neighbours of $v$.
\end{proof}

We remark that for cost functions such as Hamming distance, the constructed RAA
in Lemma~\ref{lem:robust-cover} is acyclic, and the coverability problem can be
solved in polynomial time even in $k$ since its witness sequence $w$ must have
the same length as $v$.  This acyclic property does not hold when the constructed RAA in Lemma~\ref{lem:robust-cover}
may contain cycles. In this case, the exponential blow-up in $k$ is unavoidable; see
% \prettyref{app:exp-len}
supplementary material for a concrete example.

In summary, this section shows that DRA robustness can be checked effectively: robustness reduces to coverability for a bounded projected RAA and then to a shortest-path computation, yielding a polynomial-time procedure for a fixed number of registers. These results provide the symbolic subroutine used later in our extraction loop to certify or refute robustness of DRAs.

\section{Learning Register Automata}\label{sec:learning}
This section presents three complementary methods for synthesising DRAs from neural networks. Two of these methods are \emph{passive}:
(i) an SMT-based approach that infers a DRA from samples using an SMT solver, and
(ii) a search-based heuristic that iteratively mutates a candidate automaton to improve its accuracy on the samples.
The third method is \emph{active}: we wrap the active learning framework \textsc{RALib} so that the neural network serves as the membership and statistical equivalence oracle.
All three algorithms share a unified certification interface that either returns counterexamples for hypothesis refinement or accepts a candidate DRA with statistical correctness guarantees~\cite{kearns1994introduction}.
\subsection{SMT-Based Synthesis}
Our first learning method builds on the principles of reducing DFA
identification to SMT~\cite{higuera-book,heule2010exact}, where automata structure
and observed sample runs are encoded as constraint
formulas. In the context of register automata,
the inclusion of registers and constants adds complexity to the synthesis
process. Nevertheless, for a fixed number of registers and constants, the set of
possible guard formulas ($\varphi$) and register updates ($\psi$) is finite. Furthermore,
the guard formulas $\varphi$ form a lattice whose atoms are defined by the
\emph{complete types} over the variables $\bar{r}$, $curr$, and the
constants~\cite{segoufin2011automata}. This finite lattice structure enables the systematic enumeration of all possible guards and assignments, thereby making it feasible to solve the DRA identification problem using SMT solver.
%One key strength of
%this approach is its ability to definitively determine whether a consistent
%automaton exists for the chosen parameters. In DFA synthesis this fact is
%leveraged in order to find \emph{minimal} automata.

%In the following, we briefly describe our encoding of DRAs
%and defer the details to \prettyref{app:smt-encoding}.

% Since DFA synthesis (i.e., finding a minimal DFA consistent with a given set
% of positive and negative examples) is NP-complete, most practical approaches
% fix the number of states $n$ in advance and rely on efficient heuristic or
% grammatical inference algorithms such as RPNI \cite{rpni} and
% EDSM~\cite{10.1007/BFb0054059} to construct a consistent automaton. Similarly,
% we offer the option to either predefine the sets of states, registers, and
% (un)interpreted constants in advance, or to automatically guess these
% parameters from the beginning. The latter choice, however, naturally incurs
% higher computational cost and resource usage.
As with conventional DFA learning algorithms, we systematically enumerate the
number of states, registers, and (un)interpreted constants. Given these
parameters, we introduce Boolean variables for each possible transition
$(p, \varphi, \psi, q)$, where $\varphi$ and $\psi$ respectively represent the guard and
assignment. Structural constraints can be imposed on these transitions—for
instance, restricting the number of outgoing transitions per state. To guarantee
determinism, we enforce mutual exclusivity among guards: for any two guards
$\varphi_1$ and $\varphi_2$ that are both satisfied by a given input data, only one may be
active for the same source state. All formulas are constructed over the theory
of $(\mathbb{Q};{<},{=})$.

The final step is to encode the automata’s \emph{consistency} with the labelled
sample set $S = (S_+, S_-)$.  We begin by enforcing that the empty sequence
reaches a designated \emph{start state}, and that every prefix of any word in
$S_+$ reaches exactly one state.  Next, we define the \emph{run} of a sequence
on the automata: if a transition $(p, \varphi, \psi, q)$ exists and a sequence
$u$ reaches state $p$, then whenever the guard $\varphi(\bar{r}_u, a)$ is satisfied,
the extended sequence $ua$ must reach state $q$ under the register update
$\psi$.  Conversely, if the same conditions hold but the guard
$\varphi(\bar{r}_u, a)$ is not satisfied, the transition to $q$ is disallowed.
Finally, we impose acceptance constraints requiring that all states reached by
words $u \in S_+$ are accepting, while those reached by words $v \in S_-$ are
rejecting.

Finally, we use an SMT solver to check the satisfiability of the resulting
formula containing all constraints; if it is satisfiable, the corresponding
model directly yields the synthesised automata.  The correctness of this
construction is established in the following theorem, and the full encoding
together with the proof is provided in the supplementary material.

\begin{theorem}\label{lem:var}
	Let $S = (S_+, S_-)$ be a sample set. Let $n \geq 1$ be the
	number of states, $c$ the number of constants, $k$ the number of registers,
	and $\varphi_{n, k, c}$ the formula encoding the $k$-DRA synthesis problem. If
	$\varphi_{n, k, c}$ is satisfiable and $\nn \models \varphi_{n, k, c}$ is a model of the formula,
	then the automaton $\Aut_{\nn}$ constructed from $\nn$ is a
	$k$-DRA with $n$ states that is consistent with the sample set $S$.
\end{theorem}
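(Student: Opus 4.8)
The plan is to check that a satisfying model $\nn$ of $\varphi_{n,k,c}$ gives rise to an automaton whose syntax and semantics are exactly those prescribed by the encoded constraints. First I would read off $\Aut_{\nn}$ from $\nn$: the transition Boolean variables set to true by $\nn$ determine the transition relation $\transrel$, the fixed designated start state determines $q_0$, and the acceptance variables of $\nn$ determine $\finals$. Because guards $\varphi$ range over the finite lattice of complete types over $\bar{r}$, $\curr$, and the constants, and assignments $\psi$ range over the finite set of admissible register updates, every component produced this way is well-typed; hence $\Aut_{\nn}$ is a syntactically valid register automaton with exactly $n$ states and $k$ registers. Determinism is immediate from the mutual-exclusivity clauses: whenever two guards are simultaneously satisfiable by some input from a common source state, the formula forbids both corresponding transitions from being active, so at most one outgoing transition fires on any concrete input.

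The core of the argument is a correspondence lemma relating the formula's symbolic relation ``word $u$ reaches state $p$ with register valuation $\bar{r}_u$'' to the genuine operational run of $\Aut_{\nn}$ on $u$. I would establish this by induction on $|u|$. The base case is the empty sequence, which the formula forces to the start state with all registers initialised to $0$, matching the register-automaton semantics. For the inductive step, assume $u$ reaches $p$ with valuation $\bar{r}_u$ in both the symbolic and operational senses. The run clauses then assert that for any active transition $(p, \varphi, \psi, q)$ with $\varphi(\bar{r}_u, a)$ satisfied, $ua$ reaches $q$ under the update $\psi$, while the converse clauses forbid reaching $q$ when the guard fails. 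This is precisely one step of the DRA, and determinism makes the successor unique, so the symbolic and operational runs agree on $ua$.

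With the correspondence in hand I would invoke the acceptance constraints. Each prefix of a word $u \in S_+$ is forced to reach exactly one state and the final state is required to be accepting, so by the correspondence lemma the genuine run of $\Aut_{\nn}$ on $u$ terminates in $\finals$ and $u \in L(\Aut_{\nn})$. Symmetrically, each $v \in S_-$ is driven to a rejecting state and is excluded from $L(\Aut_{\nn})$. Together these establish consistency with $S = (S_+, S_-)$, completing the proof.

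I expect the main obstacle to be the correspondence lemma, and within it the fidelity of guard evaluation to concrete register contents along a run. The delicate point is that guards are symbolic complete types over $(\Q; <, =)$: one must argue that for every concrete register valuation and input letter exactly one complete type is realised, so that the ``if'' and ``only if'' run clauses together pin down a unique and well-defined successor state and register update. Any slack here would either violate determinism or break the faithful simulation of the register updates, so this is where the encoding's design must be shown to do exactly the right bookkeeping.
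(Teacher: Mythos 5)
Your proposal is correct and follows essentially the same route as the paper's proof: an induction on word length establishing that the operational run of $\Aut_{\nn}$ agrees with the truth values of the reachability variables $x_{u,q}$ (with the base case fixed by the initialisation constraint and the step by the run clauses), followed by an appeal to the acceptance constraints to conclude consistency with $S_+$ and $S_-$. Your additional attention to determinism and to tracking register valuations alongside states makes the correspondence slightly more explicit than the paper's version, but it is the same argument.
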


\subsection{Synthesis Based on Local Search}
Our second learning method is a local search heuristic.  For fixed numbers of
states/registers and constants, the search space of compatible DRAs is finite;
any such DRA consistent with the samples is a feasible solution.
% In this section, we present hill climbing as a representative example.
%
Our procedure navigates the search space by applying \emph{mutation operations}
to refine candidate automata and using a \emph{score function} to assess their
performance.
% \begin{definition}\label{def:score}
%   Given a $k$-DRA $\mathcal{A}$ and a sample $S = (S_+, S_-)$. The scoring function is
%   defined as:
	%
%   \[
% \text{score}(\mathcal{A}, S) = \frac{|S_+ \cap L(\mathcal{A})| + |S_- \setminus L(\mathcal{A})|}{|S_+| + |S_-|}
% \]
	%
% \end{definition}
% \tony{I think def. 5 and 6 can be combined and if necessary, def. 4.}
More precisely, for a DRA $\Aut = (\controls, \transrel, q_0, \finals)$ and a
sample set $S = (S_+, S_-)$. We define a score for $\mathcal{A}$ and $S$ as
\[
  \text{score}(\mathcal{A}, S) = \frac{|S_+ \cap L(\mathcal{A})| + |S_- \setminus L(\mathcal{A})|}{|S_+| + |S_-|}.
\]
The scoring function computes the accuracy as the ratio of correctly classified
samples to the total samples, counting accepted positives and rejected
negatives. For mutations, over a search space $\mathcal{D}$ and automata states
$\controls$ we define an operation $op_f: \mathcal{D} \times Q \to \mathcal{D}$ as
$op_f(\Aut, q) = (\controls, \transrel, q_0, \finals')$, where
\[
  \finals' =
  \begin{cases}
    \finals \cup \{q\}, & \text{if } q \notin \finals \text{ (adding $q$ as a final state)}; \\
    \finals \setminus \{q\}, & \text{if } q \in \finals \text{ (removing $q$ as a final state)}.
  \end{cases}
\]
% \begin{definition}\label{def:final_op}
%   Let $\Aut = (\controls, \transrel, q_0, \finals)$ be a $k$-DRA, and let
%   $q_n \in \controls$ be a state in the automaton. We define the operator
%   $op_f: \mathcal{D} \times Q \to \mathcal{D}$ as follows:
%   \[
%     op_f(\Aut, q_n) = \Aut' = (\controls, \transrel, q_0, \finals'),
%   \]
%   where:
%   \[
%     \finals' =
%     \begin{cases}
% \finals \cup \{q_n\}, & \text{if } q_n \notin \finals \text{ (adding a final state)}, \\
% \finals \setminus \{q_n\}, & \text{if } q_n \in \finals \text{ (removing a final
% state)}.
% \end{cases}
% \]
% \end{definition}
Essentially, $op_f$ modifies the given automaton by switching the accepting
condition for state $q$.  This allows the search procedure to explore different
acceptance conditions without modifying the transition structure.  Similarly,
for a state $q \in Q$, we define an operation to modify the transitions
\[
  op_\Delta : \mathcal{D} \times Q \to \mathcal{D}, \qquad op_\Delta(\Aut, q) = (Q, \transrel', q_0, \finals),
\]
where
\[
  \transrel' = \bigl(\transrel \setminus \{\, (q, \varphi, \psi, q') \in \transrel \,\}\bigr) \;\cup\;
  \bigcup_{i=1}^{m} \{\, (q, \varphi_i, \psi_i, q_i) \,\}.
\]
The guards $\varphi_1, \ldots, \varphi_m$ are \emph{deterministic}, where
$m \ge 1$ denotes the number of newly introduced outgoing transitions from $q$.
Each guard $\varphi_i$ and update $\psi_i$ is drawn from the same \emph{finite set of
  atoms} as the SMT-based synthesis encoding, which is induced by
the fixed number of registers and constants. Determinism is guaranteed by
construction: the search space of candidate transitions is pre-computed to
contain only deterministic transition sets.
% Thus, $op_\Delta$ simply selects transitions from this deterministic pool,
% avoiding additional consistency checks and redundant exploration
% of non-deterministic combinations.
% Informally, $op_\Delta$ replaces all outgoing transitions
% of the chosen state $q$ with a fresh, deterministic set of transitions.
% We first remove every transition whose source is $q$, and then insert $m$ new
% transitions
% $(q, \varphi_i, \psi_i, q_i)$.
% The number $m$ is determined by the search procedure
% (e.g., by how finely the input space is partitioned at $q$). \julian{should we
% expand on this informal description or remove it?}
		
The algorithm iteratively applies $op_f$ and $op_\Delta$ to adjust the structure of
the candidate automaton and improve its score on the sample set. While we only
implement two types of mutation operations in our evaluation, other operations,
like those changing the number of states or registers, can be defined and
integrated into this framework.
% allowing the search to be tailored to the specific needs and constraints of
% the task at hand.
% they are sufficient to serve as a proof of concept in our evaluation.
		%
% By iteratively applying the operators $op_f$ and $op_\Delta$, the search procedure
% navigates through different candidate automata, modifying their structure and
% behavior to improve consistency with the given sample.  One could define
% additional operators, such as increasing the number of states or registers in
% the automaton.  However, we chose to leave these parameters as user-defined
% inputs, allowing the user to guide the search process more directly.  This
% approach provides greater flexibility and ensures that the search is tailored
% to the specific needs and constraints of the task at hand.
		%
Various search strategies have been employed to instantiate the search
procedure; Algorithm~\ref{alg:local_search} presents an instantiation using
\emph{hill climbing} \cite{localsearch}.
\begin{algorithm}
  \caption{DRA Synthesis Based on Local Search}\label{alg:local_search}
  \KwInput{ Sample set $S = (S_+, S_-)$; initial automaton $\Aut_0$;
    \textit{max\_time}: maximum allowable execution time;
    \textit{max\_iteration}: maximum number of iterations} \KwOutput{Best
    automaton $\Aut^*$ found}
			
  $\Aut^* \gets \Aut_0$\\
  $\textit{best\_score} \gets \text{score}(\Aut^*, S)$\\
  \Repeat{best\_score\ is\ 1,\ or\ max\_time\ or\ max\_iteration\ is\ exceeded}{
    randomly select $op \in \{op_f, op_\Delta\}$ \\
    randomly select $q \in Q_{\Aut^*}$\\
    $\Aut'  \gets op(\Aut^*,q)$\\
    $\textit{new\_score} \gets \text{score}(\Aut', S)$\\
    \If{$\textit{new\_score} > \textit{best\_score}$}{
      $\Aut^* \gets \Aut'$\\
      $\textit{best\_score} \gets \textit{new\_score}$ } } \Return{$\Aut^*$}\;
\end{algorithm}

\subsection{Synthesis Based on Active Learning}
		
Our third learning method wraps
\textsc{RALib}~\cite{10.1007/978-3-319-10431-7_18,10.1007/978-3-031-57249-4_5}
within a query-driven learning loop.  In this setting, \textsc{RALib} acts as an
$L^*$-style learner~\cite{ANGLUIN198787} for register automata, where the neural
network $\nn$ serves as the \emph{membership oracle}, and the \emph{equivalence
oracle} is based on behavioural equivalence as incorporated in \prettyref{alg:approx_eq},
which provides a statistical guarantee of agreement with $\nn$ whenever it accepts.
		
We instantiate \textsc{RALib} with the \((\mathbb{Q};{<},{=})\)-theory used by our DRAs:
for a number of registers $k$ learned so far, this learner enumerates the atoms
(total orderings and equalities among $\overline{r}$ and $curr$) to form a
finite partitioning of the input domain, which mirrors the lattice of guards
exploited by our SMT encoding. For each atom, the learner produces concrete
rationals that realise it under the current register valuation, such as
midpoints between known values for strict inequalities, and copies of an
existing register value for equalities. This lets the learner translate symbolic
queries into actual membership queries to $\nn$.
		
Compared with the passive methods, the active learner constructs its evidence
\emph{on the fly} by issuing queries to oracles. It incrementally refines its
hypothesis, adding states and registers and updating guards only when
counterexamples necessitate it. A current limitation, however, is that
\textsc{RALib} cannot synthesise DRAs whose transition guards contain
uninterpreted constants; this capability is naturally supported by the SMT-based
and the search-based synthesis approaches.

\subsection{Extracting Robust DRAs}\label{sec:extracting}

We now combine our learning procedures with a symbolic robustness checker
to provide \emph{certified} surrogates for a neural network.
This checker is model-agnostic: both the active and passive learning algorithms interface with a common certification routine, \prettyref{alg:approx_eq},
which treats the neural network~$\mathcal N$ as a black box and evaluates the hypothesis solely through membership queries.
Given a candidate DRA $\mathcal A$ and a distribution $\mathcal D$ over inputs,
\prettyref{alg:approx_eq} performs a statistical equivalence test and verifies local $\delta$-\emph{stability} of $\mathcal A$ using the decision procedure from \prettyref{sec:robust}.
When robustness fails, the algorithm distinguishes between a \emph{spurious} counterexample
(used to refine $\mathcal A$) and a \emph{genuine} robustness violation of $\mathcal N$, which we report to the user.

\begin{algorithm}
	\caption{Robust DRA
		Synthesis} %$(\mathcal N,\mathcal A,p,\varepsilon,\gamma,\delta)$ (equivalence + \emph{two-sided} robustness)}
\label{alg:approx_eq}

\KwIn{Neural network $\nn $; distribution $\mathcal D$; DRA $\Aut$;
	target agreement threshold $p$; error probability $\varepsilon$; accuracy tolerance
	$\gamma$; robustness radius $\delta$}
\KwOutput{Counterexamples $cexs$/$cex$ for refining $\Aut$, or a witness of non-robustness of $\nn$}

$n \leftarrow \lceil \ln({1}/{\varepsilon}) / ({2\gamma^{2}}) \rceil$\; $d_{\max} \leftarrow \lfloor n(1-p)\rfloor$\; $cexs\leftarrow \emptyset$\;

\For{$i\leftarrow 1$ \KwTo $n$}{ draw a sample $w_i$ according to $\mathcal D$\;
	
	\If{$\nn(w_i)\neq\Aut(w_i)$}{ add $w_i$ to $cexs$\;
		\If{$|cexs| > d_{\max}$}{\Return $cexs$ {to the learner for refinement}} }
	\Else{
		% labels agree; check \emph{both} sides
		\If{there is $cex \in B_\delta(w_i)$
			s.t.~$\mathcal A(cex) \neq \mathcal A(w_i)$}{
			\If{$\nn(w_i) = \nn(cex)$}{ \Return $cex$ {to the learner
					for refinement} % spurious for R
			} \Else{ \Return $(w_i,cex)$ as a witness of non-robustness of
				$\nn$ } } } } \Return \textsc{Accept}
				\end{algorithm}
				\paragraph{Equivalence checking.}
				Exact equivalence against a black-box model over an infinite input domain is
				often unavailable (e.g.~no equivalence oracle) or intractable. What matters
				operationally is agreement on \emph{typical} inputs, captured by the sampling
				distribution~$\mathcal D$. Our equivalence phase therefore adopts a
				probably-approximately-correct (PAC) view~\cite{kearns1994introduction}: we replace the unavailable
				equivalence oracle with i.i.d.\ draws from~$\mathcal D$ and accept only if the
				empirical disagreement stays below a threshold. A one-sided Hoeffding bound then
				lifts this test to a distribution-level guarantee on the true agreement rate
				with high confidence.
				
				% The procedure draws $n=\lceil \ln(1/\varepsilon)/(2\gamma^2)\rceil$ i.i.d.\ samples from
				% $\mathcal D$ and accepts only if the number of discrepancies between
				% $\mathcal A$ and $\mathcal N$ does not exceed
				% $d_{\max}=\lfloor n(1-p)\rfloor$.  This is a one-sided Chernoff/Hoeffding test: upon
				% acceptance, the true disagreement probability is at most $1-p+\gamma$ with
				% confidence $1-\varepsilon$.  We summarize this fact as below.
				
				\begin{theorem}[Equivalence guarantee]\label{thm:approx-eq}
Let $p,\varepsilon,\gamma\in(0,1)$ with $\gamma<1-p$.  Run Algorithm~\ref{alg:approx_eq} on neural
network $\nn$ and DRA $\Aut$, with
$n=\lceil \ln(1/\varepsilon)/(2\gamma^2)\rceil$ and
$d_{\max}=\lfloor n(1-p)\rfloor$.  If the algorithm returns \textsc{Accept}, then, with
probability at least $1-\varepsilon$ over its random sampling process,
\[
\Pr_{w\sim\mathcal D}\bigl[\nn(w)=\mathcal A(w)\bigr]\;\ge\; p-\gamma.
\]
\end{theorem}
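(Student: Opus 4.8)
The plan is to run a standard one-sided PAC analysis via Hoeffding's inequality, framed as a bound on the \emph{false-accept} probability. First I would fix the hypothesis $\Aut$ and the network $\nn$, and for each iteration $i$ define the disagreement indicator $X_i := \mathbf{1}[\nn(w_i)\neq\Aut(w_i)]$. Since the samples $w_i$ are i.i.d.\ draws from $\mathcal D$, the $X_i$ are i.i.d.\ Bernoulli with parameter $q := \Pr_{w\sim\mathcal D}[\nn(w)\neq\Aut(w)] = 1 - \Pr_{w\sim\mathcal D}[\nn(w)=\Aut(w)]$. The desired conclusion $\Pr[\nn(w)=\Aut(w)]\ge p-\gamma$ is equivalent to $q \le 1-p+\gamma$, so it suffices to show that the probability of the event in which the algorithm returns \textsc{Accept} while $q > 1-p+\gamma$ is at most $\varepsilon$.

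The next step is to reduce the \textsc{Accept} event to a counting condition on the $X_i$. Inspecting Algorithm~\ref{alg:approx_eq}, every path that returns before the loop finishes yields a counterexample set or a non-robustness witness, not \textsc{Accept}; the only route to \textsc{Accept} requires completing all $n$ iterations, which in particular forces the running disagreement count $|cexs|$ to stay at most $d_{\max}$ throughout. Hence \textsc{Accept} implies $\sum_{i=1}^n X_i \le d_{\max}$. A subtle point I must handle here is that the robustness-checking branch can terminate the loop before iteration $n$, so on some runs fewer than $n$ samples are actually drawn; however, because \textsc{Accept} is issued only after iteration $n$ completes, it is a sub-event of $\{\sum_{i=1}^n X_i \le d_{\max}\}$ in the model where all $n$ i.i.d.\ samples are drawn upfront. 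Therefore $\Pr[\textsc{Accept}\ \wedge\ q>1-p+\gamma]\le\Pr[\sum_{i=1}^n X_i \le d_{\max}]$ evaluated at any $q>1-p+\gamma$.

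Finally I would apply the one-sided Hoeffding bound to the empirical mean $\bar X = \tfrac1n\sum_{i=1}^n X_i$, whose expectation is $q$. Since $d_{\max}=\lfloor n(1-p)\rfloor \le n(1-p)$, the event $\{\sum_i X_i\le d_{\max}\}$ is contained in $\{\bar X \le 1-p\}$; and when $q>1-p+\gamma$ we have $1-p < q-\gamma$, so this is in turn contained in $\{\bar X \le q-\gamma\}$. Hoeffding then gives
\[
\Pr\bigl[\bar X \le q-\gamma\bigr] \;\le\; e^{-2n\gamma^2}.
\]
Plugging in $n=\lceil \ln(1/\varepsilon)/(2\gamma^2)\rceil \ge \ln(1/\varepsilon)/(2\gamma^2)$ yields $e^{-2n\gamma^2}\le e^{-\ln(1/\varepsilon)}=\varepsilon$, so the false-accept probability is at most $\varepsilon$, which is exactly the stated guarantee.

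I expect the main obstacle to be the early-termination bookkeeping in the second step: one must argue carefully that interleaving the i.i.d.\ sampling with the decision logic---especially the robustness branch, which draws no extra constraint on the $X_i$---does not invalidate the Hoeffding analysis. The key observation that makes this routine is that \textsc{Accept} only ever \emph{strengthens}, never weakens, the disagreement-count constraint $\sum_i X_i\le d_{\max}$ used in the tail bound, so passing to the clean ``draw all $n$ samples upfront'' model can only overestimate the acceptance probability.
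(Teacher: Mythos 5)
Your proposal is correct and follows essentially the same route as the paper's proof: the same disagreement indicators $X_i$, the same reduction of \textsc{Accept} to the counting event $\sum_i X_i \le d_{\max} \le n(1-p)$, and the same one-sided Hoeffding bound with $n=\lceil \ln(1/\varepsilon)/(2\gamma^2)\rceil$. Your careful handling of the early-termination branches matches the paper's observation that those branches can only decrease the acceptance probability, so nothing is missing.
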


\begin{proof}[Proof Sketch] (See supplementary material for the full
proof) Let $X_i=\mathbf 1[\mathcal N(w_i)\neq\mathcal A(w_i)]$ for i.i.d.\
$w_i\sim\mathcal D$ and let $\widehat q=\frac1n\sum_{i=1}^n X_i$.  Acceptance
implies $\widehat q\le 1-p$.  Hoeffding’s inequality yields
$\Pr(\widehat q\le 1-p \mid q>1-p+\gamma)\le e^{-2n\gamma^2}\le \varepsilon$, which is equivalent to the
stated bound.
\end{proof}

\paragraph{Robustness checking.}

Algorithm~\ref{alg:approx_eq} further couples the approximate equivalence
checking with a local robustness check on every agreed sample.  By piggybacking
on the same i.i.d.\ draws, it translates ``no violation found'' across the $n$
checks into a quantitative upper bound on the mass of locally non-stable agreed
points, thereby preventing accepting a hypothesis that is point-wise correct yet
brittle in high-density regions.

Concretely, for every sampled $w$ on which $\mathcal A$ and $\nn$ agree,
Algorithm~\ref{alg:approx_eq} invokes the robustness checker from
Sec.~\ref{sec:robust} to decide whether there exists $w'\in B_\delta(w)$ with
$\mathcal A(w')\neq \mathcal A(w)$; if such a $w'$ also flips $\mathcal N$'s label
we obtain a certified non-robustness witness for $\mathcal N$, otherwise the
witness is spurious and is fed back to the learner to refine $\mathcal A$.  For
positive samples, this amounts to searching for
$w'\in B_\delta(w)\cap L(\mathcal A^c)$; for negative samples, searching for
$w'\in B_\delta(w)\cap L(\mathcal A)$.
% The symbolic subroutine is identical up to composing the RAA either with
% $\mathcal A^c$ (for positives) or with $\mathcal A$ (for negatives).
The symbolic subroutine from Sec.~\ref{sec:robust} implements this by composing
$A_D$ with $\mathcal A^c$ when $\mathcal A(w)=1$ and with $\mathcal A$ when
$\mathcal A(w)=0$, and then reducing to bounded projected coverability /
shortest path in polynomial time for fixed $k$.

In the following, we use the predicate
$ \mathrm{Stab}_\delta(\mathcal A,w) \overset{\text{def}}{\equiv} \forall w'\in B_\delta(w): \mathcal
A(w')=\mathcal A(w) $ to indicate that $\mathcal A$ is \emph{$\delta$-stable at
$w$}, namely, its classification does not change for points inside $B_\delta(w)$.
% This notion unifies the one-sided robustness in Definition~\ref{def:robust}:
% if $\mathcal A(w)=1$ then $\mathrm{Stab}_\delta(\mathcal A,w)$ is precisely
% ``$L(\mathcal A)$ is $\delta$-robust at $w$,'' i.e.,
% $B_\delta(w)\cap {L(\mathcal A^c)}=\varnothing$; if $\mathcal A(w)=0$ it amounts to
% $B_\delta(w)\cap L(\mathcal A)=\varnothing$, i.e., ``$L(\mathcal A^c)$ is
% $\delta$-robust at $w$.''  Algorithm~\ref{alg:approx_eq} checks
% $\mathrm{Stab}_\delta$ uniformly by asking whether there exists
% $w'\in B_\delta(w)$ with $\mathcal A(w')\neq \mathcal A(w)$.

\begin{theorem}[Robustness guarantee]\label{thm:robust-matching-positives}
Let
\[
\lambda \ \coloneqq\ \Pr_{w\sim \mathcal D}\big[\,\mathcal A(w)=\nn(w)\ \wedge\ \neg
\mathrm{Stab}_\delta(\mathcal A,w)\,\big].
\]
Suppose Algorithm \ref{alg:approx_eq} runs on $(\nn, \mathcal A)$ with
sample size $n$.  For any confidence parameter $\eta\in(0,1)$, if the procedure
accepts then, with confidence at least $1-\eta$, it holds that
\[
\lambda \ \le\ 1-\eta^{1/n}.
\]
\end{theorem}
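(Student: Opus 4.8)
The plan is to reduce acceptance of Algorithm~\ref{alg:approx_eq} to a zero-failure event over the $n$ i.i.d.\ samples, and then read the conclusion off as an exact one-sided binomial confidence bound.

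First I would characterise acceptance combinatorially. Write
\[
B \;:=\; \{\, w : \mathcal A(w)=\nn(w)\ \wedge\ \neg\mathrm{Stab}_\delta(\mathcal A,w)\,\},
\]
so that $\lambda=\Pr_{w\sim\mathcal D}[w\in B]$. I claim that if the algorithm returns \textsc{Accept}, then none of the drawn $w_1,\dots,w_n$ lies in $B$. Tracing the loop body: on an agreed sample ($\mathcal A(w_i)=\nn(w_i)$) the algorithm calls the robustness checker of Section~\ref{sec:robust}, which by Theorem~\ref{theo:robustness} correctly decides whether some $cex\in B_\delta(w_i)$ flips $\mathcal A$ and produces such a witness whenever one exists; hence it fires exactly when $\neg\mathrm{Stab}_\delta(\mathcal A,w_i)$, i.e.\ exactly when $w_i\in B$, and in that case the algorithm returns a counterexample or a non-robustness witness instead of accepting. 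Disagreement samples are never in $B$ and only accrue to $cexs$, so they can block acceptance but never trigger a false accept. Therefore $\{\textsc{Accept}\}\subseteq\{\,\forall i:\ w_i\notin B\,\}$.

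Next I would invoke independence. Since the $w_i$ are i.i.d.\ from $\mathcal D$,
\[
\Pr[\textsc{Accept}]\;\le\;\Pr[\,\forall i:\ w_i\notin B\,]\;=\;(1-\lambda)^n.
\]
Finally I would convert this into the stated confidence guarantee. Treating $\lambda$ as a fixed but unknown quantity, consider the failure event ``the algorithm accepts while $\lambda>1-\eta^{1/n}$''. If $\lambda\le 1-\eta^{1/n}$ this event is empty; otherwise $1-\lambda<\eta^{1/n}$, so $(1-\lambda)^n<\eta$ and the displayed bound gives $\Pr[\textsc{Accept}]<\eta$. In either case the failure event has probability below $\eta$, which is precisely the assertion that, conditioned on acceptance, $\lambda\le 1-\eta^{1/n}$ holds with confidence at least $1-\eta$. (Equivalently, $1-\eta^{1/n}$ is the exact one-sided Clopper--Pearson upper confidence bound for a binomial proportion after observing zero ``bad'' draws in $n$ trials.)

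The step I expect to be the main obstacle is the first: matching the algorithm's control flow precisely to membership in $B$. This relies on the \emph{completeness} of the Section~\ref{sec:robust} oracle---so that \emph{every} agreed-but-unstable sample is caught and none slips through to \textsc{Accept}---and on verifying that the disagreement branch cannot induce a spurious accept. The remaining two steps are a one-line product bound and the routine reinterpretation of a probability bound as a confidence statement, so I expect them to go through without difficulty.
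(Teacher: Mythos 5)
Your proposal is correct and follows essentially the same route as the paper's proof: your set $B$ is exactly the event whose indicators $Z_i$ the paper defines, the inclusion $\{\textsc{Accept}\}\subseteq\{\forall i:\ w_i\notin B\}$ is the paper's observation that acceptance forces $Z_1=\cdots=Z_n=0$, and the product bound $\Pr[\textsc{Accept}]\le(1-\lambda)^n$ followed by the threshold argument at $\lambda_\star=1-\eta^{1/n}$ is identical. Your added care on the completeness of the Section~\ref{sec:robust} checker and on the disagreement branch only being able to block (never cause) acceptance is a sound elaboration of the paper's ``by construction'' step.
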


\begin{proof}[Proof Sketch] (See supplementary material 
%	\prettyref{app:missing-proofs} 	
for the full
proof) For each draw $w_i\sim \mathcal D$, we define
$$Z_i \coloneqq \mathbf{1}\!\left[\mathcal A(w_i)=\nn(w_i)\ \wedge\ \neg\mathrm{Stab}_\delta(\mathcal A,w_i)\right].$$
By construction, Algorithm~\ref{alg:approx_eq} would return a witness (and hence
not accept) as soon as some $Z_i=1$.  Thus acceptance implies $Z_1=\dots=Z_n=0$,
so $\Pr[\text{\textsc{Accept}}]=(1-\lambda)^n$.  If $\lambda>1-\eta^{1/n}$ then
$\Pr[\text{\textsc{Accept}}]<\eta$, which proves the claim.
\end{proof}

The notion of $\delta$-stability unifies the one-sided robustness in Definition~\ref{def:robust}: if $\mathcal A(w)=1$, then
$\mathrm{Stab}_\delta(\mathcal A,w)$ is precisely ``$L(\mathcal A)$ is $\delta$-robust at $w$,'' i.e.,
$B_\delta(w)\cap {L(\mathcal A^c)}=\varnothing$. If $\mathcal A(w)=0$, then it amounts to
$B_\delta(w)\cap L(\mathcal A)=\varnothing$, i.e., ``$L(\mathcal A^c)$ is $\delta$-robust at $w$.''
%\prettyref{alg:approx_eq} checks $\mathrm{Stab}_\delta$ uniformly by asking whether there exists
%$w'\in B_\delta(w)$ with $\mathcal A(w')\neq \mathcal A(w)$.
This two-sided variant strengthens the local robustness guarantee to all agreed points, enabling per-class bounds for binary classification.

\begin{corollary}[Per-class bounds]\label{cor:per-class}
Let $m_+$ (resp.\ $m_-$) be the number of indices among the $n$ draws with
$\mathcal A(w_i)=\nn(w_i)=1$ (resp.\ $=0$).  Define
\[
\theta_+ \coloneqq \Pr\!\left[\,\neg\mathrm{Stab}_\delta(\mathcal A,w)\;\middle|\;
\mathcal A(w)=\nn(w)=1\,\right],\quad \theta_- \coloneqq
\Pr\!\left[\,\neg\mathrm{Stab}_\delta(\mathcal A,w)\;\middle|\; \mathcal
A(w)=\nn(w)=0\,\right].
\]
Upon acceptance and for any $\eta_+,\eta_-\in(0,1)$ with $\eta_++\eta_-=\eta$, the following
hold simultaneously with confidence at least $1-\eta$ (whenever $m_\pm>0$):
\[
\theta_+ \le 1-\eta_+^{1/m_+} \qquad\text{and}\qquad \theta_- \le 1-\eta_-^{1/m_-}.
\]
In particular, taking $\eta_+=\eta_-=\eta/2$ yields the symmetric bounds
$\theta_\pm \le 1-(\eta/2)^{1/m_\pm}$.
\end{corollary}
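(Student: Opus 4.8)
The plan is to lift the single-bound argument of Theorem~\ref{thm:robust-matching-positives} to a per-class statement by conditioning on the realised agreement pattern and then combining the two classes with a union bound. Concretely, for each draw $w_i$ I would introduce the indicators $A_i^+ = \mathbf{1}[\mathcal{A}(w_i)=\nn(w_i)=1]$, $A_i^- = \mathbf{1}[\mathcal{A}(w_i)=\nn(w_i)=0]$, and $N_i = \mathbf{1}[\neg\mathrm{Stab}_\delta(\mathcal{A},w_i)]$, so that the witness indicator $Z_i$ from Theorem~\ref{thm:robust-matching-positives} factors as $Z_i = A_i^+ N_i + A_i^- N_i$ (the two agreement indicators being mutually exclusive). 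As before, Algorithm~\ref{alg:approx_eq} returns a witness---and hence fails to accept---the moment some $Z_i=1$; thus acceptance forces $N_i=0$ for every $i$ with $A_i^+=1$ and for every $i$ with $A_i^-=1$.

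The main step is to handle the fact that $m_+=\sum_i A_i^+$ and $m_-=\sum_i A_i^-$ are random, while the data-dependent thresholds $1-\eta_\pm^{1/m_\pm}$ sit in the exponent. I would condition on the event $\mathcal{E}$ that fixes, for each index, whether it is agreed-positive, agreed-negative, or a disagreement; this pins down $m_+$ and $m_-$. Since the draws are i.i.d.\ and the class of $w_i$ is a deterministic function of $w_i$ (given $\mathcal{A}$ and $\nn$), conditioning on $\mathcal{E}$ leaves the $w_i$ independent, each distributed as $\mathcal{D}$ restricted to its prescribed class. Consequently the non-stability events over the agreed-positive indices are independent Bernoulli trials whose parameter is, by definition of $\theta_+$, exactly $\theta_+$ (respectively $\theta_-$ over the agreed-negative indices). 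This yields the conditional acceptance bounds $\Pr[\textsc{Accept}\mid \mathcal{E}] \le (1-\theta_+)^{m_+}$ and, symmetrically, $\Pr[\textsc{Accept}\mid\mathcal{E}]\le (1-\theta_-)^{m_-}$.

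From here the argument is purely algebraic and mirrors Theorem~\ref{thm:robust-matching-positives}: if $\theta_+ > 1-\eta_+^{1/m_+}$ then $1-\theta_+ < \eta_+^{1/m_+}$ and hence $(1-\theta_+)^{m_+} < \eta_+$. So conditioned on any $\mathcal{E}$ with $m_+>0$ for which the positive bound fails, acceptance occurs with probability below $\eta_+$; averaging over all such $\mathcal{E}$ gives $\Pr[\textsc{Accept}\ \wedge\ m_+>0\ \wedge\ \theta_+>1-\eta_+^{1/m_+}] \le \eta_+$, and likewise $\le \eta_-$ for the negative class. A union bound over the two failure events, together with the split $\eta_++\eta_-=\eta$, bounds the total probability of accepting while \emph{either} per-class bound is violated by $\eta$. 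Hence, conditioned on acceptance, both $\theta_+\le 1-\eta_+^{1/m_+}$ and $\theta_-\le 1-\eta_-^{1/m_-}$ hold simultaneously with confidence at least $1-\eta$; specialising to $\eta_+=\eta_-=\eta/2$ gives the symmetric bounds.

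The only delicate point is the conditioning step: one must verify that fixing the agreement pattern genuinely decouples the two classes and turns the per-class non-stability events into i.i.d.\ $\theta_\pm$-Bernoulli trials, so that the conditional acceptance probability factorises across classes. Once that is established, the thresholds $1-\eta_\pm^{1/m_\pm}$ drop out of the same exponent-matching computation used in Theorem~\ref{thm:robust-matching-positives}, and the union bound is routine. The caveat ``whenever $m_\pm>0$'' is precisely what keeps the data-dependent thresholds well defined; patterns with $m_+=0$ (or $m_-=0$) impose no positive-class (resp.\ negative-class) constraint and therefore contribute nothing to the corresponding failure event.
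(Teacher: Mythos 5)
Your proposal is correct and follows essentially the same route as the paper: condition on the realised agreement pattern to fix $m_\pm$, observe that the per-class instability counts are then $\mathrm{Binomial}(m_\pm,\theta_\pm)$ and that acceptance forces both counts to zero, and finish with the exponent-matching calculation and a union bound over the two classes. The only difference is that you spell out the conditioning/decoupling step in more detail than the paper, which states it in one line.
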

\begin{proof}
Condition on $(m_+,m_-)$. The number of instability events among the $m_+$
(resp.\ $m_-$) agreed positives (resp.\ negatives) is
$\mathrm{Binomial}(m_+,\theta_+)$ (resp.\ $\mathrm{Binomial}(m_-,\theta_-)$).
Acceptance implies zero such events in each group, hence
$\Pr[\text{\textsc{Accept}}\mid m_+]\le (1-\theta_+)^{m_+}$ and
$\Pr[\text{\textsc{Accept}}\mid m_-]\le (1-\theta_-)^{m_-}$.  A union bound over the two
groups gives the stated simultaneous bounds.
\end{proof}

%!TEX root = main.tex

\section{Experimental Evaluation}\label{sec:exp}
To conduct our experimental evaluation, we present \toolName, a tool for
\emph{robust deterministic \textbf{R}egister \textbf{A}utomata
	\textbf{Ex}tra\textbf{c}tion} from neural networks.
\toolName{} implements the DRA learning algorithms in \prettyref{sec:learning} and robust DRA synthesis Algorithm~\ref{alg:approx_eq} in Python.
It currently supports the last-letter, edit, Hamming, and Manhattan distance metrics.
We implemented the active-learning method on the publicly available \textsc{RALib} library; 
the SMT-based method employs \textsc{Z3}. We trained transformer (\nnTransformer) and LSTM (\nnLSTM) models in PyTorch as black-box neural networks, which served as the targets for learning and robustness verification.

We conducted experiments for multiple combinations of neural networks, learners, and distance metrics to answer the following research questions:
\begin{description}
	\item[\textbf{RQ1:}]\textrm{How effective are neural networks and DRA learning algorithms in identifying regular data languages?}
	\item[\textbf{RQ2:}]\textrm{How effective are the learned DRAs in assessing the robustness of black-box neural networks?}
\end{description} 

\paragraph{Benchmarks.}
We consider a benchmark suite consisting of 18 languages, summarised in
Table~\ref{tab:bench}. The suite builds upon the well-known
Tomita languages~\cite{weiss2018extracting,wang2018empirical,merrill2022extracting,tomita1982dynamic}
($L_1$–$L_7$) and extends them with a collection of real-valued data languages ($S_1$–$S_{11}$).
The latter define data sequences over the rationals~($\mathbb{Q}$) and
incorporate structural constraints such as ordering relations and monotonicity
patterns between successive elements.
Languages $S_5$–$S_8$ model oscillatory behaviours with alternating up-down
patterns. Specifically, $S_5$ and $S_6$ capture single-peak and single-valley
sequences, respectively, while $S_7$ and $S_8$ extend these to multiple
consecutive extrema. Languages $S_9$–$S_{11}$ are variants of the
motivating examples discussed in \prettyref{sec:hhhl}.
%We additionally consider a family of $k$-up-down sequence
%languages ($k \le 5$) that capture increasingly complex oscillations; to
%avoid redundancy, these variants are omitted from the table.

\begin{table}[h]
	\centering
	\caption{%
		Specifications of the benchmark languages.  $L_i$ are symbolic Tomita
		languages ($a \leq b$) and $S_i$ are real-valued sequence languages.  We denote
		$a,b \in \mathbb{Q}$ and $w \in \mathbb{Q}^*$ as free variables,
		$\mathrm{count}_x(w)$ the number of occurrences of symbol~$x$ in~$w$, and
		$x \equiv_3 y$ indicates congruence modulo~3.}\label{tab:bench}
	\begin{tabular}{ll}
		\toprule
		\textbf{Language} & \textbf{Description} \\
		\midrule
		$L_1$   & $a^*$ \\ 
		$L_2$   & $(ab)^*$ \\ 
		$L_3$   & $a^n b^m$, where $n$ is odd and $m$ is even \\
		$L_4$   & All sequences not repeating a symbol three times consecutively \\ 
		$L_5$   & $a(a|b)^*$, where $\mathrm{count}_a(w)$ and $\mathrm{count}_b(w)$ are even \\ 
		$L_6$   & $a(a|b)^*$, where $\mathrm{count}_a(w) \equiv_3 \mathrm{count}_b(w)$ \\ 
		$L_7$   & $a^+ b^* a^* b^*$ \\ 
		% \midrule
		$S_1$   & Strictly increasing sequences \\ 
		$S_2$   & Strictly decreasing sequences \\ 
		$S_3$   & Non-strictly decreasing sequences \\ 
		$S_4$   & Non-strictly increasing sequences \\
		$S_5$   & Single peak  \\ 
		$S_6$   & Single valley \\ 
		$S_7$   & Two peaks \\ 
		$S_8$   & Three peaks  \\
		$S_9$   & Higher highs and higher lows \\ 
		$S_{10}$ & Higher highs and lower lows \\ 
		$S_{11}$ & Lower highs and lower lows \\
		\bottomrule
	\end{tabular}
\end{table}

We generate the training and test data using Markov chains, following a common
approach in the literature~\cite{weiss2018extracting}.
Figure~\ref{exp:3} illustrates the sampling process for the Tomita
language~$L_1$, where $\mathcal{A}_1$ denotes the 1-DRA recognising the
language and $\mathcal{M}_1$ represents the associated Markov chain.
In~$\mathcal{M}_1$, the edges $(q_0, q_1)$ and $(q_1, q_1)$ correspond to
the transitions in $\mathcal{A}_1$.
Two additional outgoing edges from states $q_0$ and $q_1$ model the cases
where (1) the current value $curr$ is not between 0 and~5, or (2) $curr$
differs from the value stored in register $r_1$.
A run yields a positive sample if and only if it terminates in states $q_0$ or
$q_1$; otherwise, it produces a negative sample.
For each benchmark language, we perform repeated random walks on its
corresponding Markov chain until a sufficient number of samples is collected.
Each dataset contains 50{,}000 positive and 50{,}000 negative samples, with a
maximum word length of~50.

%We also generate samples at
%varying noise levels. For example, at a 10\% noise rate, 10\% of randomly chosen
%positive samples are labeled as negative, and 10\% of random negative samples
%are labeled as positive.
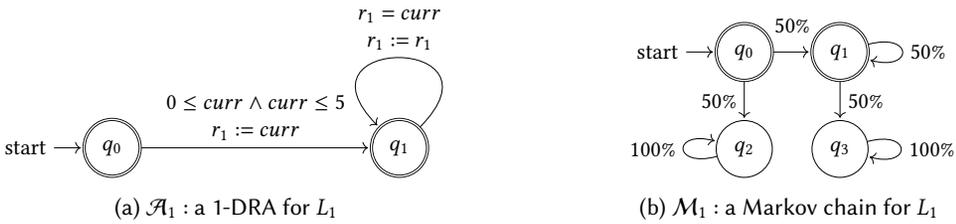
\begin{figure}[h]
	\centering
	\begin{subfigure}[b]{0.45\textwidth}
		\centering
		\scalebox{0.85}{
			\begin{tikzpicture}[shorten >=1pt,node distance=4.5cm,on grid, auto] 
				\node[state,initial,accepting] (q_0)   {$q_0$}; 
				\node[state, accepting] (q_1) [right=of q_0] {$q_1$}; 
				\path[->] 
				(q_0) edge [above, align=center] node {$0 \leq curr \land curr \leq 5 $\\
					$r_1 := curr$} (q_1)
				(q_1) edge  [loop, above, align=center] node {$r_1 = curr$\\
					$r_1 := r_1$} (q_1);
		\end{tikzpicture}}
		\caption{$\mathcal{A}_1$ : a 1-DRA for ${L}_1$} %accepting all repeating sequences of rational numbers between 0 and 5}
	\label{fig:exp-1}
\end{subfigure}
\hspace{3em}
\begin{subfigure}[b]{0.45\textwidth}
	\centering
	\scalebox{0.85}{
		\begin{tikzpicture}[shorten >=1pt,node distance=1.5cm,on grid, auto] 
			% Define the states
			\node[state, initial, accepting] (q0) {$q_0$};
			\node[state, right of=q0, accepting] (q1) {$q_1$};
			\node[state, below of=q0] (q2) {$q_2$};
			\node[state, below of=q1] (q3) {$q_3$};
			
			% Draw the transitions with adjusted curves
			\path[->] 
			(q0) 
			edge[ left] node[above,yshift=.5em] {50\%} (q1)
			edge[ right] node[left] {50\%} (q2)
			(q1) edge[loop right] node {50\%} (q1)
			edge[left] node[right] {50\%} (q3)
			(q2)
			edge[loop left] node {100\%} (q2)
			(q3) 
			edge[loop right] node {100\%} (q3);
	\end{tikzpicture}}
	\caption{$\mathcal{M}_1$ : a Markov chain for ${L}_1$}
	\label{fig:exp-mc}
\end{subfigure}
\caption{Illustration of sample generation using Markov chain.}
\label{exp:3}
\end{figure}
\paragraph{Experimental setup.}
All experiments targeted an agreement threshold of $p = 0.05$, an error rate
of $\varepsilon = 0.05$, and an accuracy tolerance of $\gamma = 0.05$ for statistical correctness guarantees,
with a neighbourhood radius of $\delta = 1$ for each distance metric.
We trained the LSTM and transformer models using binary cross-entropy loss
and the \textsc{AdamW} optimiser with a learning rate of~0.001 and a dropout rate
of~0.1. Each model was trained for up to 50{,}000 steps or terminated early once
the validation set achieved 100\% accuracy for three consecutive epochs.
The LSTM models used 2--4 recurrent layers with hidden sizes ranging from 64 to
256, while the transformer encoders employed 2--4 attention heads, 2--6 layers,
and model dimensions between 64 and 128.
All models were trained on four NVIDIA Quadro RTX~8000 GPUs.
All subsequent evaluations were conducted on a workstation equipped
with an Intel Core~i5-6600 CPU @~3.30~GHz and 32~GB of RAM.

%The passive learners, \learnSMT{} and \learnLS{}, are implemented as template-based synthesis procedures over the background theory~$\mathbb{Q}$. Each synthesis task uses a fixed template whose numbers of states and registers are predetermined from the formal description of the corresponding benchmark language. These templates correspond to the minimal deterministic register automata (DRA) possible for each language, where both interpretable constants and non-interpretable constants are used to instantiate transition constraints and guard constraints. In contrast, the active learner \learnActive{} interacts with the black-box
%neural network solely through membership and \textsc{ApproxEQ} queries, without
%any prior structural assumptions.
%
%
\subsection{Evaluation}

\paragraph{RQ1: How effective are neural networks and DRA learning algorithms in identifying regular data languages?}

To answer this research question, we evaluate the effectiveness in two stages:
\begin{enumerate}
\item Neural expressiveness:
For all benchmark languages, we trained the LSTM~(\nnLSTM) and
transformer~(\nnTransformer) models on balanced datasets containing positive
and negative sequences (maximum length~50) and measured their accuracy on an
in-distribution test set.

\item Synthesis performance:
To assess the learners, we constructed a balanced sample set~$S$ from the
training data for $n = \lceil \ln({1}/{0.05}) / ({2*0.05^{2}}) \rceil$ (with $\varepsilon=\gamma=0.05$), yielding
369 positive and 369 negative sequences using the Markov chains as the empirical distribution $\mathcal D$.
Each learner aimed to synthesise a DRA $\Aut$ with at least 95\% accuracy within 30 minutes.
\end{enumerate}

% The two passive learners, $\mathcal{L}_{\text{smt}}$ and
% $\mathcal{L}_{\text{ls}}$, attempt to synthesise deterministic register automata
% (DRAs) that are consistent with all observed samples, whereas the active
% learner, $\mathcal{L}_{\text{act}}$, interacts with the minimally adequate teacher
% representing the target language by issuing membership and equivalence queries
% and iteratively refining its hypothesis until convergence.
%
\begin{table}
\caption{%
	Experimental results for neural and automata learning methods across all 18
	benchmark languages. neural networks \nnLSTM,
	\nnTransformer{} report test accuracy (\%).  Automata learners
	\learnSMT, \learnLS,
	\learnActive{} report accuracy (ACC, \%)—defined as the percentage of correctly classified sequences among $S$ and total runtime (s) for
	synthesising deterministic register automata (DRAs).  ``?'' indicates that
	the target language is not learned by the network; ``TO'' denotes a timeout
	at 1800~s.%
}\label{tab:learn-res}
\centering \renewcommand{\arraystretch}{1.1} \setlength{\tabcolsep}{3.5pt}
\begin{tabular}{l|cc|cc|cc|cc}
	\toprule
	& \multicolumn{2}{c|}{}
	& \multicolumn{2}{c|}{\textbf{\learnSMT}}
	& \multicolumn{2}{c|}{\textbf{\learnLS}}
	& \multicolumn{2}{c}{\textbf{\learnActive}}   \\
	\textbf{Language}
	& \textbf{\nnLSTM} & \textbf{\nnTransformer}
	& \textbf{ACC (\%)} & \textbf{Time}
	& \textbf{ACC (\%)} & \textbf{Time}
	& \textbf{ACC (\%)} & \textbf{Time} \\
	\midrule
	% \multicolumn{9}{c}{\textbf{Symbolic Tomita Languages}} \\
	% \midrule
	$L_1$ & 100.00 & 100.00 & 100.00 & 6.57 & 100.00 & 0.21 & 100.00 & 7.07 \\
	$L_2$ & 100.00 & 100.00 & 99.72  & 15.80 & 100.00 & 49.38 & 100.00 & 5.30 \\
	$L_3$ & 99.90  & 99.88  & 99.72  & 12.22 & 100.00 & 1.12  & 100.00 & 7.66 \\
	$L_4$ & 100.00 & 99.49  & 99.59 & 48.76 & 100.00 & 2.02 & 100.00 & 11.31 \\
	$L_5$ & 100.00 &  ?     & 100.00 & 21.41 & 100.00 & 123.96 & 100.00 & 2.71 \\
	$L_6$ & 100.00 & ?      & 100.00 & 64.83 & 100.00 & 25.87 & 100.00 & 5.80 \\
	$L_7$ & 100.00 & 99.38  & 100.00  & 151.53 & 100.00 & 123.02 & 100.00 & 3.89 \\
	%\midrule
	%\multicolumn{9}{c}{\textbf{Real-Valued Sequence Languages}} \\
	%\midrule
	$S_1$ & 100.00 & 99.88 & 99.86 & 6.59 & 100.00 & 1.61 & 100.00 & 16.86 \\
	$S_2$ & 100.00 & 99.25 & 99.32 & 6.16 & 100.00 & 0.81 & 100.00 & 6.17 \\
	$S_3$ & 100.00 & 99.88 & 100.00 & 16.78 & 100.00 & 0.40 & 100.00 & 4.26 \\
	$S_4$ & 100.00 & 99.92 & 100.00  & 6.07 & 100.00 & 1.52 & 100.00 & 10.36 \\
	$S_5$ & 100.00 & 99.79 & 97.42  & 52.25 & 99.45 & 34.52 & 100.00 & 3.80 \\
	$S_6$ & 100.00 & 99.47 & 99.86  & 60.48 & 100.00 & 31.85 & 100.00 & 3.26 \\
	$S_7$ & 100.00 & 99.30 & 99.45  & 140.88 & 99.86 & 120.19 & 100.00 & 5.86 \\
	$S_8$ & 100.00 & 99.20 & 96.74  & 370.25 & 100.00 & 59.69 & 100.00 & 13.43 \\
	$S_9$ & 100.00 & 99.10 & 98.91  & 1675.2 & 98.37 & 125.85 & ? & TO\\
	$S_{10}$ & 100.00 & 99.00 & 99.18 & 1307.7 & 97.56 & 126.18 & ?& TO\\\
	$S_{11}$ & 100.00 & 99.00 & ? & TO & 98.81 & 126.32 & ? & TO\ \\
	\bottomrule
\end{tabular}
\end{table}
The results are summarised in Table~\ref{tab:learn-res},
where the three learners are denoted by \learnActive{} (\emph{active learning}), \learnSMT{} (\emph{SMT-based}), and \learnLS{} (\emph{local search}).
While the LSTM model achieved near-perfect accuracy on all languages, the transformer model struggled to learn $L_5$ and $L_6$, confirming its potential limitations in tasks requiring precise counting~\cite{ZWS24,AM25,Lang98}. %Overall, these findings indicate that both architectures are sufficiently expressive to capture the target languages.

The passive learners demonstrated reliable DRA synthesis performance, exceeding a 95\% success rate on nearly all benchmarks. \learnSMT~consistently generated accurate DRAs except for $S_{11}$, where the synthesis succeeded only \textit{after} the allotted timeout. Some languages require around half an hour for convergence.
\learnLS~achieved
comparable accuracy while \emph{significantly} reducing synthesis time in most
benchmarks compared to \learnSMT. This speed-up makes \learnLS{} a practical
alternative to SMT-based synthesis, albeit with a slightly higher risk of local optima.

\learnActive~synthesised DRAs within seconds and achieved
perfect accuracy for all learnable languages. However, it failed
to converge on $S_9$–$S_{11}$ within the 30-minute timeout, reflecting its
sensitivity to query limits and the difficulty of interactively capturing
complex data dependencies. \learnLS{} outperformed \learnActive{}
on $L_1$, $L_3$, $L_4$, and $S_1$–$S_4$. For the
remaining languages, \learnActive{} was the most efficient learner
but failed to converge on the more challenging benchmarks.

\paragraph{Answer to RQ1:} The passive learners successfully
synthesised adequate automata for all benchmarks,
though \learnSMT{} requires substantially more time on certain tasks.
\learnActive{} was the most efficient learner in general, but failed outright for incompatible languages.
While \learnLS{} is substantially more scalable compared to \learnSMT{}, the latter may offer a starting point for exploring suitable DRA parameters. Particularly, when applied to small batches of samples, \learnSMT{} can quickly indicate whether synthesis is feasible for a given configuration. Likewise, \learnActive{} can be a good initial choice for compatible target models.

\paragraph{RQ2: How effective are the learned DRAs in assessing the robustness of black-box neural networks?}

To answer this research question, we integrate all three learners with the
robustness verification module.
Unlike the previous experiment, we now learn a DRA from the a neural network rather than
the ground truth.
A learned robust DRA~$\Aut$ therefore serves as a
certificate of robustness for the network.
We set a timeout of 30~minutes for the DRA synthesis module and
60~minutes for the robustness verification module. Experiments were conducted
for \nnLSTM{} and \nnTransformer{} under multiple distance
metrics.
%The results are summarised in Table~\ref{tab:al_combined} for
%\learnActive{}, Table~\ref{tab:smt_results} for \learnSMT{}, and
%Table~\ref{tab:ls_results} for \learnLS{}.
%Overall, our framework can prove or disprove robustness in most cases.
Below, we discuss the experimental results in details.

%Overall, the robustness checks were completed in call cases, except for
%$L_5$ and $L_6$ with \nnTransformer. The transformer models for
%these two languages were not accurately learned, preventing the synthesis of a
%sufficiently precise DRA within the time limit. Consequently, the robustness of
%the resulting networks remains inconclusive.

%Compared to Table~\ref{tab:smt_results} and Table~\ref{tab:ls_results},
\paragraph{\learnActive:}
The results of our active learner are summarised in Table~\ref{tab:al_combined}.
Due to the transformer architecture's inability to capture $L_5$ and $L_6$, and since \learnActive{} is inherently unable to synthesise DRAs for $S_{9}$–$S_{11}$ within time limits (cf.~Table~\ref{tab:learn-res}), robustness verification could not be conducted for these networks.  For $S_8$, the trained neural networks achieved high accuracy but failed to capture the oligomorphic property of DRA languages \cite{bojanczyk2017orbit}. This limitation, combined with the built-in out-of-distribution membership queries, led the synthesis procedure to diverge due to an infinitely expanding hypothesis tree \cite{10.1007/978-3-319-10431-7_18}.

Overall, \learnActive{} successfully synthesised DRAs sufficient to certify the
robustness of the corresponding neural networks under the tested distance
metrics in 19 cases, identified 73 cases of non-robustness, and produced 52
inconclusive outcomes due to the inability to synthesise an accurate enough DRA.
Across all experiments, robustness verification with \learnActive{} required approximately 15.37~hours of total computation time.

\begin{table*}
\caption{%
	Robustness verification results with \learnActive{} across all benchmark
	languages using \nnLSTM{} and \nnTransformer{}.  \textbf{R}: robustness
	outcome ($ - $=non-robust, $+$=robust, $?$/TO=unknown/timeout);
	\textbf{T}: total time (min).%
}\label{tab:al_combined}
\centering
\begin{tabular}{|c|cccc|cccc|cccc|cccc|}
	% \multicolumn{17}{c}{\learnActive}
	% \\
	\hline
	& \multicolumn{4}{c|}{Last Letter} & \multicolumn{4}{c|}{Edit} & \multicolumn{4}{c|}{Hamming} & \multicolumn{4}{c|}{Manhattan} \\
	& \multicolumn{2}{c}{\nnLSTM} & \multicolumn{2}{c|}{\nnTransformer}
	& \multicolumn{2}{c}{\nnLSTM} & \multicolumn{2}{c|}{\nnTransformer}
	& \multicolumn{2}{c}{\nnLSTM} & \multicolumn{2}{c|}{\nnTransformer}
	&
	\multicolumn{2}{c}{\nnLSTM}
	& \multicolumn{2}{c|}{\nnTransformer} \\
	Language & R & T & R & T & R & T & R & T & R & T & R & T & R & T & R & T \\
	\hline
	${L}_{1}$ & - & 0.89 & - & 0.18 & - & 1.01 & - & 0.23 & - & 0.89 & - & 0.17 & - & 1.05 & - & 0.39 \\
	${L}_{2}$ & - & 2.49 & - & 0.96 & - & 2.06 & - & 0.20 & - & 0.22 & - & 0.20 & - & 0.66 & - & 0.67 \\
	${L}_{3}$ & - & 8.56 & - & 1.79 & - & 7.55 & - & 3.02 & + & 4.86 & + & 1.77 & - & 5.50 & - & 3.41 \\
	${L}_{4}$ & - & 18.27 & - & 4.80 & - & 9.34 & - & 2.09 & - & 5.01 & - & 0.66 & - & 5.22 & - & 20.52 \\
	${L}_{5}$ & - & 0.15 & ? & TO & - & 0.29 & ? & TO & - & 0.19 & ? & TO & - & 0.21 & ? & TO \\
	${L}_{6}$ & - & 0.16 & ? & TO & - & 0.13 & ? & TO & - & 0.16 & ? & TO & - & 0.12 & ? & TO \\
	${L}_{7}$ & - & 0.28 & - & 0.18 & - & 3.03 & - & 0.31 & - & 0.29 & - & 0.33 & - & 0.26 & - & 0.28 \\
	${S}_{1}$ & + & 26.92 & + & 6.65 & - & 1.15 & - & 0.50 & - & 0.96 & - & 0.39 & + & 75.52 & + & 60.40 \\
	${S}_{2}$ & + & 24.83 & + & 6.88 & - & 1.15 & - & 0.20 & - & 2.79 & - & 1.07 & + & 72.80 & + & 20.44 \\
	${S}_{3}$ & + & 27.96 & + & 7.90 & - & 7.94 & - & 0.23 & - & 6.99 & - & 0.70 & - & 0.58 & - & 0.25 \\
	${S}_{4}$ & + & 21.50 & + & 5.15 & - & 2.40 & - & 0.40 & - & 3.60 & - & 0.32 & - & 0.16 & - & 0.22 \\
	${S}_{5}$ & + & 22.09 & ? & TO & - & 4.25 & ? & TO & - & 0.57 & ? & TO & + & 137.55 & ? & TO \\
	${S}_{6}$ & + & 25.56 & ? & TO & - & 0.69 & ? & TO & - & 0.48 & ? & TO & + & 145.80 & ? & TO \\
	${S}_{7}$ & + & 41.43 & ? & TO & - & 21.29 & ? & TO & - & 10.65 & ? & TO & - & 2.04 & ? & TO \\
	\hline
\end{tabular}
\end{table*}

\paragraph{\learnSMT:}
The results are shown in Table~\ref{tab:smt_results}.
The runtimes are considerably higher than those of the other two learners
primarily due to the cost of DRA synthesis, as also observed in
previous experiments (cf.~Table~\ref{tab:learn-res}).
Although the procedure is optimised through incremental solving and batch-based sample processing, it still takes significant time to synthesise more complex languages.
Most of the runtime stems from repeated synthesis attempts and from handling counterexamples generated during robustness verification.
Specifically, the SMT solver needs to validate all feasible combinations of state transitions, register updates, and guard constraints that satisfy the synthesis formula.
As the number of states and registers increases, the search space grows exponentially, making constraint solving and equivalence validation particularly demanding.

Despite the computational overhead, \learnSMT{} extracted 34 robustness witnesses and 102
non-robustness witnesses within a total of 80.27~hours of computation.
Languages~$S_8$--$S_{11}$ tend to require the longest computation times, likely
due to their automata complexity and the high variability of sampled data
sequences. For the symbolic Tomita languages, \learnSMT{} completed both
the learning and the equivalence checking phases within minutes. Across
all distance metrics, robustness verification under edit distance
was the fastest. Manhattan distance took the longest time, as it
involves searching for non-robustness witnesses over continuous-valued
distances.

\begin{table*}
\caption{Robustness verification results for \learnSMT{} across all benchmark
	languages using \nnLSTM{} and \nnTransformer{} (transformer).  \textbf{R}:
	robustness outcome ($ - $=non-robust, $ + $=robust,
	$?$/TO=unknown/timeout); \textbf{T}: total time
	(min).}\label{tab:smt_results}
\centering%
\scalebox{0.94}{
	\begin{tabular}{|c|cc cc|cc cc|cc cc|cc cc|}
		\hline
		& \multicolumn{4}{c|}{Last Letter} & \multicolumn{4}{c|}{Edit} & \multicolumn{4}{c|}{Hamming} & \multicolumn{4}{c|}{Manhattan} \\
		& \multicolumn{2}{c}{\nnLSTM} & \multicolumn{2}{c|}{\nnTransformer}
		& \multicolumn{2}{c}{\nnLSTM} & \multicolumn{2}{c|}{\nnTransformer}
		& \multicolumn{2}{c}{\nnLSTM} & \multicolumn{2}{c|}{\nnTransformer}
		& \multicolumn{2}{c}{\nnLSTM} & \multicolumn{2}{c|}{\nnTransformer} \\
		Language & R & T & R & T & R & T & R & T & R & T & R & T & R & T & R & T \\
		\hline
		% --- your original SMT rows preserved verbatim ---
		${L}_{1}$ & - & 0.52 & - & 0.57 & - & 0.45 & - & 0.58 & - & 0.41 & - & 0.54 & - & 0.45 & - & 0.50 \\
		${L}_{2}$ & - & 5.18 & - & 12.64 & - & 9.66 & - & 8.38 & - & 6.64 & - & 5.09 & - & 9.35 & - & 11.30 \\
		${L}_{3}$ & - & 2.32 & - & 2.66 & - & 4.17 & - & 2.60 & + & 4.10 & + & 20.51 & - & 0.98 & - & 2.46 \\
		${L}_{4}$ & - & 12.21 & - & 6.15 & - & 4.29 & - & 2.83 & - & 11.28 & - & 3.35 & - & 3.42 & - & 4.90 \\
		${L}_{5}$ & - & 17.88 & ? & TO & - & 9.51 & ? & TO & - & 2.53 & ? & TO & - & 5.59 & ? & TO \\
		${L}_{6}$ & - & 1.25 & ? & TO & - & 3.71 & ? & TO & - & 4.07 & ? & TO & + & 1.70 & ? & TO \\
		${L}_{7}$ & - & 3.90 & - & 19.19 & - & 3.44 & - & 3.72 & - & 1.97 & - & 4.65 & - & 3.99 & - & 10.39 \\
		${S}_{1}$ & + & 11.76 & + & 17.71 & - & 0.30 & - & 0.48 & - & 0.66 & - & 0.22 & + & 30.72 & + & 31.03 \\
		${S}_{2}$ & + & 4.67 & + & 4.24 & - & 0.64 & - & 0.15 & - & 0.79 & - & 0.48 & + & 32.01 & + & 60.37 \\
		${S}_{3}$ & + & 15.47 & + & 24.93 & - & 1.83 & - & 0.88 & - & 2.37 & - & 0.31 & - & 0.30 & - & 0.28 \\
		${S}_{4}$ & + & 13.43 & + & 9.99 & - & 0.93 & - & 0.71 & - & 1.03 & - & 0.77 & - & 1.46 & - & 0.28 \\
		${S}_{5}$ & + & 53.11 & + & 27.10 & - & 57.24 & - & 8.05 & - & 2.52 & - & 32.92 & + & 43.56 & + & 76.77 \\
		${S}_{6}$ & + & 19.25 & + & 41.01 & - & 6.87 & - & 44.67 & - & 5.86 & - & 9.38 & + & 77.02 & + & 41.00 \\
		${S}_{7}$ & + & 20.91 & + & 16.06 & - & 62.70 & - & 43.48 & - & 19.78 & - & 18.98 & - & 18.71 & - & 6.64 \\
		${S}_{8}$ & + & 91.83 & + & 75.24 & - & 24.23 & - & 88.05 & - & 80.84 & - & 68.57 & - & 166.45 & - & 95.76 \\
		${S}_{9}$ & + & 349.04 & + & 244.69 & - & 196.39 & - & 162.76 & - & 185.41 & - & 66.85 & - & 183.82 & - & 161.41 \\
		${S}_{10}$ & + & 34.35 & + & 177.01 & - & 25.22 & - & 19.12 & - & 100.54 & - & 10.46 & + & 40.72 & - & 221.58 \\
		${S}_{11}$ & + & 33.10 & + & 65.11 & - & 29.94 & - & 49.50 & - & 48.96 & - & 128.96 & - & 153.82 & - & 153.82 \\
		\hline
	\end{tabular}
}
\end{table*}
\paragraph{\learnLS:}
Finally, the results of the local-search learner are presented in
Table~\ref{tab:ls_results}. Across all 144 experiments, we successfully
extracted certified surrogate DRAs that served as robustness witnesses for 32
cases, identified 104 instances of non-robustness, and recorded 8 failed
experiments caused by the transformers' inability to learn $L_5$ and
$L_6$. The runtimes were significantly lower than those of the other passive
learner. On the Tomita languages, the slowest experiment took 20.51~minutes, while
the most time-consuming run overall was for $S_8$ under Manhattan distance
with the \nnTransformer{}, which required slightly over 328~minutes to produce a
non-robustness witness. In total, robustness verification for this learner
required approximately 36.83~hours of computation time. These results
confirm that local search provides a practical balance between synthesis quality
and computational efficiency.
\begin{table*}
\caption{%
	Robustness verification results of \learnLS{} across all benchmark languages
	using \nnLSTM{} and $\mathcal{N}_T$ (transformer).  \textbf{R}: robustness outcome
	($ - $=non-robust, $ + $=robust, $?$/TO=unknown/timeout); \textbf{T}:
	total time (min).%
}\label{tab:ls_results}
\centering
\begin{tabular}{|c|cc cc|cc cc|cc cc|cc cc|}
	\hline
	& \multicolumn{4}{c|}{Last Letter} & \multicolumn{4}{c|}{Edit} & \multicolumn{4}{c|}{Hamming} & \multicolumn{4}{c|}{Manhattan} \\
	& \multicolumn{2}{c}{\nnLSTM} & \multicolumn{2}{c|}{\nnTransformer}
	& \multicolumn{2}{c}{\nnLSTM} & \multicolumn{2}{c|}{\nnTransformer}
	& \multicolumn{2}{c}{\nnLSTM} & \multicolumn{2}{c|}{\nnTransformer}
	& \multicolumn{2}{c}{\nnLSTM} & \multicolumn{2}{c|}{\nnTransformer} \\
	Language & R & T & R & T & R & T & R & T & R & T & R & T & R & T & R & T \\
	%	\hline
	% --- your original LS rows preserved verbatim ---
	\hline
	${L}_{1}$ & - & 2.15 & - & 0.11 & - & 2.23 & - & 2.51 & - & 2.25 & - & 2.40 & - & 2.19 & - & 2.37 \\
	${L}_{2}$ & - & 15.46 & - & 3.13 & - & 4.32 & - & 6.79 & - & 7.25 & - & 5.01 & - & 5.34 & - & 6.84 \\
	${L}_{3}$ & - & 3.20 & - & 4.20 & - & 1.66 & - & 10.66 & + & 5.97 & + & 7.62 & - & 5.17 & - & 4.29 \\
	${L}_{4}$ & - & 4.91 & ? & TO & - & 24.81 & - & 31.31 & - & 5.84 & - & 0.80 & - & 1.28 & - & 0.56 \\
	${L}_{7}$ & - & 8.16 & - & 2.57 & - & 5.33 & - & 4.97 & - & 2.95 & - & 5.74 & - & 10.00 & - & 17.49 \\
	${S}_{1}$ & + & 18.06 & + & 9.43 & - & 14.20 & - & 13.34 & - & 1.15 & - & 0.89 & + & 55.39 & + & 51.73 \\
	${S}_{2}$ & + & 22.77 & + & 8.96 & - & 0.28 & - & 14.48 & - & 18.21 & - & 11.42 & + & 57.78 & + & 47.57 \\
	${S}_{3}$ & + & 23.00 & + & 12.92 & - & 1.54 & - & 29.92 & - & 3.13 & - & 40.60 & - & 0.13 & - & 0.28 \\
	${S}_{4}$ & + & 21.57 & + & 6.66 & - & 24.69 & - & 18.11 & - & 0.70 & - & 1.41 & - & 0.24 & - & 0.25 \\
	${S}_{5}$ & + & 11.09 & + & 8.96 & - & 1.41 & - & 1.92 & - & 38.90 & - & 0.90 & + & 50.10 & + & 21.39 \\
	${S}_{6}$ & + & 20.96 & + & 9.03 & - & 20.53 & - & 11.28 & - & 39.97 & - & 2.35 & + & 49.03 & - & 166.50 \\
	${S}_{7}$ & + & 16.52 & + & 15.44 & - & 4.54 & - & 5.91 & - & 8.73 & - & 4.43 & - & 5.91 & - & 9.24 \\
	${S}_{8}$ & + & 31.13 & + & 30.11 & - & 11.85 & - & 5.32 & - & 8.94 & - & 1.46 & - & 38.47 & - & 11.91 \\
	${S}_{9}$ & + & 14.73 & + & 15.43 & - & 77.62 & - & 6.87 & - & 10.23 & - & 6.68 & - & 35.69 & - & 66.90 \\
	${S}_{10}$ & + & 13.21 & + & 12.81 & - & 71.10 & - & 49.83 & - & 3.31 & - & 8.30 & + & 66.06 & - & 42.45 \\
	${S}_{11}$ & + & 16.81 & + & 19.69 & - & 4.12 & - & 4.39 & - & 9.45 & - & 7.10 & - & 62.72 & - & 103.15 \\
	${L}_{5}$ & - & 3.12 & ? & TO & - & 6.17 & ? & TO & - & 6.39 & ? & TO & - & 3.12 & ? & TO \\
	${L}_{6}$ & - & 15.13 & ? & TO & - & 8.04 & ? & TO & - & 3.51 & ? & TO & - & 0.65 & ? & TO \\
	\hline
\end{tabular}
\end{table*}
\paragraph{Answer to RQ2:} Among the three learners, \learnSMT{} and \learnLS{} successfully verified the majority of neural networks across all supported distance metrics. The performance of \learnActive{} depended on the generalisation ability of the underlying model, since its membership queries could fail to refine the hypothesis for sequences violating the oligomorphic property. %On the other hand, \learnActive{} was highly efficient when it succeeded.
%The experiments further indicate that the LSTM model generalises better than the Transformer encoder in this regard.
%On the other hand, \learnActive{} was highly efficient when it succeeded.
%Both passive learners achieved comparable robustness verification outcomes except for $S_{6}$ and $S_{10}$ under the Manhattan distance. %where \learnLS{} successfully synthesised a DRA accompanied by a robustness certificate. Since both approaches are designed to extract approximately correct surrogate DRAs, these discrepancies remain fully consistent with our expectations.

%Both passive learners achieved comparable robustness verification outcomes

Overall, robust DRAs consistently captured invariant decision regions. Inconclusive outcomes mainly arose from synthesis limitations, not the verification phase. Metric-wise, last-letter, edit, and Hamming checks tend to be lightweight, whereas Manhattan often dominates runtime due to continuous search. The major bottleneck lies in the automata learners rather than the robustness checker.

\section{Related Work}\label{sec:related-work}

%This work introduces a unified framework for interpreting and verifying the robustness of neural networks by extracting DRAs as symbolic surrogates.
Automata learning has been central to grammatical inference. Early studies reduced minimal DFA inference to graph colouring~\cite{higuera-book,coste97} and enumerated candidate DFAs using a SAT solver \cite{heule2010exact}.
Heuristic methods such as EDSM~\cite{HV13,Lang98,Lang99} guided state merges via positive and negative examples, inspiring our search-based learner.
As for DRAs, Dierl et al.~\cite{dierl2024scalable} and Moerman et al.~\cite{learning-nominal-automata} provide effective active learning frameworks, while
Balachander et al.~\cite{DBLP:conf/concur/BalachanderFG24} propose a passive learning algorithm. These approaches do not currently support learning constants in transition guards, as required by some of our examples.
%These works motivated our SMT- and local-search-based synthesis framework, and inform future optimization efforts-particularly through the incorporation of symmetry-breaking techniques to improve scalability.

%Our work is motivated by model interpretability.
Recent years have seen rapid development of interpretable models for neural networks \cite{barcelo2020model,foundations21,MI23,IIM22,IISM22,SN22,IS21,IIM20,HIIM21,ABOS22}.
Giles et al.~\cite{6796344} exploited state-vector quantization to transform a neural network to a DFA;
Weiss et al.~\cite{weiss2020extractingautomatarecurrentneural} inferred DFAs using the L* algorithm~\cite{ANGLUIN198787}.
Clustering methods have also been proposed for DFA extraction~\cite{1245447,6795690,10.1023/A:1018061531322,668898}.
Additionally, weighted finite automata have been extracted from RNNs~\cite{DBLP:journals/corr/abs-1904-02931,wei2022extractingweightedfiniteautomata}, and Moore machines were extracted from seq-to-seq transformers~\cite{adriaensen2025extractingmooremachinestransformers}. All these work considered neural networks and surrogate models over finite alphabets, while we tackle interpretability of sequential models over data domains.

%Our work is motivated by model interpretability, which has increasingly adopted formal and computational perspectives.
%Barcelo et al.~\cite{barcelo2020model} and Arenas et al.~\cite{foundations21} formalise the complexity of generating and verifying explanations, showing that 
%Constraint- and logic-based approaches~\cite{IIM22,IS21,HIIM21,IISM22,ABOS22,SN22} provide sound and minimal explanations for decision and ensemble models, while extensions to graph classifiers further broaden their applicability~\cite{IS21}.
%We extend this to sequential models by employing DRAs as surrogates, unifying interpretability and formal verification for robustness analysis.

Formal verification of neural network robustness has been extensively studied via solver and abstraction techniques \cite{WK20,Bastani16,Katz17,AI2,hong2025robustness}, which are predominantly white-box. Complementary probabilistic frameworks \cite{10.1109/ICSE-NIER.2019.00032,BLN22,DBLP:journals/sttt/KhmelnitskyNRXBBFHLY23} assess robustness under input or sampling distributions, which are naturally suited to black-box settings.
Our verification approach aligns with this probabilistic perspective, aiming to strike a practical balance between precision and performance.

In the literature of formal explainable AI (FXAI), two complementary local explanations are commonly distinguished. \emph{Abductive explanations} (AXps) are subset-minimal sets of feature assignments that entail the observed prediction \cite{Ignatiev2019Abduction,Ignatiev2020Contrastive}; \emph{contrastive explanations} (CXps) are subset-minimal sets of features whose modification suffices to flip the prediction \cite{Ignatiev2020Contrastive,barcelo2020model}. Our work follows the contrastive strand but targets a stronger optimality notion: we seek explanations that are \emph{minimum} (in terms of cardinality or cost) rather than merely subset-minimal. In our data-sequence setting, the cost is a perturbation metric over sequences, and the verifier either certifies $\delta$-robustness or returns a closest counterexample sequence in the $\delta$-neighbourhood, thus instantiating minimum-change contrastive explanations on the extracted DRA.

Finally, this paper initiates the study of automata extraction from neural networks over data sequences. Besides DRAs, many automata models over data domains admit decidable emptiness, albeit with different expressive powers. Symbolic register automata~\cite{DV21,SRA} over ordered rationals are subsumed by nominal automata and thus fall within our framework. Variants of data automata~\cite{two-variable-logic,BS10,FL22} also have decidable emptiness but at high computational cost. A particularly promising direction concerns \emph{parametric} (or \emph{variable}) automata~\cite{FL22,JLMR23,variable-automata,FK20} over Linear Real Arithmetic (LRA), which generalise DRAs with read-only registers and quantifier-free LRA guards. Such guards naturally capture continuous features, e.g., financial indicators such as Fibonacci retracement~\cite{murphy1999technical}. Although non-emptiness for LRA-based automata is NP-complete, it remains tractable for modern SMT solvers. Extending our framework to learn and interpret such parametric automata constitutes a promising avenue for future research.

\section{Conclusion}
\label{sec:conclusion}

We present a theoretical foundation and practical framework for interpreting and
verifying sequential neural networks over data words.
By reducing robustness verification to
computation of shortest paths, we obtained a polynomial-time decision procedure
for deterministic register automata with a fixed number of registers. We combine
this algorithm with the automata synthesis methods to obtain a procedure for
verifying the robustness of black-box models. %Furthermore, choosing the
%appropriate parameters, allows us to derive probabilistic correctness guarantees
%of our algorithms.
%We evaluate three complementary synthesis algorithms on symbolic as well as data
%languages.

Our experimental evaluation shows that our framework not only extracts
interpretable automata from black-box RNNs and transformers, but also enables
probabilistic robustness certification of the underlying models.
These results open promising directions for interpretable machine learning and
formal verification of neural systems, with potential applications in time
series analysis and financial modelling. Future work includes extending the
framework to probabilistic and nondeterministic settings, formulating richer
robustness properties, and scaling to larger architectures and data modalities.

\begin{acks}
Jiang, Lin, Markgraf, and Parsert are supported by the 
\grantsponsor{1}{European Research Council (ERC)}{https://erc.europa.eu/homepage} under the European Union's Horizon 2020 
research and innovation programme under grant number \grantnum[]{1}{101089343}.
Chih-Duo Hong is supported by the National Science and Technology Council, Taiwan, under grant numbers 112-2222-E004-001-MY3 and 114-2634-F-004-002-MBK.
\end{acks}

\bibliographystyle{ACM-Reference-Format}
\bibliography{literature}

\newpage
\appendix
%!TEX root = main.tex

\newcommand{\cost}{\textsf{cost}}

\section{Some Distance Metrics and Their RAAs}
\label{app-sec:dist-pert-aut}

In this section we will give the formal definition of RAA
and show the construction of the RAA for various well known metrics.

\subsection{Formal Definition of RA}
A $k$-RA $\Aut = (\controls,\transrel,q_0,\finals)$ consists of a finite set $\controls$ of states, 
the initial state $q_0 \in \controls$,
the set $\finals \subseteq \controls$ of accepting states, 
and a finite transition relation $\transrel$.
Each transition is of the form $(p,\varphi,\psi,q)$, 
where $p,q\in\controls$;
$\varphi = \varphi(\bar r,\curr)$ is a \emph{guard} 
which is a conjunction of (in)equalities involving $r_1,\ldots,r_k,\curr$ and some constants from $\Q$
and $\psi = \psi(\bar r,curr)$ is an \emph{assignment} for updating the registers,
written in the form of $r_i := c$ (for a constant $c \in \Q$), $r_i := r_j$, or $r_i := \curr$.

%$of the form $r_i \sim \curr$, $\curr \sim c$, $r_i \sim c$ or $r_i \sim r_j$, 
%where $\sim\ \in \{=,\neq,\leq,\geq,<,>\}$ and $c \in \Q$ is a constant. 
%The assignment $\psi$ is a conjunction of expressions in the form of $ 
%We stipulate that in an assignment, each register can appear on the left hand side at most once.

%We next define the notion of runs and accepting runs.
A \defn{configuration} is a pair $(p,\mu)$, where $p \in \controls$ and 
$\mu: \bar r \to \Q$ denoting the contents of the registers. 
We call $(q_0,\bar 0)$ the \defn{initial configuration}, where $\bar 0$ assigns 0 to each register.
Given a sequence $w = w_1,\dots,w_n \in \Q^*$, 
a \defn{run} on $w$ is a mapping $\pi$ that maps each $i \in \{0,\ldots,n\}$ to 
a configuration $\pi(i) := (q_i,\mu_i)$ such that $\pi(0)$ is the initial configuration, 
and for each $l \in \{0,\ldots,n-1\}$ there exists a transition $(q_l,\varphi,\psi,q_{l+1}) \in \transrel$ such that
$\varphi(\mu_l(\bar r),w_{l+1})$ is satisfied and 
$\mu_{l+1}$ is the content of the registers after the application the assignment $\psi$ on $\mu_l$.
%that the following holds for $i,j \in \{1,\dots,k\}$.
%\begin{itemize}
%\item 
%$\mu_{l+1}(r_i) = \mu_l(r_j)$ when an expression of the form $r_i := r_j$ appears in $\psi$.
%\item 
%$\mu_{l+1}(r_i) = c$ when an expression of the form $r_i := c$ appears in $\psi$.
%\item 
%$\mu_{l+1}(r_i) = w_{l+1}$ when an expression of the form $r_i := \curr$ appears in $\psi$.
%\item 
%$\mu_{l+1}(r_i) = \mu_l(r_i)$ when $r_i$ does not appear on the left hand side of any expression in $\psi$.
%\end{itemize}
The run $\pi$ is \defn{accepting} if the last state $q_n$ is an accepting state. 
The language of $\Aut$, denoted $L(\Aut)$, 
contains all sequences $w$ on which there is an accepting run of $\Aut$. 

%Moreover, $\Aut$ is \defn{complete} if at least one guard is satisfied under this condition. 
$\Aut$ is \defn{deterministic} 
if for every state $s \in \controls$, register assignment $\mu: \bar r \to \mathbb{Q}$ and letter $a \in \mathbb{Q}$, 
there is at most one outgoing transition from $s$ whose guard is satisfied. 
We refer to a deterministic RA with $k$ registers a \defn{$k$-DRA}.
%
% \begin{example}
% The following 2-DRA accepts the empty language:
% \begin{tikzpicture}[shorten >=1pt,node distance=3cm,on grid,auto] 
% \node[state,initial] (q_0)   {$q_0$}; 
% \node[state] (q_1) [right=of q_0] {$q_1$}; 
% \node[state] (q_2) [right=of q_1] {$q_2$}; 
% \node[state, accepting] (q_3) [right=of q_2] {$q_3$}; 
% \path[->] 
% (q_0) edge  node {$curr > 0$} (q_1)
% edge  node [swap] {$r_1 := curr$} (q_1)
% (q_1) edge  node  {$curr \neq r_1$} (q_2) edge  node [swap] {$r_2 := curr$} (q_2)
% (q_2) edge node {$r_1 = r_2$} (q_3);
% \end{tikzpicture}
% \end{example}
%
We remark that the $k$-DRA languages are closed under Boolean operations
and that the emptiness checking for DRAs can be done in polynomial time 
when the number of registers is fixed~\cite{bojanczyk2019slightly}.

\subsection{Formal Definition of RAA}

Formally, a $k$-RAA is a tuple $\Aut = (Q,\Delta,q_0,F)$, where 
$Q$ is a set of states, $q_0$ is the initial state, 
$F$ is the set of final states and $\Delta$ is the transition relation.
It has $k$ registers denoted $\bar r := r_1,\ldots,k$
and accumulator register $\acc$.
Initially all the control registers and the accumulator $\acc$ are set to $0$. 
A transition in $\Delta$ is of the form $(p,\varphi,\psi,\chi,q,\mov)$
where:
\begin{itemize}
\item
$p,q\in Q$.
\item
The guard $\varphi$ is a conjunction of (in)equalities of the form 
$r_i \sim \curr_1$, $r_1 \sim \curr_2$, $\curr_1 \sim c$, $\curr_1 \sim c$, 
$r_i \sim c$ or $r_i \sim r_j$, where $\sim\ \in \{=,\neq,\leq,\geq,<,>\}$ and $c \in \Q$.
\item
The register reassignment $\psi$
is a conjunction of expressions in the form of 
$r_i := \curr_1$, $r_i := \curr_2$, $r_i := r_j$ or
$r_i := c$ for some constant $c \in \Q$.
\item
The accumulator update $\chi$ is of the form $\acc \pluseq a_1 * \curr_1 + a_2*\curr_2  + b$ 
where $a_1,a_2,b\in\Q$.
\item
$\mov\in\{1,2,\textsf{both}\}$ indicates whether to move the first/second/both heads to the right.
\end{itemize}
Given a pair of sequences $(v,w)\in \Q^*\times\Q^*$ 
where $v=v_1,\ldots,v_m$ and $w = w_1,\dots,w_n \in \Q^*$,
a \defn{configuration} is a tuple $(p,h_1,h_2,\mu,\nu)$, where $p \in \controls$,
$1\leq h_1\leq m+1$, $1\leq h_2\leq n+1$ and $\mu: \bar r \to \Q$. 
Intuitively, $p$ denotes the current state,
$h_1,h_2$ the position of the first and second head,
$\mu$ the content of the registers $\bar r$
and $\nu$ the content of the accumulator register $\acc$.
We call $(q_0,1,1,\bar 0,0)$ the \defn{initial configuration}, 
where $\bar 0$ assigns 0 to each register.

A \defn{run} on $(v,w)$ is a sequence of configurations:
$$
(p_0,h_{0,1},h_{0,2},\mu_0,\nu_0),\ldots,(p_\ell,h_{\ell,1},h_{\ell,2},\mu_\ell,\nu_\ell)
$$
such that $(p_0,h_{0,1},h_{0,2},\mu_0,\nu_0)$ is the initial configuration, 
and for each $l \in \{0,\ldots,\ell-1\}$ 
there exists a transition $(p_l,\varphi,\psi,\chi,p_{l+1},\mov) \in \transrel$ such that:
\begin{itemize}
\item
The guard condition $\varphi$ is satisfied, i.e.,
$\varphi(\mu_l(\bar r),v_{h_{l,1}},w_{h_{l,2}})$ holds;
\item
The content of the registers is updated according to the assignment $\psi$, i.e.,
for every $i,j \in \{1,\dots,k\}$:
\begin{itemize}
\item 
$\mu_{l+1}(r_i) = \mu_l(r_j)$ when an expression of the form $r_i := r_j$ appears in $\psi$.
\item 
$\mu_{l+1}(r_i) = c$ when an expression of the form $r_i := c$ appears in $\psi$.
\item 
$\mu_{l+1}(r_i) = v_{h_{l,1}}$ when an expression of the form $r_i := \curr_1$ appears in $\psi$.
\item 
$\mu_{l+1}(r_i) = w_{h_{l,2}}$ when an expression of the form $r_i := \curr_2$ appears in $\psi$.
\item 
$\mu_{l+1}(r_i) = \mu_l(r_i)$ when 
$r_i$ does not appear on the left hand side of any expression in $\psi$.
\end{itemize}
\item
The content of the accumulator register $\acc$ is updated according to $\chi$, i.e.,
$\nu_{l+1} = \nu_l + a_1*v_{h_{l,1}} + a_2* w_{h_{l,2}} +b$
and the value $a_1*v_{h_{l,1}} + a_2* w_{h_{l,2}} +b$ is non-negative,
where $\chi$ is $\acc \pluseq a_1 * \curr_1 + a_2*\curr_2  + b$.
\item
The heads move according to $\mov$:
\begin{itemize}
\item
If $\mov =1$, then $h_{l+1,1} = h_{l,1}+1$ and $h_{l+1,2} = h_{l,2}$.
\item
If $\mov =2$, then $h_{l+1,1} = h_{l,1}$ and $h_{l+1,2} = h_{l,2}+1$.
\item
If $\mov =\textsf{both}$, then $h_{l+1,1} = h_{l,1}+1$ and $h_{l+1,2} = h_{l,2}+1$.
\end{itemize}
\end{itemize}
We say that the run $\pi$ is \defn{accepting} if 
the last state $p_{\ell}$ is an accepting state and both heads finish reading $v$ and $w$,
i.e., $h_{\ell,1}=m+1$ and $h_{\ell,2}=n+1$.

\subsection{Hamming Distance}

For two sequences $v = v_1, \dots, v_n$ and $w = w_1, \dots, w_n$ of the same length,
their discrete Hamming distance $D_{\ham}(v,w)$ is the number of indexes where their letters differ: 
\[
D_{\ham}(v,w) \ := \ |\{ i \in \{1,...,n\} : v_i \neq w_i\}|
\]
Intuitively, it is the minimum number of substitutions required to change one sequence to another. 
%For example, we have $d_H((0,0,0,0),(1,0,1,0)) = 2$.
For two sequences of different length,
we define their Hamming distance to be $\infty$.
The following RAA captures $D_{\ham}$.

\begin{center}
\begin{tikzpicture}[node distance=3cm]

\node[state,initial,accepting] (q0)   {$q_0$};

\draw[->] (q0) to [out=130,in=50,looseness=15] 
	node [above] {\footnotesize$\curr_1 < \curr_2, \emptyset, \acc \pluseq 1, \textsf{both}$} (q0);

\draw[->] (q0) to [out=230,in=310,looseness=15] 
	node [below] {\footnotesize$\curr_1 > \curr_2,\emptyset,\acc \pluseq 1, \textsf{both}$} (q0);

\draw[->] (q0) to [out=20,in=340,looseness=20] 
	node [right] {\footnotesize$\curr_1 = \curr_2,\emptyset,\acc \pluseq 0, \textsf{both}$} (q0);

\end{tikzpicture}
\end{center}
Here the PCA does not have any register,
hence, we denote the register reassignment by $\emptyset$.
Intuitively on input $(v,w)$, it reads each letter from $v$ and $w$ simultaneously.
When $\curr_1$ and $\curr_2$ are different ($\curr_1< \curr_2$ or $\curr_1> \curr_2$),
it increments $\acc$ by one.
Otherwise, when they are equal, it does not increment $\acc$.

\subsection{Manhattan Distance}

For two sequences $v = v_1, \dots, v_n$ and $w = w_1, \dots, w_n$ of the same length,
their Manhattan distance is:
$$
D_{\textup{Man}}(v,w)\ := \ \sum_{1\leq i\leq n} |v_i-w_i|
$$
Similar to the discrete version,
we define the Manhattan distance between sequences of different length to be $\infty$.
The following RAA captures this metric.

\begin{center}
\begin{tikzpicture}[node distance=3cm]

\node[state,initial,accepting] (q0)   {$q_0$};

\draw[->] (q0) to [out=130,in=50,looseness=15] 
	node [above] {\footnotesize$\curr_1 \leq \curr_2,\emptyset,\acc \pluseq \curr_2-\curr_1, \textsf{both}$} (q0);

\draw[->] (q0) to [out=230,in=310,looseness=15] 
	node [below] {\footnotesize$\curr_1 > \curr_2,\emptyset,\acc \pluseq \curr_1-\curr_2, \textsf{both}$} (q0);

\end{tikzpicture}
\end{center}
Intuitively on input $(v,w)$, it reads each letter from $v$ and $w$ simultaneously.
When $\curr_2\geq \curr_1$, it increments $\acc$ by $\curr_2-\curr_1$.
Otherwise, it increments $\acc$ by $\curr_1-\curr_2$.

\subsection{Edit Distance}

The edit distance between two sequences 
$v = v_1 \cdots v_n$ and $w = w_1 \cdots w_m$ is the minimum number of operations 
(insertion, deletion and substitution of a single letter)
required to transform $v$ into $w$.
To define it formally, we need the following notation.
Let $M=((i_0,j_0),\dots, (i_\ell,j_\ell))$ be a sequence where 
each pair $(i_h,j_h) \in \{1,\ldots,n\}\times\{1,\ldots,m\}$.
We say that it is a \emph{monotone matching} (over $(n,m)$), if 
for every $h<h'$ we have $i_h< i_{h'}$ and $j_h < j_{h'}$.
Intuitively, such $M$ represents insertion/deletaion/substitution as follows.
\begin{itemize}
\item
A pair $(i,j)$ that appears in $M$ represents a substitution of $v_i$ with $w_j$.
\item
An index $i\in \{1,\ldots,n\}$ that does not appear in $M$
represents the deletion of the letter $v_i$.
\item
An index $j\in \{1,\ldots,m\}$ that does not appear in $M$
represents the insertion of the letter $w_j$ into $v$.

Obviously, we can also view such index $j$ as representing 
the deletion of the letter $w_j$.
\end{itemize}
The edit distance between $v$ and $w$ w.r.t. $M$ is defined as:
\[
D_{\edt,M}(v,w) := \left(\sum_{(i,j)\in M} f(v_i,w_j) \right) + c_{\mathrm{insdel}}(n+m-2|P|),
\]
where $f(v_i,w_j)$ denotes the cost of substituting $v_i$ with $w_j$
and $c_{\mathrm{insdel}}$ is the cost of deleting all the letters that 
are not involved in the matching $M$.
We will consider the usual choice $f(a,a) = 0$, 
$f(a,b) = 1$ for $a\ne b$ and $c_{\mathrm{insdel}} = 1$, unless specified otherwise.
The \emph{edit distance} between $v$ and $w$ is defined as:
\[
D_{\edt}(v,w) := \min_{M}\ d_{\edt,M} (v,w)
\]
As an example, let $v=1,2,3,7,9$
and $w=1,3,7,10$.
For a monotone matching $M=\{(1,1),(3,2),(4,3)\}$
the edit distance between $v$ and $w$ w.r.t. $M$ is $3$.
By routine inspection, $D_{\edt}(v,w)=2$.

The RAA for $D_{\edt}$ is as follows.
\begin{center}
\begin{tikzpicture}[node distance=3cm]

\node[state,initial,accepting] (q0)   {$q_0$};

\draw[->] (q0) to [out=160,in=100,looseness=15] 
	node [above] {\footnotesize$t_1$} (q0);

\draw[->] (q0) to [out=200,in=260,looseness=15] 
	node [below] {\footnotesize$t_2$} (q0);

\draw[->] (q0) to [out=70,in=40,looseness=23] 
	node [right] {\footnotesize$t_3$} (q0);
	
\draw[->] (q0) to [out=20,in=340,looseness=20] 
	node [right] {\footnotesize$t_4$} (q0);
	
\draw[->] (q0) to [out=290,in=320,looseness=23] 
	node [right] {\footnotesize$t_5$} (q0);

\end{tikzpicture}
\end{center}
where:
\begin{itemize}
\item
$t_1$ is $(q_0,\curr_1 \neq\curr_2, \emptyset,\acc \pluseq 1,q_0,\textsf{both})$.
\item
$t_2$ is $(q_0,\curr_1 =\curr_2, \emptyset,\acc \pluseq 0,q_0,\textsf{both})$.
\item
$t_3$ is $(q_0,\top, \emptyset,\acc \pluseq 2,q_0,\textsf{both})$.
\item
$t_4$ is $(q_0,\top, \emptyset,\acc \pluseq 1,q_0,1)$.
\item
$t_5$ is $(q_0,\top, \emptyset,\acc \pluseq 1,q_0,2)$.
\end{itemize}
Intuitively, the RAA ``guesses'' the monotone matching $M$.
In applying each transition, it guesses whether the positions of both heads
should be in the monotone matching $M$.
If the guess is positive, i.e., the pair is in $M$, 
it applies either transition $t_1$ or $t_2$,
which corresponds to the case when $\curr_1\neq \curr_2$ or $\curr_1=\curr_2$.
It the negative,
it applies either transition $t_3$, $t_4$ or $t_5$,
which correspond to the case when both heads move, 
only the first head moves or only the second head moves.

\subsection{Dynamic-Time-Warping (DTW) Distance}
%The DTW distance is a similarity measure between two non-empty sequences 
%$v = v_1 \cdots v_n$ and $w = w_1 \cdots w_m$. 

%We fix a pair-wise distance function $\cost:\Q \times \Q \to \Q$,
%where $\cost(a,b) = |a-b|$.
Let $v = v_1 \cdots v_m$ and $w = w_1 \cdots w_n$
be two sequences over $\Q$.
Let $M=((i_0,j_0),\dots, (i_\ell,j_\ell))$ be a monotone matching over $(m,n)$.
We say that it is an \emph{incremental matching} if the following holds.
\begin{itemize}
\item
$(i_0,j_0)=(1,1)$.
\item
$(i_\ell,j_\ell) = (m,n)$.
\item
For every $0\le k< \ell$, 
$(i_{k+1},j_{k+1}) \in \{ (i_k+1,j_k),(i_k,j_k+1),(i_k+1,j_k+1)\}$.
\end{itemize}

The DTW distance between $v$ and $w$ is defined w.r.t. an incremental matching $M$ as:
\[
D_{\dtw,M}(v,w) := \sum_{(i,j) \in M} |v_i - w_{j}|.
\]
The DTW distance between $v$ and $w$ is:
\[
D_{\dtw}(v,w) \ := \ \min_{M} \ d_{\dtw,M}(v,w)
\]
The minimum is taken over all possible incremental matching over $(m,n)$.
The idea behind DTW is to match similar segments of $v$ and $w$ by tuples in $M$.
One element of $v$ can be matched to multiple elements of $w$ and 
one element of $w$ can be matched with multiple elements of $v$.
When talking about time series, such a matching corresponds to 
a “warping” of the time of one series to make them as similar as possible.
In particular, DTW tries to capture the intuition that 
the same time series displayed at different speeds should be similar. 
For example, when $v=0$ and $w=0,0,0,0$, the DTW distance between them is $0$.
It is worth stating that $D_{\dtw}$ does not satisfy the triangle inequality.

The RAA for DTW distance can be defined in a manner similar to the edit distance.
It tries to guess the incremental matching $M$ and computes the DTW distance accordingly.
We omit the routine construction.

\subsection{Last-Letter Distance}

The \emph{last-letter distance} between $v=v_1,\ldots,v_n$ and $w=w_1,\ldots,w_m$ is:
\[
D_{\textup{ll}}(v,w) := 
\left\{
\begin{array}{ll}
\infty & \text{if}\ m\neq n
\\
  \infty & \text{if}\ m= n \ \text{and there is}\ \\
  & \quad 1\leq i< n\ \text{where}\ v_i\neq w_i\\
|v_n-w_m|\quad & \text{if}\ m=n
\end{array}
\right.
\]
The RAA for this distance can be defined as follows.
\begin{center}
\begin{tikzpicture}[node distance=3cm]

\node[state,initial] (q0)   {$q_0$};

\node[state,accepting] (p)  [right=of q0]  {$p$};

\draw[->] (q0) to [out=130,in=50,looseness=15] 
	node [above] {\footnotesize$\curr_1 = \curr_2,\emptyset,\acc \pluseq 0, \textsf{both}$} (q0);

\draw[->] (q0) to [bend left] 
	node [above] {\footnotesize$t$} (p);

\draw[->] (q0) to [bend right] 
	node [below] {\footnotesize$t'$} (p);

\end{tikzpicture}
\end{center}
where:
\begin{itemize}
\item
$t$ is transition $(q_0,\curr_1 \geq \curr_2,\emptyset,\acc \pluseq \curr_1-\curr_2, \textsf{both},p)$,
\item
$t'$ is transition $(q_0,\curr_1 < \curr_2,\emptyset,\acc \pluseq \curr_2-\curr_1, \textsf{both},p)$.
\end{itemize} 
Intuitively, on input $(v,w)$ the RAA checks that they are equal up to the last letter.
If so, it increments the accumulator register $\acc$ by the absolute value of their difference.

\subsection{Threshold Hamming Distance}

For a constant $c\in \Q$, the \emph{$c$-threshold Hamming distance} between two sequences
$v=v_1,\ldots,v_n$ and $w=w_1,\ldots,w_m$ is:
\[
D_{c\textup{-ham}}(v,w) := 
\left\{
\begin{array}{ll}
\infty & \text{if}\ m\neq n
\\
\big|\{i : |v_i-w_i|> c\}\big|\quad & \text{if}\ m=n
\end{array}
\right.
\]
Intuitively $D_{c\textup{-th-ham}}(v,w)$ counts the number of indexes in $v$ and $w$ where the components differ by more than $c$.
The standard Hamming distance is when the threshold $c=0$.
The construction of the RAA for computing $D_{c\textup{-th-ham}}$ is routine and 
similar to the one for the standard Hamming distance, and hence is omitted.

\subsection{Metric w.r.t.~General RA Languages}

Given RA languages $L_1,L_2\subseteq \Q^*$, 
we can define the aforementioned metrics w.r.t. $L_1,L_2$ as follows.
\[
D_{L_1,L_2}(v,w) := 
\left\{
\begin{array}{ll}
D(v,w) &\qquad \text{if}\ v\in L_1\ \text{and}\ w\in L_2
\\
\infty &\qquad \text{otherwise}
\end{array}
\right.
\]
where $D(\cdot,\cdot)$ can be, e.g., one of the aforementioned metrics discussed in this section.

%\subsubsection*{The construction of $\Aut'$:}
%
%We construct a product automaton $\Aut''=(Q'',\Delta'',q''_0,F'')$ of the DRA $\Aut = (Q,\Delta,q_0,F)$ and the RAA $\Aut_{v,d} = (Q',\Delta',q_0',F')$ with
%$Q'' := Q \times Q'$, $q''_0 := (q_0,q_0')$, $F'' := F \times F'$, and
%$$
%\Delta'' := \{
%    ((q,q'),\varphi \wedge \varphi',\psi,\chi,(p,p')) \mid
%    (q,\varphi,\psi,p) \in \Delta \text{ and }
%    (q',\varphi',\top,\chi,p') \in \Delta'\},
%$$
%which exploits the assumption that $\Aut_{v,d}$ has no control registers. The construction of $\Aut''$ guarantees that $L(\Aut'') = L(\Aut)$ and $\Aut''(w)=d(w,v)$ for each $w \in L(\Aut'')$.
%%Since $d(v,w) < \delta$ iff $d(v,w) - b < \delta$, we need to subtract $b$ precisely once from the accumulator in an accepting run. This can be done by modifying the transitions of $\Aut''$ leading to a final state (recall that a final state is visited at most once by definition). 
%The coverability problem of the resulting $k$-RAA $\Aut''$ then corresponds to the desired robustness check.

%!TEX root = main.tex
%
%
%
%
\section{Missing Proofs}
\label{app:missing-proofs}
% Expanded proofs for Theorems~\ref{thm:approx-eq} (Soundness) and~\ref{thm:robust-matching-positives} (Two-sided robustness), to be pasted in Sec.~6.  :contentReference[oaicite:0]{index=0}

\subsection{Proof of Theorem~7}
	Let $w_1,\dots,w_n \stackrel{\mathrm{i.i.d.}}{\sim}\mathcal D$ be the samples drawn by Algorithm~2.  
	Define the \emph{disagreement indicators}
	\[
	X_i \coloneqq  \mathbf 1\bigl[\mathcal N(w_i)\neq \mathcal A(w_i)\bigr]\in\{0,1\}
	\quad\text{and}\quad
	\widehat q \coloneqq  \frac1n\sum_{i=1}^n X_i,\quad
	q \coloneqq  \Pr_{w\sim\mathcal D}\bigl[\mathcal N(w)\neq \mathcal A(w)\bigr].
	\]
	Thus $X_1,\ldots,X_n$ are i.i.d.\ Bernoulli$(q)$.
	
	The algorithm \emph{accepts} only if the running count of disagreements never exceeds
	$d_{\max}=\lfloor n(1-p)\rfloor$:
	\[
	\textsf{Accept} \;\subseteq\; \Bigl\{\sum_{i=1}^n X_i \le d_{\max}\Bigr\}
	\;\subseteq\; \Bigl\{\widehat q \le \tfrac{d_{\max}}{n}\Bigr\}
	\;\subseteq\; \{\widehat q \le 1-p\},
	\]
	where the last inclusion uses $d_{\max}/n \le 1-p$ since $d_{\max}=\lfloor n(1-p)\rfloor$.
	
	Fix any $q> 1-p+\gamma$. By Hoeffding's inequality for averages of i.i.d.\ Bernoulli random variables,
	\[
	\Pr\bigl[\widehat q \le 1-p \,\big|\, q>1-p+\gamma\bigr]
	\;\le\; \exp\bigl(-2n\,(q-(1-p))^2\bigr)
	\;\le\; \exp\bigl(-2n\gamma^2\bigr).
	\]
	With $n=\bigl\lceil \ln(1/\varepsilon)/(2\gamma^2)\bigr\rceil$, we obtain
	$\Pr\bigl[\widehat q \le 1-p \,\big|\, q>1-p+\gamma\bigr]\le \varepsilon$.
	
	The algorithm has extra early-reject branches, i.e., when $|cexs|>d_{\max}$, or a robustness counterexample is found.
	These can only decrease the acceptance probability for any fixed $q$. Hence,
	\[
	\Pr[\textsf{Accept} \wedge q>1-p+\gamma]
	\;\le\; \Pr[\widehat q \le 1-p \wedge q>1-p+\gamma]
	\;\le\; \varepsilon.
	\]
	Equivalently, with probability at least $1-\varepsilon$ over the internal sampling of the algorithm, either the procedure does not accept or, if it accepts, then $q\le 1-p+\gamma$.
	Since $q\le 1-p+\gamma$ is equivalent to
	$\Pr_{w\sim\mathcal D}[\mathcal N(w)=\mathcal A(w)]\ge p-\gamma$, the theorem follows.

\subsection{Proof of Theorem~8}
	Recall the definition
	\[
	\lambda \;\coloneqq \; \Pr_{w\sim \mathcal D}\bigl[\mathcal A(w)=\mathcal N(w)\ \wedge\ \neg \mathrm{Stab}_\delta(\mathcal A,w)\bigr],
	\]
	where $\mathrm{Stab}_\delta(\mathcal A,w)$ abbreviates ``no label flip of $\mathcal A$ within $B_\delta(w)$''.
	For $i=1,\dots,n$ (with $w_i \stackrel{\mathrm{i.i.d.}}{\sim}\mathcal D$), introduce the indicators
	\[
	Z_i \;\coloneqq \; \mathbf 1\left[
	\mathcal A(w_i)=\mathcal N(w_i)\ \wedge\ \neg \mathrm{Stab}_\delta(\mathcal A,w_i)
	\right]~\in~\{0,1\}.
	\]
	By i.i.d.\ sampling, $Z_1,\dots,Z_n$ are i.i.d.\ Bernoulli with mean $\mathbb E[Z_i]=\lambda$.
	
	Algorithm~2 invokes the robustness checker \emph{whenever} $\mathcal A(w_i)=\mathcal N(w_i)$.
	If it finds a label-flipping neighbor in $B_\delta(w_i)$ (i.e., $Z_i=1$), it returns a witness and does not accept.
	Hence the acceptance event implies $Z_i=0$ for \emph{every} $i$:
	\[
	\textsf{Accept} \;\subseteq\; \{Z_1=0,\ldots,Z_n=0\}.
	\]
	Therefore,
	\[
	\Pr[\textsf{Accept}] \;\le\; \Pr[Z_1=0,\ldots,Z_n=0]
	\;=\; \prod_{i=1}^n \Pr[Z_i=0]
	\;=\; (1-\lambda)^n .
	\]

	Fix any confidence parameter $\eta\in(0,1)$ and define the threshold
	$\lambda_\star \coloneqq  1-\eta^{1/n}$.
	If $\lambda>\lambda_\star$, then $(1-\lambda)^n < (1-\lambda_\star)^n=\eta$, hence
	\[
	\Pr[\textsf{Accept}] \;\le\; (1-\lambda)^n \;<\; \eta
	\quad\Longrightarrow\quad
	\Pr\bigl[\textsf{Accept} \wedge (\lambda>\lambda_\star)\bigr]\;<\;\eta.
	\]
	Equivalently, with probability at least $1-\eta$ over the algorithm's sampling, we \emph{do not} encounter the undesirable situation that algorithm accepts but $\lambda>\lambda_\star$.
	Thus, conditioned on observing acceptance, we may assert with confidence at least $1-\eta$ that
	$\lambda\le \lambda_\star = 1-\eta^{1/n}$.

\section{Exponential Lower Bound on Coverability Witness}
\label{app:exp-len}

In this section, we show that the witness for coverability of $k$-RAA can have length exponential in $k$.
Let $k$ be a positive integer.
For an integer $0\leq n \leq 2^k-1$,
let $\bin_k(n)$ be the sequence representing the binary representation of $n$ in $k$ bits.
For example, $\bin_4(5) = 0,1,0,1$.

Let $L_k\subseteq \Q^*\times\Q^*$ that consists of all pairs $(v,w)$ where $v\in \Q^*$
and $w$ is the following sequence.
\begin{align*}
w \ \coloneqq  \ & \bin_k(0),2,\bin_k(1),2,\ldots,2,\bin_k(2^k-1)
\end{align*}
We will construct a $k$-PCA with $O(k^2)$ states that outputs $1$ 
when the input is $(v,w)$ for every $v\in \Q^*$.
On all other pairs, it does not accept, and by definition, outputs $\infty$.

Intuitively, the RAA works as follows.
All the $k$ registers are initialised with $0$.
On input pair $(v,w)$, it ignores the first sequence $v$.
It reads the second sequence of $w$ by block of $k+1$ letters.
The first $k$ letters are required to be $0,\ldots,0$.
For the remaining of the input letters, after reading letter $2$,
it requires that the next $k$ letters to be the ``successor'' of the content of the registers.
It updates the content of the registers as it reads the $k$ letters.

We will now give a more precise description of the RAA.
Since the first sequence in the input pair is unconditional,
we will only describe the behaviour of the second head in the transitions.
Let $r_1,\ldots,r_k$ be the registers.
It has $k+1$ special states $p_0,p_1,\ldots,p_k$.
Their intended meaning is that when the PCA is in state $p_{\ell}$,
it does the following.
\begin{itemize}
\item
It checks that the next $k$ letters equal the content of the registers $r_1,\ldots,r_k$
and that $\ell$ is the index such that $r_{\ell}=0$
and $r_{\ell+1}=\cdots=r_k =1$.
\item
When reading these $k$ letters, it updates the contents of $r_1,\ldots,r_k$ to its successor
by assigning $r_{\ell}$ with $1$ and all $r_{\ell+1},\ldots,r_k$ with $0$.
\end{itemize}
Performing these two steps, we require additional $O(k)$ states for each $p_{\ell}$.
The initial contents of $r_1,\ldots,r_k$ are $0$.
The initial state is $p_k$ and the final state is $p_0$.
Note that when it reaches $p_0$, the content of the registers are all $1$.

\section{SMT Encoding of DRA}\label{app:smt-encoding}
We now describe how we encode the DRA synthesis problem
in SMT. Let $n$, $k$, $c$ be the given number of states, registers, and
constants, respectively. We introduce the following variables with the
corresponding purpose
\begin{itemize}
\item Boolean variable $d_{p,\varphi, \psi ,q}$, which are true if and only if
  $(p, \varphi, \psi, q) \in \Delta$
\item Boolean variables $f_q$, which are true if and only if $q \in F$
\end{itemize}

These variables define the essential components of the automaton: its transition
structure and accepting states. The synthesis encoding follows the principles of
DFA synthesis~\cite{dfaToSat}, ensuring that each positive sample has a valid
run in the automaton and each negative sample is rejected. We extend this by
requiring a run to exist if and only if the guards $\varphi$ are satisfied with
respect to the current register assignments, which are stored as rational
variables. 

Each guard~\(\varphi\) is encoded as an interval constraint
\[
\varphi = [\textit{low}, \textit{high}],
\quad\text{meaning}\quad \textit{low} \le \textit{curr} \le \textit{high}.
\]
We introduce two enumerated types, \(\mathcal{E}_l\) and \(\mathcal{E}_h\), whose elements denote either
(i) an interpretable constant,  
(ii) a register reference, or  
(iii) a special symbol \texttt{NL}/\texttt{NH} for unbounded intervals.  
This encoding compactly represents all guards over registers and constants.

Assignments~\(\psi\) are encoded similarly: each register can be updated from  
(i) itself, (ii) another register, (iii) an (un)interpreted constant, or (iv) the current input~(\textit{curr}).  
This symbolic form allows the SMT solver to explore all valid update functions.

 The following formulas ensure that those variables encode a DRA:
\begin{align}
  & \bigwedge\limits_{p \in Q}  \bigwedge\limits_{\varphi \in \Phi,\psi \in \Psi}  \bigvee\limits_{q \in Q} d_{p, \varphi, \psi, q}\label{form:1}\\
  & \bigwedge\limits_{p,q \in Q}  \bigwedge\limits_{\varphi \neq \varphi' \in \Phi}  \bigwedge\limits_{\psi \neq \psi' \in \Psi} \lnot d_{p, \varphi, \psi, q} \lor \lnot d_{p, \varphi', \psi', q}\label{form:10}\\
  % \begin{align} \label{form:2}
  % \bigwedge\limits_{p \in Q} \bigwedge\limits_{\varphi \neq \varphi' \in \Phi} \bigwedge\limits_{\psi, \psi' \in \Psi} \bigwedge \limits_{q \in
      % Q} d_{p, \varphi, \psi, q} \rightarrow \neg d_{p, \varphi',\psi', q}
      % \end{align}
  & \bigwedge\limits_{p,q,q' \in Q}  \bigwedge\limits_{\varphi \neq \varphi' \in \Phi} \bigwedge\limits_{\psi,\psi' \in \Psi} \exists curr, \bar{r} \; (\varphi(\bar{r}, curr) \land \varphi'(\bar{r}, curr)) \rightarrow (\lnot d_{p, \varphi, \psi, q} \lor \lnot d_{p, \varphi', \psi', q'})\label{form:2}
\end{align}
Formulas~\ref{form:1} and~\ref{form:10} ensure that each state has at most one
outgoing transition to another state. Formula \ref{form:2} ensures that the
automaton is deterministic: for two guards $\varphi$ and $\varphi'$ that accept a letter
$l \in \Q$, only one of them can be used for an outgoing transition for $p \in Q$.

% \commentoliver{define Pref, somewhere in prelimns or just use prefixes?}
Next, we need to ensure that the automaton is consistent with a given sample set
$S = (S_+, S_-)$.  To this end, we define Boolean variables $x_{u, q}$ and Real variables
$r_{u,i}$ for each prefix $u \in \text{Pref}(S)$, each state $q\in Q$ and
$i \in \{1,...,k\}$ with the following semantics:
\begin{itemize}
\item $x_{u,q}$ is true if and only if the $k$-DRA
  reaches state $q$ after reading the input $u \in \Q^*$.
\item $r_{u,i}$ encodes the value of register $i$ after reading input
  $u$. \footnote{We use the shorthand notation $\bar{r}_u := \{r_{u,1},...,r_{u,k}\}$.}
\end{itemize}
To enforce that the automaton is consistent with the sample set, we define the following formulas, where
$\varepsilon$ denotes the empty sequence:
\begin{align}
  & (x_{\varepsilon, q_0} ) \land \bigwedge\limits_{i \in \{1,...,k\}} r_{\varepsilon, i} = 0\label{form:4}\\
  & \bigwedge\limits_{u \in Pref(S_+)} \bigvee\limits_{q\in Q} x_{u,q}\label{form:5}\\
  & \bigwedge\limits_{u \in Pref(S)} \bigwedge\limits_{q \neq q' \in Q} \neg x_{u,q} \lor \neg x_{u,q'}\label{form:6}
\end{align}
Formula \ref{form:4} fixes $q_0$ to be the initial state of the automaton and
$r_{\varepsilon, i }$ sets the initial value of the registers to $0$.  Formulas
\ref{form:5} and $\ref{form:6}$ encode that each prefix $u \in S_+$ can only reach
one unique state $q$.  Note that this only enforces positive samples to have a
run in the automaton, meaning that the learned automaton may have no run and
consequently rejects some negative samples.  This approach allows us to avoid
explicitly synthesizing a rejection state, resulting in smaller automata that
are consistent with a given sample set.

Now we can define a run on a sequence using the following formulas:
\begin{align}
  & \bigwedge\limits_{ua \in Pref(S)} \bigwedge\limits_{\varphi \in \Phi} \bigwedge\limits_{\psi \in \Psi} \bigwedge\limits_{p,q\in Q} x_{u,p} \land d_{p, \varphi, \psi, q}  \land \varphi(\bar{r}_u, a) \rightarrow x_{ua, q}  \land \psi(\bar{r}_{ua}, a)\label{form:7}\\
  & \bigwedge\limits_{ua \in Pref(S)} \bigwedge\limits_{\varphi \in \Phi} \bigwedge\limits_{\psi \in \Psi} \bigwedge\limits_{p,q\in Q} x_{u,p} \land  d_{p, \varphi, \psi, q} \land \neg \varphi(\bar{r}_u, a) \rightarrow \neg x_{ua, q}\label{form:8}
\end{align}
If the transition $(p, \varphi, \psi, q)$ exists, the sequence
$u$ can reach state $p$ in the automaton, and if $\varphi(\bar{r}_u, a)$ is satisfied,
then $ua$ can reach the state $q$. Note that $\bar{r}_u$ encodes the register
assignment after reading $u$ and $\psi(\bar{r}_{ua}, a)$ updates the register
assignment of $\bar{r}_{ua}$.  Conversely, if the transition exists but the
guard is not satisfied, the sequence $ua$ cannot reach the state $q$.

Finally, Formula \ref{form:9} enforces that the $k$-DRA for any sequence $w$
that has a run, $w$ is accepted if and only if $w \in S_+$ and rejected if and
only if $w \in S_-$.
\begin{align} \label{form:9} \Bigg(\bigwedge\limits_{u \in S_+} \bigwedge\limits_{q \in Q} x_{u,q} \rightarrow
  f_q \Bigg) \land \Bigg(\bigwedge\limits_{u \in S_-} \bigwedge\limits_{q \in Q} x_{u,q} \rightarrow \neg f_q
  \Bigg)
\end{align}
Having defined the auxiliary formulas, we can now use these to
construct a DRA as in Definition \prettyref{def:dra}.
\begin{definition}
  \label{def:dra}
  Let $\varphi_{n, k, c}$ be the conjunction of Formulas \ref{form:1} -
  \ref{form:9}. Given a model $\mathcal{M} \models \varphi_{n, k, c}$ we can use the information
  stored in the model to extract a $k$-DRA $\Aut_{\mathcal{M}} = (Q, \Delta, q_I, F)$ with:
  \begin{enumerate}
  \item $Q = \{q_0,\dots, q_{n-1}\}$ and $q_I = q_0$
  \item $(p,\varphi,\psi,q) \in \Delta$ if and only if $\mathcal{M}(d_{p, \varphi, \psi, q}) = 1$
  \item $q \in F$ if and only if $\mathcal{M}(f_q) = 1$
  \end{enumerate}
\end{definition}
The construction described in Definition \prettyref{def:dra} as well as the required
formulas can be effectively computed. Furthermore,
any modern SMT solver can be used to obtain compute a model $\mathcal{M}$.
Finally, the correctness of this passive learning algorithm is proved in the
following theorem.
\begin{theorem} Let $S = (S_+, S_-)$ be a sample set. Let $n \geq 1$ be the
  number of states, $c$ the number of constants, $k$ the number of registers,
  and $\varphi_{n, k, c}$ the formula encoding the $k$-DRA synthesis problem. If
  $\varphi_{n, k, c}$ is satisfiable and $\mathcal{M} \models \varphi_{n, k, c}$ is a model of the formula,
  then the automaton $\Aut_{\mathcal{M}}$ constructed from $\mathcal{M}$ is a
  $k$-DRA with $n$ states that is consistent with the sample set $S$.
\end{theorem}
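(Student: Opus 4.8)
The plan is to verify the encoding in two parts: (i) that $\Aut_{\mathcal{M}}$ is a well-formed $k$-DRA with $n$ states, and (ii) that the auxiliary variables $x_{u,q}$ and $r_{u,i}$ faithfully describe the run of $\Aut_{\mathcal{M}}$ on every prefix, from which consistency with $S$ follows immediately via Formula~\ref{form:9}.

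For part (i), by Definition~\prettyref{def:dra} the state set is $\{q_0,\dots,q_{n-1}\}$ and the registers are fixed to $\bar r = r_1,\dots,r_k$, so $\Aut_{\mathcal{M}}$ has exactly $n$ states and $k$ registers by construction; its transitions are read off the true $d$-variables and its accepting states off the true $f$-variables. The only nontrivial structural requirement is determinism, which I would derive from Formula~\ref{form:2}: whenever two distinct guards $\varphi,\varphi'$ are simultaneously satisfiable by some $(\bar r,\curr)$, the model cannot retain both outgoing transitions $(p,\varphi,\psi,q)$ and $(p,\varphi',\psi',q')$ from the same state $p$. Hence for any state $p$, register valuation $\mu$, and letter $a$, at most one outgoing transition has its guard satisfied, which is exactly the $k$-DRA determinism condition; Formulas~\ref{form:1} and~\ref{form:10} additionally ensure that the chosen transitions form a functionally well-defined structure.

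For part (ii), I would prove by induction on $|u|$, ranging over $u\in\pref(S)$, the invariant that $\mathcal{M}(x_{u,q})=1$ iff the (possibly partial) run of $\Aut_{\mathcal{M}}$ on $u$ reaches $q$, and that in this case $\mathcal{M}(r_{u,i})$ equals the content of $r_i$ after reading $u$. The base case $u=\varepsilon$ is handled by Formula~\ref{form:4}, which forces $x_{\varepsilon,q_0}$ and zero-valued registers, matching the initial configuration. For the step $u=u'a$, the induction hypothesis pins down the unique state $p$ reached on $u'$ together with its register values, and by determinism there is at most one transition $(p,\varphi,\psi,q)$ whose guard $\varphi(\bar r_{u'},a)$ is satisfied. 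Formula~\ref{form:7} then forces $x_{u'a,q}$ and the register update $\psi$, Formula~\ref{form:8} forces $x_{u'a,q''}$ false for every target whose guard fails, and Formula~\ref{form:6} forbids reaching two states at once; together these fix the variables at $u'a$ to coincide with the genuine run step.

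Finally, consistency follows directly. For $u\in S_+$, Formula~\ref{form:5} guarantees that the run reaches some state $q$, and Formula~\ref{form:9} forces $f_q$, so $q$ is accepting and $u$ is accepted. For $u\in S_-$, either the run dies before reaching a state---and $u$ is rejected since no accepting run exists---or it reaches a unique state $q$ on which Formula~\ref{form:9} forces $\neg f_q$, again yielding rejection. I expect the inductive invariant of part (ii) to be the main obstacle: because the encoding deliberately omits an explicit sink state (Formula~\ref{form:5} demands runs only for positive prefixes), the argument must treat the partial-run case and show that the one-directional implications of Formulas~\ref{form:7}--\ref{form:8}, together with determinism, still force the $x$-variables to track exactly the automaton's reachable states and admit no spurious ones.
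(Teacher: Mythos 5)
Your proposal is correct and follows the same overall skeleton as the paper's proof: an induction over prefixes relating the $x_{u,q}$ variables to runs of $\Aut_{\mathcal{M}}$, followed by an appeal to Formula~\ref{form:9} to conclude acceptance of $S_+$ and rejection of $S_-$. The substantive difference is the strength of the inductive invariant. The paper proves only the forward implication (if the run on $w$ reaches $q_i$ then $\mathcal{M}(x_{w,q_i})=1$) and then simply asserts that Formulas~\ref{form:5} and~\ref{form:6} ``enforce that sequences in $S_+$ have a unique run'' --- which is really the converse direction, needed to rule out a model that satisfies Formula~\ref{form:5} with a spurious $x_{w,q}$ while the actual automaton run dies on $w\in S_+$. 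Your biconditional invariant (the $x$-variables track \emph{exactly} the reachable state and register contents, with no spurious assignments) is precisely what closes that gap, and you correctly identify the partial-run case as the delicate point; note that Formula~\ref{form:8} only constrains $x_{ua,q}$ for states $q$ that are targets of some transition out of $p$, so your argument ultimately has to lean on Formula~\ref{form:1} making every guard--assignment pair realised from every state (i.e.\ on completeness of the guard lattice) to exclude spurious targets. You also explicitly verify determinism via Formula~\ref{form:2} and well-formedness of $\Aut_{\mathcal{M}}$, which the paper's proof omits entirely. In short: same route, but your version is the more complete argument; the paper's is shorter at the cost of leaving the converse direction and determinism implicit.
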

\begin{proof}
%  \todo{more structure in the proof}
  First, we show that $\Aut_{\mathcal{M}}: q_0 \stackrel{w}{\rightarrow} q_i$ implies
  $\mathcal{M}(x_{w,q_i}) = 1$ by induction on the length of the input $w \in \Q^*$.
  \begin{itemize}
  \item \emph{Base case}: Let $w = \varepsilon$. Formula \ref{form:4} enforces
    $x_{\varepsilon, q_0} = 1$, so the claim holds.
    % \item \textbf{Induction Hypothesis}: Assume that
    % $\Aut_{\mathcal{M}}: q_0 \stackrel{v}{\rightarrow} q_i$ implies
    % $\mathcal{M}(x_{v,q_i}) = 1$ for all sequences $|v| \geq 0$.
  \item \emph{Induction step}: Let $w = v \cdot a$ and
    $\Aut_{\mathcal{M}}: q_0 \stackrel{v}{\rightarrow} p \stackrel{a}{\rightarrow} q$. By the induction
    hypothesis, we know that
    $\Aut_{\mathcal{M}}: q_0 \stackrel{v}{\rightarrow} p$ implies
    $\mathcal{M}(x_{v,p}) = 1$. Moreover, $\Aut_{\mathcal{M}}$ must contain the transition
    $(p, \varphi, \psi, q)$ because it was used in the run on a sequence $a$, where
    $\varphi(\bar{r}_v, a)$ and $\psi(\bar{r}_v, a)$ hold. Therefore,
    $\mathcal{M}(d_{p, \varphi, \psi, q}) = 1$ by Definition \ref{def:dra}. Then Formula
    \ref{form:7} enforces $\mathcal{M}(x_{w,q}) = 1$.
  \end{itemize}
  Formula \ref{form:5} and \ref{form:6} enforce that sequences in $S_+$ have a
  unique run in $\Aut_{\mathcal{M}}$.  Now, let $w \in S_+$ and $q \in Q$ be the state reached
  by $\Aut_{\mathcal{M}}$ after reading $w$, we conclude that
  $\mathcal{M}(x_{w,q}) = 1$ by the induction above. With Formula \ref{form:9}, we can say
  $\mathcal{M}(f_q) = 1$. Thus, $q \in F$ and sequence $w \in L(\Aut_{\mathcal{M}})$.

  Conversely, for every sequence $w \in S_-$, that reaches state $q$,
  $x_{w,q} = 1$ holds. Then $q$ is not a final state according to Formula
  \ref{form:9}, thus $w \notin L(\Aut_{\mathcal{M}})$.
  % On the other hand, if $w$ has no run in $\Aut_{\mathcal{M}}$, then there exists a
      % state $q'$ such that $q_0 \stackrel{u}{\rightarrow} q'$ and there is no outgoing
      % transition from $q'$ such that $\varphi(\bar{r}_u, a)$ holds. Then Formula
      % $\ref{form:7}$ enforces that $\mathcal{M}(x_{ua,q} = 0)$ for all $q\in Q$.
\end{proof}

\end{document}